\definecolor{Mahogany}{rgb}{0.75, 0.25, 0.0}
\theoremstyle{definition}
\newtheorem{thm}{Theorem}
\newtheorem{lem}{Lemma}
\newcommand{\R}{\mathbb{R}}
\newcommand{\C}{\mathbb{C}}
\newcommand{\abs}[1]{\left|#1\right|}
\newcommand{\bbmamba}{\text{B}_2\text{S}_6}
\newcommand{\fixedline}[1]{\rule{0pt}{1.3em}#1}
\newcommand{\newfixedline}[1]{\rule{0pt}{1.54em}#1}
\newcommand{\onefixedline}[1]{\rule{0pt}{1.4em}#1}
\newcommand{\twofixedline}[1]{\rule{0pt}{1.62em}#1}
\newcommand{\threefixedline}[1]{\rule{0pt}{1.56em}#1}
\title{Block-Biased Mamba for Long-Range Sequence Processing}
\author{
	Annan Yu,$^{1}$\thanks{Corresponding author: \url{ay262@cornell.edu}.} \hspace{+0.3cm}  N. Benjamin Erichson$^{2,3}$ \vspace{+0.5cm} \\ 
	
	$^1$ Center for Applied Mathematics, Cornell University \\
	$^2$ Lawrence Berkeley National Laboratory \\
	$^3$ International Computer Science Institute
}
\date{}
\begin{document}

\maketitle

\begin{abstract}
Mamba extends earlier state space models (SSMs) by introducing input-dependent dynamics, and has demonstrated strong empirical performance across a range of domains, including language modeling, computer vision, and foundation models. However, a surprising weakness remains: despite being built on architectures designed for long-range dependencies, Mamba performs poorly on long-range sequential tasks. Understanding and addressing this gap is important for improving Mamba's universality and versatility. In this work, we analyze Mamba’s limitations through three perspectives: expressiveness, inductive bias, and training stability. Our theoretical results show how Mamba falls short in each of these aspects compared to earlier SSMs such as S4D. To address these issues, we propose $\bbmamba$, a simple extension of Mamba's S6 unit that combines block-wise selective dynamics with a channel-specific bias. We prove that these changes equip the model with a better-suited inductive bias and improve its expressiveness and stability. Empirically, $\bbmamba$ outperforms S4 and S4D on Long-Range Arena (LRA) tasks while maintaining Mamba's performance on language modeling benchmarks.
\end{abstract}

\section{Introduction} 

Mamba~\cite{gu2023mamba} has recently emerged as an exciting alternative to Transformers, demonstrating strong empirical performance not only in language tasks~\cite{waleffe2024empirical,lieber2024jamba}, but also in a variety of domains, including vision~\cite{zhu2024vision,zhang2024survey}, audio~\cite{li2024spmamba}, time series forecasting~\cite{li2024spmamba}, and operator learning~\cite{hu2024state}. Based on the state-space model (SSM) framework of S4 and S4D, Mamba replaces attention with a selective state space mechanism. Unlike attention, which computes pairwise interactions across the sequence, this approach uses data-dependent weights in a recurrent structure, aligning with the temporal nature of sequential data.

Despite its success as a general-purpose sequence modeling tool~\cite{ma2024mamba}, Mamba consistently underperforms on benchmarks such as the Long-Range Arena (LRA)~\cite{alonso2024state}. This limitation is surprising, given that Mamba is derived from architectures specifically designed for modeling long-range dependencies. This behavior raises an important question:

\begin{displayquote}
\emph{Why does Mamba struggle with long-range sequence modeling?}
\end{displayquote}

To address this question, we conduct a theoretical analysis of the S6 state-space unit that underlies the Mamba architecture. Our goal is twofold: first, to explain why Mamba underperforms on long-range sequence tasks, and second, to provide a principled remedy. Through a rigorous mathematical analysis, we identify three key factors that limit Mamba's ability to model long-range dependencies:

\begin{itemize}

\item \textbf{Expressiveness.} Unlike S4D, which allows each internal channel to learn an independent recurrent unit, Mamba shares parameters across all channels. This design reduces the ``effective width'' of the model and, in turn, its capacity. We show that Mamba does not have universal approximation properties and cannot approximate a wide range of sequence transformations (see~\Cref{thm.UAT}).

\item \textbf{Inductive Bias.} Unlike S4D, which uses a position-based bias that helps preserve long-range memory, Mamba adaptively decides what information to remember or forget based on the input. This input-dependent mechanism is too extreme and can cause the model to discard useful long-term information too quickly, leading to poor retention in long-range tasks (see~\Cref{thm.inductivebias}).

\item \textbf{Training Stability.} Unlike S4D, which tends to train reliably on long sequences, Mamba empirically exhibits unstable training behavior on the LRA benchmark tasks. We theoretically show that this instability comes from the input-dependent selection mechanism, which becomes increasingly difficult to optimize as the length of the sequence increases (see~\Cref{thm.stability}).
\end{itemize}

\begin{figure}[!t]
    \centering
    \begin{overpic}[width=1\textwidth]{./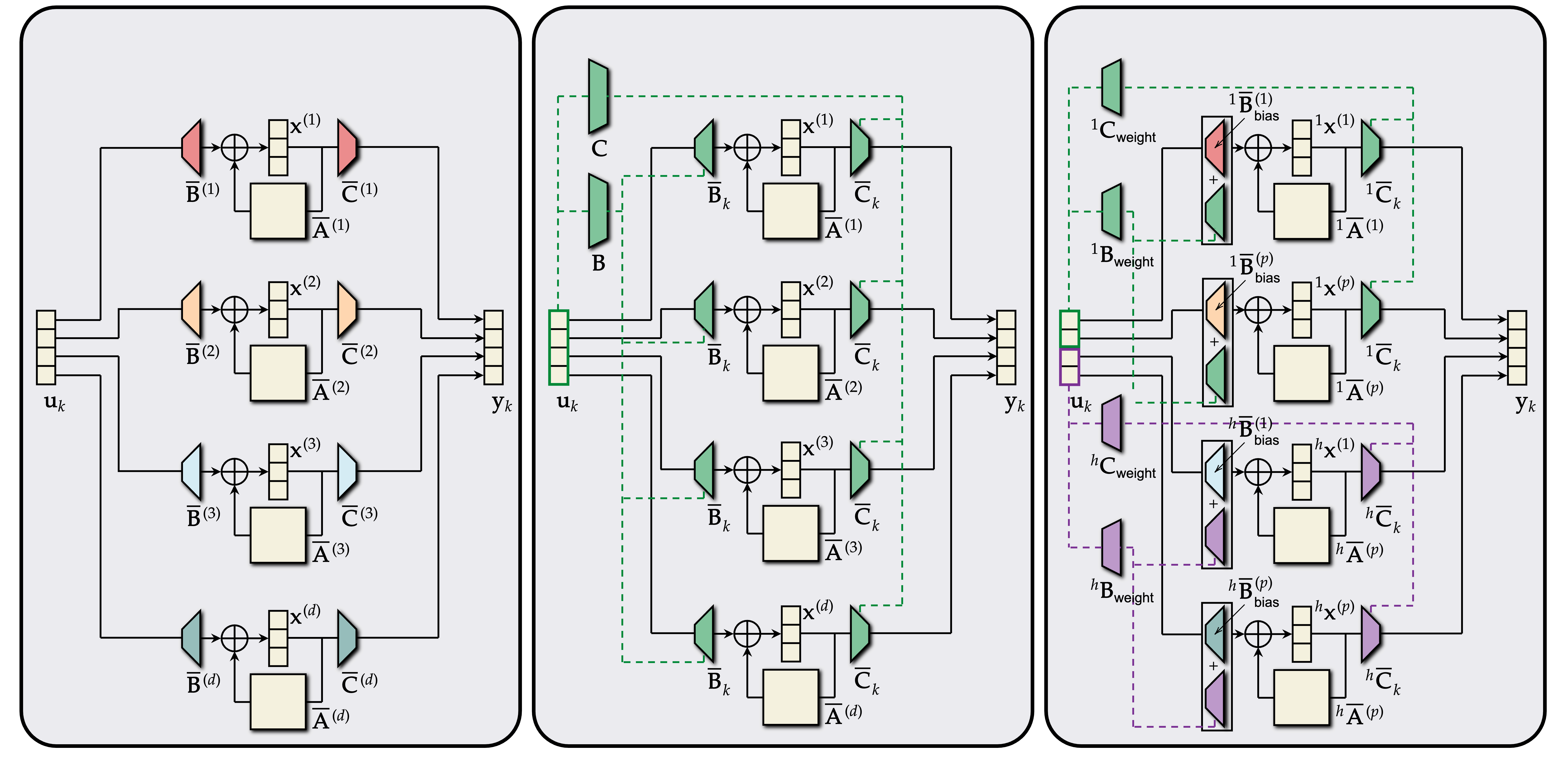}
        \put(15.3,44.5){\textbf{S4D}}
        \put(43,44.5){\textbf{S6 (Mamba)}}
        \put(80,44.5){\textbf{$\textbf{B}_{\textbf{2}}\textbf{S}_{\textbf{6}}$}}
    \end{overpic}
\caption{Comparison of S4D, S6, and $\bbmamba$ units. S4D uses independent linear SSM units for each channel, giving it high capacity (or width) but no input-dependent selectivity. S6 introduces a selective mechanism that modulates its internal dynamics based on the input, but shares parameters across channels, limiting its effective width and expressiveness. Our proposed $\bbmamba$ unit partitions the input into smaller blocks, enabling selective behavior across multiple subspaces (see also~\cite{dao2024transformers}). Additionally, it includes a channel-specific, input-independent bias term that further increases model capacity. These design choices improve the performance on long-range sequence tasks.}

\label{fig:B2S6}
\end{figure}

Motivated by these findings, we propose the \emph{Block-Biased-S6} unit ($\bbmamba$), a simple yet principled extension of the S6 architecture that addresses its limitations in modeling long-range dependencies (see~\Cref{fig:B2S6}). Like S6, $\bbmamba$ retains the selective mechanism that modulates internal dynamics based on the input. However, instead of computing the selection weights from the full input $\mathbf{u}_k$, we divide the model into smaller blocks, with each block operating on a subset of $\mathbf{u}_k$. This multihead structure was already suggested in~\cite{dao2024transformers}, but in addition to that, we introduce an input-independent, channel-specific bias term to the recurrent unit. We prove that these modifications enhance $\bbmamba$'s expressiveness and assign it a more appropriate inductive bias for handling long-range dependencies. We also apply a reduced learning rate to the parameters governing the input-dependent sampling intervals, which improves training stability. Together, these modifications enable $\bbmamba$ to model long-range dependencies more effectively. On the LRA benchmark, $\bbmamba$ resurrects Mamba from failure and even outperforms S4 and S4D. Furthermore, we show that $\bbmamba$ achieves comparable perplexity to Mamba when trained on language tasks, demonstrating its versatility across domains.

\textbf{Contributions.} Our main contributions are as follows:

\begin{enumerate}[leftmargin=*]

\item We analyze the limitations of Mamba in modeling long-range dependencies and provide theoretical results that characterize the failure modes from three key aspects: (i) expressiveness (see~\Cref{thm.UAT}), (ii) inductive bias (see~\Cref{thm.inductivebias}), and (iii) training stability (see~\Cref{thm.stability}).

\item We introduce the $\bbmamba$ unit, a principled extension of the S6 architecture. We show that $\bbmamba$ has universal approximation capabilities (see~\Cref{thm.B2S6UAT}) and is thus more expressive, while also equipped with a more suitable inductive bias for long-range sequence processing (see~\Cref{thm.B2S6inductivebias}).

\item We evaluate $\bbmamba$ on the Long-Range Arena benchmark, where it achieves state-of-the-art performance. In addition, a preliminary examination on the SlimPajama dataset~\cite{cerebras2023slimpajama} shows that $\bbmamba$ matches Mamba's performance on language modeling tasks, demonstrating its versatility.
\end{enumerate}

We refer interested readers to~\Cref{app:relwork} for an extended discussion of related works.

\section{The S4D and S6 recurrent units}

Introduced in~\cite{gu2022efficiently} and~\cite{gu2022parameterization}, the S4 and S4D recurrent units process sequences of $d$-dimensional inputs $\mathbf{u} = (\mathbf{u}_1, \ldots, \mathbf{u}_L)$ to produce corresponding $d$-dimensional outputs $\Gamma_{\text{S4/S4D}}(\mathbf{u}) = \mathbf{y} = (\mathbf{y}_1, \ldots, \mathbf{y}_L)$. Throughout this paper, we use subscripts to index positions in a sequence and superscripts to index channels. For example, $\mathbf{u}_k^{(i)}$ denotes the $i$th entry of the $k$th input vector $\mathbf{u}_k$. Each channel of the S4/S4D model is computed independently with a linear system. For the $i$th channel, the update rule is
\begin{equation}\label{eq.S4D}
    \mathbf{x}^{(i)}_{k} = \overline{\mathbf{A}}^{(i)} \mathbf{x}^{(i)}_{k-1} + \overline{\mathbf{B}}^{(i)} \mathbf{u}_k^{(i)}, \quad \mathbf{y}_k^{(i)} = \overline{\mathbf{C}}^{(i)} \mathbf{x}^{(i)}_k, \qquad 1 \leq i \leq d,
\end{equation}
where $\mathbf{x}^{(i)}_k \in \mathbb{C}^{n \times 1}$ is the hidden state with initial condition $\mathbf{x}^{(i)}_0 = \mathbf{0}$, and $\mathbf{u}_k^{(i)} \in \mathbb{R}$. The output $\mathbf{y}_k^{(i)}$ is technically a complex number but we usually take its real part only. The matrices $\overline{\mathbf{A}}^{(i)} \in \mathbb{C}^{n \times n}$, $\overline{\mathbf{B}}^{(i)} \in \mathbb{C}^{n \times 1}$, and $\overline{\mathbf{C}}^{(i)} = \C^{1 \times n}$ are derived from a continuous-time system. While there are many different discretization rules, we focus on the Zero-Order Hold (ZOH) discretization:
\begin{equation}\label{eq.ZOH}
    \overline{\mathbf{A}}^{(i)} = \exp(\Delta^{(i)} \mathbf{A}), \quad \Delta^{(i)} = \text{exp}(b^{(i)}), \qquad \overline{\mathbf{B}}^{(i)} = \mathbf{A}^{-1} (\overline{\mathbf{A}}^{(i)} - \mathbf{I}) \mathbf{B}^{(i)}, \quad \overline{\mathbf{C}}^{(i)} = \mathbf{C}^{(i)}.
\end{equation}
Here, $\mathbf{A} \in \mathbb{C}^{n \times n}$, $\mathbf{B}^{(i)} \in \mathbb{C}^{n \times 1}$, $\mathbf{C}^{(i)} \in \mathbb{C}^{1 \times n}$, and $b^{(i)} \in \mathbb{R}$ are trainable parameters, specific to each channel $1 \leq i \leq d$.\footnote{Note that we tie the matrix ${\mathbf{A}}$ across all channels, which is consistent with the LRA training setup. This choice can be loosened without affecting the essence of the paper.} S4 and S4D are particularly well-suited for modeling sequences with long-range dependencies and achieve state-of-the-art results on the Long-Range Arena benchmark. This strength is largely due to the reparameterized $\Delta^{(i)}$ values, which, when chosen small, give the model long-term memory by slowing the system dynamics. S4 and S4D differ only in how the matrix $\mathbf{A}$ is represented; we henceforth focus on S4D, which uses a diagonal $\mathbf{A}$ consistent with Mamba.

One limitation of the S4D units is that their dynamics are linear and time-invariant. The S6 units~\cite{gu2023mamba} used in the Mamba models improve upon this by making the dynamics input-dependent. More specifically, given a $d$-dimensional input sequence $\mathbf{u} = (\mathbf{u}_1, \ldots, \mathbf{u}_L)$, an S6 unit computes the $i$th channel of the output $\Gamma_{\text{S6}}(\mathbf{u}) = \mathbf{y} = (\mathbf{y}_1, \ldots, \mathbf{y}_L)$ as follows:
\begin{equation}\label{eq.S6}
    \mathbf{x}^{(i)}_{k} = \overline{\mathbf{A}}^{(i)}_k \mathbf{x}^{(i)}_{k-1} + \overline{\mathbf{B}}^{(i)}_k \mathbf{u}_k^{(i)}, \quad \mathbf{y}_k^{(i)} = \overline{\mathbf{C}}_k \mathbf{x}^{(i)}_k, \qquad 1 \leq i \leq d,
\end{equation}
where $\mathbf{x}^{(i)}_{k} \in \R^n$ and $\mathbf{u}^{(i)}_k, \mathbf{y}^{(i)}_k \in \R$. The matrices $\overline{\mathbf{A}}^{(i)}_k$, $\overline{\mathbf{B}}^{(i)}_k$, and $\overline{\mathbf{C}}_k$ are computed at each step as
\begin{equation}\label{eq.ZOHmamba}
    \overline{\mathbf{A}}^{(i)}_k \!\!=\! \exp(\Delta_k^{(i)} \!\mathbf{A}), \quad 
    \Delta_k^{(i)} \!=\! \text{softplus}(\mathbf{w}^\top \!\mathbf{u}_k \!+ b^{(i)}), \quad 
    \overline{\mathbf{B}}^{(i)}_k \!\!=\! \mathbf{A}^{-1} (\overline{\mathbf{A}}^{(i)}_k \!\!- \mathbf{I}) \mathbf{B} \mathbf{u}_k, \quad 
    \overline{\mathbf{C}}_k \!=\! \mathbf{u}_k^\top \mathbf{C}.
\end{equation}
The trainable parameters of the model are $\mathbf{A} \in \mathbb{R}^{n \times n}$, $\mathbf{B} \in \mathbb{R}^{n \times d}$, $\mathbf{C} \in \mathbb{R}^{d \times n}$, $\mathbf{w} \in \mathbb{R}^{d}$, and $b^{(i)} \in \mathbb{R}$ for all $1 \leq i \leq d$.\footnote{In many implementations of Mamba, the selection of $\Delta^{(i)}_k$ is more complex, and $\overline{\mathbf{B}}_k$ may be computed using a forward Euler method instead. We follow the formulation in Algorithm~2 of the original Mamba paper~\cite{gu2023mamba}.} The key innovation in the S6 architecture lies in the input-dependent dynamics. Both $\overline{\mathbf{B}}^{(i)}_k$ and $\overline{\mathbf{C}}_k$ change with each input step, introducing nonlinearity into the system. In addition, the sampling interval $\Delta^{(i)}$ is also input-dependent, allowing the model to vary the rate at which it memorizes or forgets information. Comparing~\cref{eq.ZOHmamba} to~\cref{eq.ZOH}, we note that $\mathbf{B} \mathbf{u}_k$ and $\mathbf{u}_k^\top \mathbf{C}$ in an S6 are shared across all channels, while $\mathbf{B}^{(i)}$ and $\mathbf{C}^{(i)}$ in an S4D are channel-specific. In the next section, show how this distinction limits the ``effective width'' and expressiveness of an S6 unit.

\section{A single-layer Mamba is not a universal approximator}\label{sec:UAT} 

Multilayer perceptrons (MLPs) marked the early success of deep learning. A key characteristic of MLPs is that they are defined by both their width and depth. Even when learning a simple univariate function $G: [0,1] \rightarrow \mathbb{R}$, you typically use an MLP with a width much greater than $1$. This concept extends naturally to sequence models. For most tasks in the Long-Range Arena (LRA), the raw inputs are low-dimensional. However, to effectively model these tasks, it is common practice to embed each input vector into a $d$-dimensional space, where $d$ is often set to $128$ or higher. Given a $d$-dimensional input sequence $(\mathbf{u}_1, \ldots, \mathbf{u}_L)$, we refer to each dimension $(\mathbf{u}^{(i)}_1, \ldots, \mathbf{u}^{(i)}_L)$ as a channel. In an S4D model, each channel is processed independently by a distinct single-input single-output (SISO) linear time-invariant (LTI) system (see~\cref{eq.S4D}), followed by a mixing layer that aggregates the outputs across all $d$ channels. The S6 model also applies an LTI system per channel but shares lots of weights across channels. Intuitively, this weight sharing reduces the {effective width} of the network, limiting its expressiveness. In fact, a simple ablation confirms this intuition: tying $\overline{\mathbf{B}}^{(i)}$ and $\overline{\mathbf{C}}^{(i)}$ across all channels in an S4D model causes the accuracy on the grayscale sCIFAR-10 task to drop from about~$88\%$ to below~$80\%$.

To formally prove that the width issue in S6 limits its expressiveness, we study the universal approximation properties of the model. A universal approximation theorem (UAT) characterizes the expressive power of neural networks by showing that, under certain conditions, they can approximate any target function to arbitrary accuracy, provided they are sufficiently wide or deep~\cite{pinkus1999approximation,kidger}. While UATs do not directly address training dynamics, they serve as a useful sanity check for a model's theoretical capacity. For deep S4D models, some universal approximation results have been established in~\cite{shida2024state}. In this section, we prove a new UAT of S4D and a new non-UAT of S6 that highlight how the shared state matrices in S6 significantly constrain its expressiveness. We investigate the task of learning a univariate sequential task defined by a ground-truth function $G$, using a single-layer model that takes in an input $(u_1, \ldots, u_L)$, linearly embeds the input into a $d$-dimensional space, passes it through an S4D or S6 unit, and then decodes the output. More precisely, we consider the class of models (see~\Cref{fig:architecture}) defined by
\begin{equation}\label{eq.onelayer}
    \tilde{G}(\mathbf{u}) = \tilde{G}((u_1, \ldots, u_L)) = \mathbf{N} \sigma(\Gamma((\mathbf{M} u_1, \ldots, \mathbf{M} u_L))_L + \boldsymbol{\theta}),
\end{equation}
where $\mathbf{M} \in \R^{1 \times d}$ is an encoder, $\mathbf{N} \in \R^{d \times 1}$ is a decoder, $\boldsymbol{\theta} \in \R^{d \times 1}$ is a bias term, and $\sigma: \R \rightarrow \R$ is a nonlinear activation function applied entrywise to the last output of an S4D or S6 unit $\Gamma$.

\begin{figure}[!htb]
    \centering
    \includegraphics[width=0.75\linewidth]{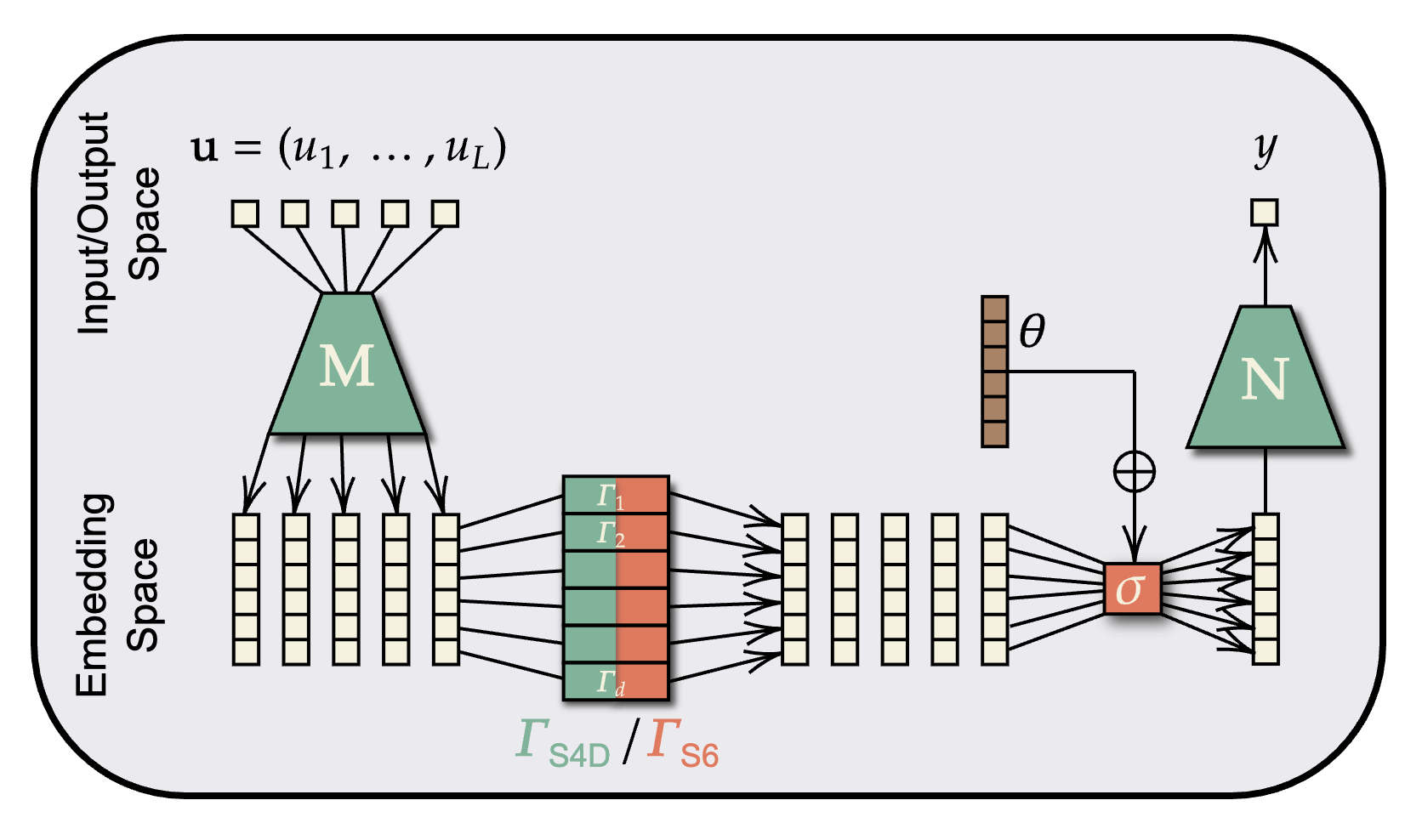}
    \caption{The architecture of the neural network~\cref{eq.onelayer}. In this picture, a horizontal operator is applied channel-wise to every sequence in a channel, and a vertical operation is applied element-wise to every position in a sequence. A green color indicates a linear operator while an orange color indicates a nonlinear one.}
    \label{fig:architecture}
\end{figure}

There is only one remaining issue: in the definition of S6, we have $\overline{\mathbf{C}}_k = {u}_k^\top \mathbf{C}$. That means $\Gamma_{\text{S6}}((\mathbf{M} u_1, \ldots, \mathbf{M} u_L))_L$ is zero as long as $u_L = 0$, which makes $\tilde{G}$ with $\Gamma_{\text{S6}}$ clearly not universal approximators on the domain $[0,1]^L$. To avoid ``cheating'' in this way, we assume that $u_L = 1$ and only focus on approximating functions on $[0,1]^{L-1} \times \{1\}$, which makes our non-UAT stronger.

\begin{thm}\label{thm.UAT}
    The single-layer S4D models are universal approximators of continuous functions, but the single-layer S6 models are not. More precisely, fixing a constant for $\Delta^{(i)}$ for all $1 \leq i \leq d$ in~\cref{eq.ZOH} and $\Delta_k^{(i)}$ for all $1 \leq i \leq d$ and $1 \leq k \leq L$ in~\cref{eq.ZOHmamba}, the following two statements hold:
    \begin{enumerate}[leftmargin=*]
        \item Let $\sigma: \R \rightarrow \R$ be any Lipschitz continuous, non-polynomial activation function. Given any continuous function $G: [0,1]^L \rightarrow \R$ and any $\epsilon > 0$. There exist some $d \geq 1$, $n \geq 1$, and a choice of parameters $\mathbf{M}, \mathbf{N}, \boldsymbol{\theta}, \mathbf{A}, \mathbf{B}^{(i)}$, and $\mathbf{C}^{(i)}$, where $1 \leq i \leq d$, such that the map $\tilde{G}$ in~\cref{eq.onelayer} with $\Gamma = \Gamma_{\text{S4D}}$ in~\cref{eq.S4D} satisfies that 
        \[
        |\tilde{G}(\mathbf{u}) - G(\mathbf{u})| \leq \epsilon \qquad \text{for any } \mathbf{u} \in [0,1]^L.
        \]

        \item Let $\sigma: \R \rightarrow \R$ be any function and let $L \geq 3$ be given. There exists a continuous function $G: [0,1]^{L-1} \times \{1\} \rightarrow \R$ and some $\epsilon > 0$ such that for any $d \geq 1$, any $n \geq 1$, and any choice of parameters $\mathbf{M}, \mathbf{N}, \boldsymbol{\theta}, \mathbf{A}, \mathbf{B}$, and $\mathbf{C}$, the map $\tilde{G}$ in~\cref{eq.onelayer} with $\Gamma = \Gamma_{\text{S6}}$ in~\cref{eq.S6} satisfies that 
        \[
        |\tilde{G}(\mathbf{u}) - G(\mathbf{u})| > \epsilon \qquad \text{for some } \mathbf{u} \in [0,1]^{L-1} \times \{1\}.
        \]
    \end{enumerate}
\end{thm}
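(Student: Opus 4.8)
The plan is to treat the two parts separately, with the S6 non-UAT being the substantive half.

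For the S4D universal approximation (part 1), I would first show that, with $\Delta^{(i)}$ frozen and $\mathbf{A}$ diagonal, each channel of $\Gamma_{\text{S4D}}$ realizes an \emph{arbitrary} linear functional of the input at the final position. Unrolling \cref{eq.S4D} gives $\mathbf{y}_L^{(i)} = \sum_{k=1}^L \overline{\mathbf{C}}^{(i)}\overline{\mathbf{A}}^{\,L-k}\overline{\mathbf{B}}^{(i)}\mathbf{u}_k^{(i)}$, and after the encoder $\mathbf{u}_k^{(i)} = M^{(i)} u_k$ this becomes $\sum_{k=1}^L w_k^{(i)} u_k$ with $w_k^{(i)} = M^{(i)}\sum_{m}\overline{C}^{(i)}_m \mu_m^{\,L-k}\overline{B}^{(i)}_m$, where $\mu_m$ are the diagonal entries of $\overline{\mathbf{A}}$. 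Choosing $n=L$ distinct real nodes $\mu_m$, the map from the free products $\overline{C}^{(i)}_m\overline{B}^{(i)}_m$ to $(w_1^{(i)},\dots,w_L^{(i)})$ is a Vandermonde system, hence invertible, so each channel can hit any target functional. The model then reduces to $\tilde{G}(\mathbf{u}) = \sum_{i=1}^d N^{(i)}\sigma(\langle w^{(i)},\mathbf{u}\rangle + \theta^{(i)})$, a one-hidden-layer network on $\R^L$, and I would close by invoking the classical universal approximation theorem for non-polynomial (here Lipschitz) activations.

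The heart of the theorem is the S6 non-UAT (part 2), and the key computation is that, once $\Delta_k^{(i)}$ is frozen to a constant, a single-layer S6 collapses to a \emph{scalar} feature. With $\overline{\mathbf{A}}_k^{(i)}\equiv\overline{\mathbf{A}}$ constant and $\mathbf{u}_k = \mathbf{M} u_k$, one gets $\overline{\mathbf{B}}_k = \mathbf{b}\,u_k$ and $\overline{\mathbf{C}}_L = u_L\,\mathbf{c}^\top$ for fixed vectors $\mathbf{b},\mathbf{c}\in\R^n$, so unrolling \cref{eq.S6} yields $\mathbf{y}_L^{(i)} = M^{(i)}\,u_L\sum_{k=1}^L \alpha_k u_k^2$ with $\alpha_k = \mathbf{c}^\top\overline{\mathbf{A}}^{\,L-k}\mathbf{b}$ \emph{independent of the channel $i$}. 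Thus the whole output vector equals a fixed direction $\mathbf{M}$ times one scalar, and after the decoder $\tilde{G}(\mathbf{u}) = \Psi\big(\sum_{k=1}^{L}\alpha_k u_k^2\big)$ for some $\Psi:\R\to\R$. On the slice $u_L=1$ this reads $\tilde{G}(\mathbf{u}) = \Psi\big(\alpha_L + \sum_{k=1}^{L-1}\alpha_k u_k^2\big)$, i.e. a \emph{ridge function of the single quadratic feature} $T(\mathbf{u}) = \sum_{k=1}^{L-1}\alpha_k u_k^2$. The crucial consequence, which uses nothing about $\Psi$ or $\sigma$, is that $\tilde{G}$ is constant on every level set of $T$.

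I would then pick the target $G(\mathbf{u}) = u_1^2 u_2^2$ (constant in the other coordinates, legitimate since $L-1\ge 2$) and reduce to two variables by setting $u_3 = \dots = u_{L-1} = 0$, so $T = \alpha_1 u_1^2 + \alpha_2 u_2^2$. Substituting $p = u_1^2,\ q = u_2^2\in[0,1]$ turns the feature into the \emph{linear} form $\alpha_1 p + \alpha_2 q$ and the target into $h(p,q) = pq$. The obstruction is the classical fact that $pq$ is not a ridge function of one linear feature, but I need it quantitatively and uniformly: there should be $\delta>0$ so that for every $(\alpha_1,\alpha_2)$ the function $pq$ has oscillation at least $2\delta$ on some level line of $\alpha_1 p + \alpha_2 q$ inside $[0,1]^2$. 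For $(\alpha_1,\alpha_2)\neq 0$ I take the level line through the center $(\tfrac12,\tfrac12)$: along it $pq$ is a nonconstant quadratic in the line parameter (it degenerates to a constant only if $\cos\psi+\sin\psi = \cos\psi\sin\psi = 0$, which is impossible), so its oscillation $O(\psi)$ is strictly positive; since $O$ varies continuously with the line direction $\psi$, which ranges over a compact set, $2\delta := \min_\psi O(\psi) > 0$. The degenerate case $\alpha_1=\alpha_2=0$ is handled directly, since then $\tilde{G}$ is globally constant while $pq$ is not. Finally, choosing two points $\mathbf{u},\mathbf{u}'$ on such a level line with $|G(\mathbf{u})-G(\mathbf{u}')|\ge 2\delta$ and noting $T(\mathbf{u})=T(\mathbf{u}')$ forces $\tilde{G}(\mathbf{u})=\tilde{G}(\mathbf{u}')$, so with $\epsilon := \delta/2$ the triangle inequality yields $2\delta \le 2\epsilon = \delta$, a contradiction; hence $G$ together with this $\epsilon$ witnesses the failure. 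The main obstacle is the opening move — recognizing that a frozen-$\Delta$ S6 is exactly a ridge function of one quadratic feature shared across all channels — together with upgrading ``$pq$ is not a ridge function'' to a uniform oscillation bound, which is where compactness is genuinely needed: a finite collection of collision points cannot suffice, since a generic $(\alpha_1,\alpha_2)$ avoids all of its finitely many collision directions.
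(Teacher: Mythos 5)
Your proposal is correct, and it reaches the same structural insight as the paper---a frozen-$\Delta$ single-layer S6 collapses to a function of one scalar quadratic feature shared across channels, so it is constant on the level sets of that feature---but it completes both halves by genuinely different means. For Part 1, the paper approximates the target linear functionals $(\mathbf{w}^{(i)})^\top\mathbf{u}$ by invoking an external LTI-approximation result (building a block-diagonal $\mathbf{A}$ from per-channel systems), whereas you realize them \emph{exactly} with $n=L$ distinct nodes via an invertible Vandermonde system in the products $\overline{\mathbf{C}}^{(i)}_m\overline{\mathbf{B}}^{(i)}_m$; your route is self-contained and avoids the $\epsilon/2$ bookkeeping, at the cost of using state dimension $n = L$. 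For Part 2, the paper proves a general lemma (its \Cref{lem.noencoder}): for \emph{any} quadratic polynomial $P$ in $L\ge 3$ variables there is a continuous $G$, built non-constructively via Urysohn's lemma from $N+1$ disjoint disks (where $N$ bounds the number of strict local extrema of a bivariate quadratic), admitting a collision $P(\mathbf{x})=P(\mathbf{y})$ with $|G(\mathbf{x})-G(\mathbf{y})|>1$. You instead exploit the fact that the collapsed feature is a \emph{diagonal} quadratic $\sum_k \alpha_k u_k^2$, linearize it by the substitution $p=u_1^2,\,q=u_2^2$, take the explicit target $G(\mathbf{u})=u_1^2u_2^2$, and obtain the required uniform separation $\delta>0$ by a compactness argument over level-line directions. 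Your argument buys an explicit counterexample $G$ and full self-containedness; the paper's lemma buys robustness---it would survive parameterization changes in which the collapsed feature were a general (non-diagonal) quadratic, while your linearization trick is tied to the sum-of-squares form. Both establish the stated quantifier order (fixed $G$ and $\epsilon$ working against all parameter choices), and your final triangle-inequality step matches the paper's.
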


You should not interpret~\Cref{thm.UAT} as a pessimistic claim that Mamba models are incapable of solving complex tasks. In practice, Mamba architectures typically use multiple stacked layers and do not fix $\Delta_k^{(i)}$, both of which enhance the expressiveness. Rather,~\Cref{thm.UAT} highlights a key limitation: a single S6 unit has significantly lower effective width than an S4D unit. Beyond its impact on approximation capacity, this also leads to issues in larger models, such as training instability~\cite{glorot2010understanding,choromanska2015loss,rotskoff2022trainability}, reduced generalization~\cite{ansuini2019intrinsic,xiao2020disentangling,basri,yu2022tuning,kim2024exploring}, and challenges in interpretability~\cite{zhong2016overview,yang2020feature}.

We present a simple toy experiment to illustrate~\Cref{thm.UAT}. Consider the input function $u(t; \boldsymbol{g}) = \sum_{i=1}^{10} g_i \cos(p_i t)$, where $p_1, \ldots, p_{10}$ are the 10 smallest prime numbers and $\boldsymbol{g} \in \mathbb{R}^{10}$ is a coefficient vector. The learning task is to recover $\boldsymbol{g}$ from the function $u(t; \boldsymbol{g})$, i.e., to learn the mapping $G: u(t; \boldsymbol{g}) \mapsto \boldsymbol{g}$. Intuitively, if the model is sufficiently wide, each channel can specialize in capturing one of the cosine components, making the problem easy. We train both a single-layer S6 model and a single-layer S4D model to approximate this mapping, also allowing the $\Delta$-related parameters to be learned in both cases. As shown in~\Cref{fig:wavesum}, the performance of the S6 model remains nearly constant as the width $d$ increases, reflecting the fact that its effective width does not scale with $d$. In contrast, the S4D model benefits significantly from increased width.

\begin{figure}[!t]
    \centering
    
    \begin{minipage}{0.32\textwidth}
        \begin{overpic}[width = 1\textwidth]{./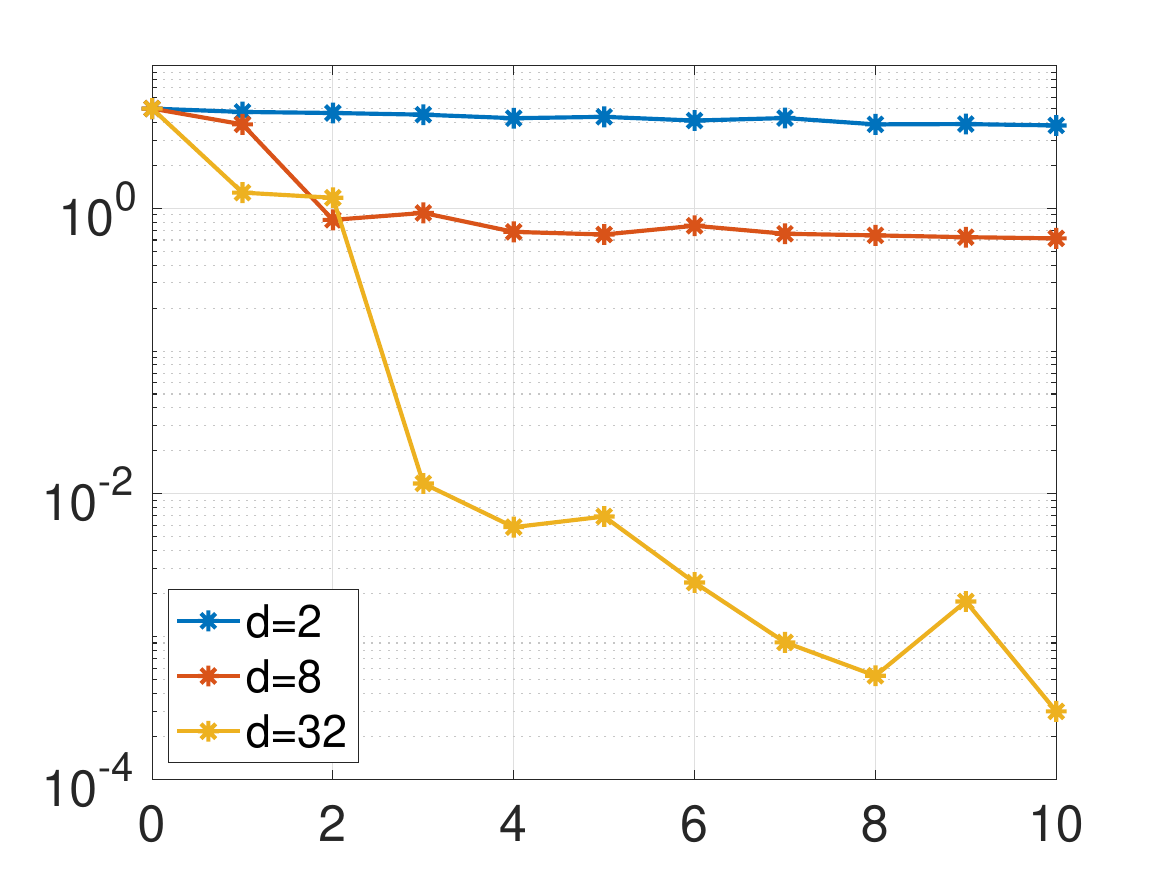}
        \put(-2,25){\scriptsize \rotatebox{90}{$\|\tilde{\boldsymbol{g}} - \boldsymbol{g}\|_2$}}
        \put(45,-2){\scriptsize Epoch}
        \put(43,75){\textbf{S4D}}
        \end{overpic}
    \end{minipage}
    \begin{minipage}{0.32\textwidth}
        \begin{overpic}[width = 1\textwidth]{./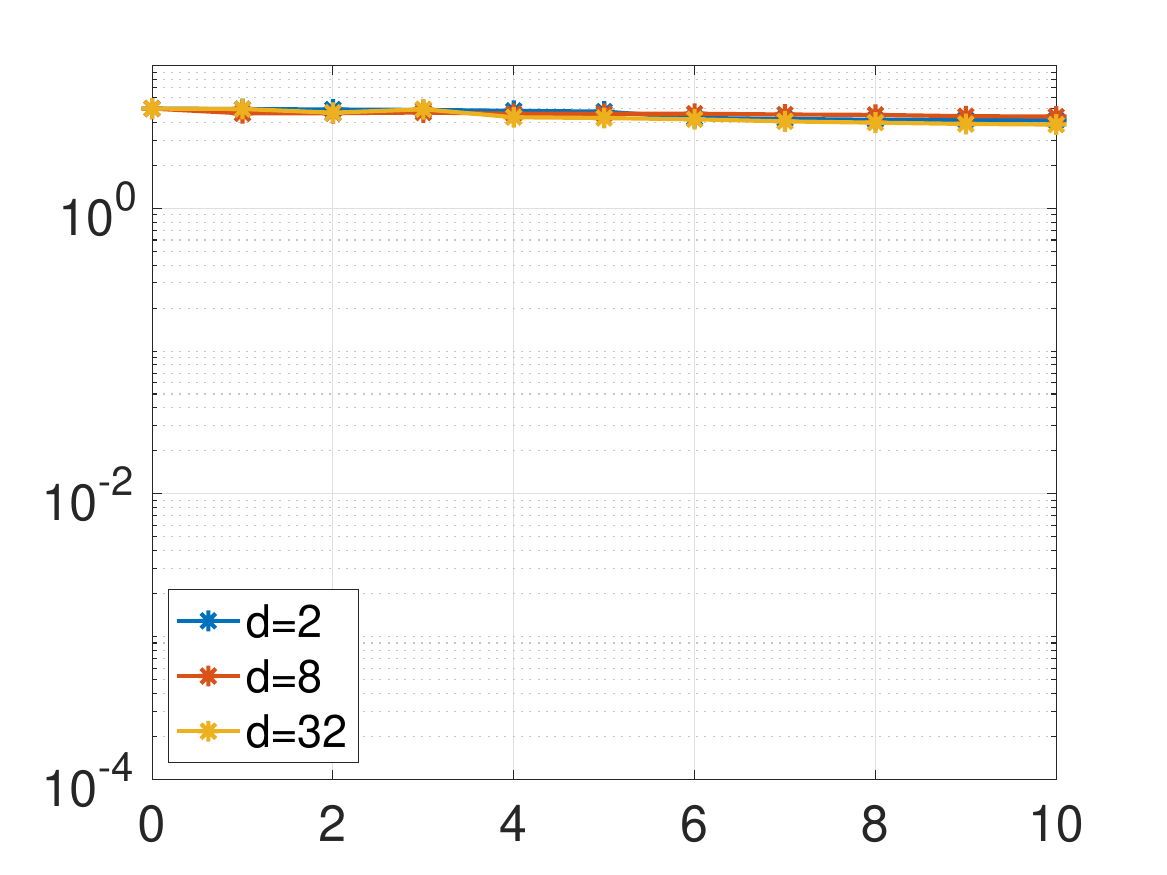}
        \put(-2,25){\scriptsize \rotatebox{90}{$\|\tilde{\boldsymbol{g}} - \boldsymbol{g}\|_2$}}
        \put(45,-2){\scriptsize Epoch}
        \put(45,75){\textbf{S6}}
        \end{overpic}
    \end{minipage}
    \begin{minipage}{0.32\textwidth}
        \begin{overpic}[width = 1\textwidth]{./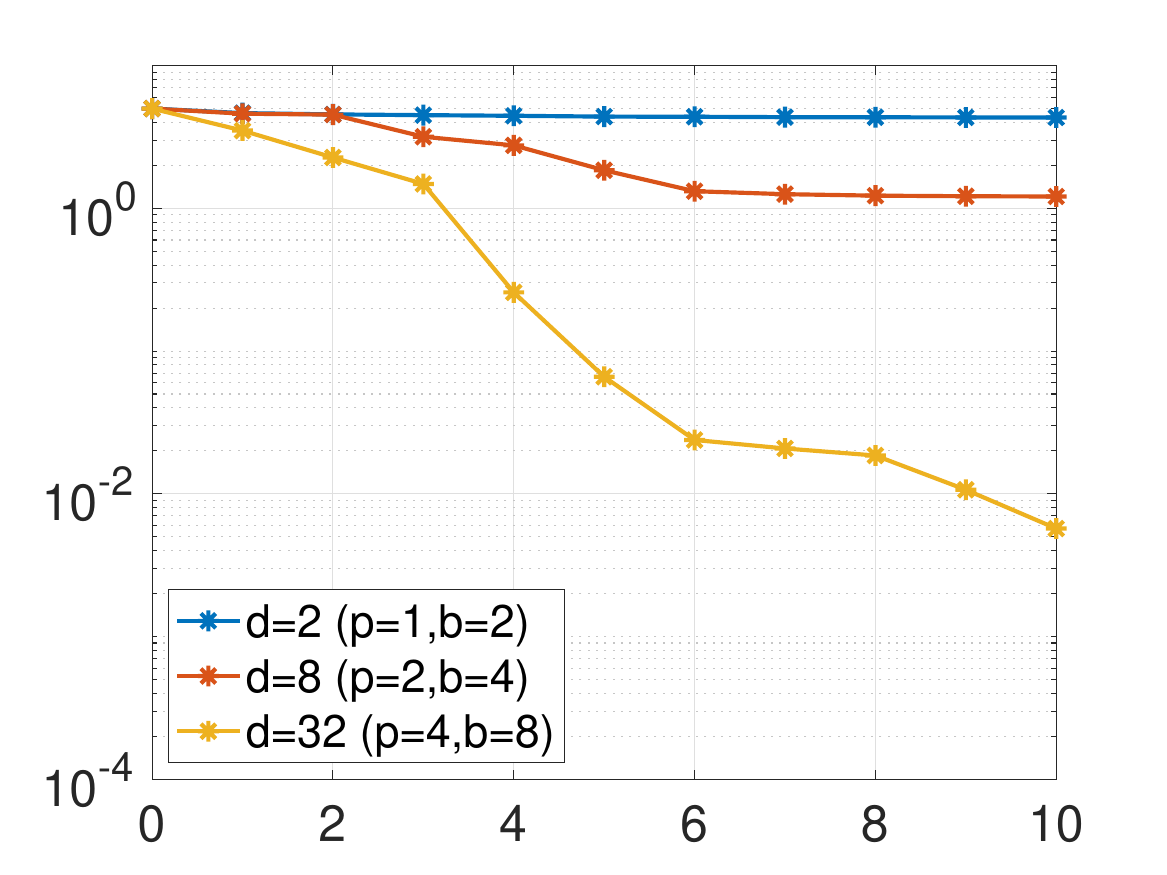}
        \put(-2,25){\scriptsize \rotatebox{90}{$\|\tilde{\boldsymbol{g}} - \boldsymbol{g}\|_2$}}
        \put(45,-2){\scriptsize Epoch}
        \put(42,75){\textbf{B}$_{\boldsymbol{2}}$\textbf{S}$_{\boldsymbol{6}}$}
        \end{overpic}
    \end{minipage}
    
    \caption{The mean loss $\|\tilde{\boldsymbol{g}} - \boldsymbol{g}\|_2$ between the true coefficient $\boldsymbol{g}$ and the model prediction $\tilde{\boldsymbol{g}}$. Every model has a single layer and is trained for $10$ epochs. Here, $d$ is the number of channels in a model. For the $\bbmamba$ model, $h$ is the number of blocks and $p$ is the number of channels in each block.}
    \label{fig:wavesum}
\end{figure}

\section{Mambas exhibit strong inductive bias}\label{sec:inductivebias}

A necessary condition for a model to obtain the capability of handling long-range dependency is that it possesses long-term memory. In S4D models, this is achieved by learning small values of $\Delta^{(i)}$, which slow down the evolution of the continuous-time LTI systems and allow information to persist over longer time horizons~\cite{sieber2024understanding}. In contrast, S6 models introduce an input-dependent sampling interval $\Delta_k^{(i)}$, where certain inputs lead to larger values of $\Delta_k^{(i)}$, causing faster memory decay, while others result in smaller values and slower decay.

This input-adaptive memory control serves as a useful inductive bias in language modeling, where only a limited number of words in a sentence typically carry semantic significance. However, this same mechanism is less suitable for many LRA tasks involving non-linguistic sequences, such as \texttt{Image}, \texttt{Pathfinder}, and \texttt{PathX}, where inputs are derived from flattened pixel arrays. In these settings, while some positional bias may exist (e.g., central pixels may be more informative than peripheral ones), it is difficult to justify that pixels of certain colors in a CIFAR image should receive dramatically more attention than others. As a result, the input-dependent memory dynamics of S6 may act as an overly aggressive or misaligned inductive bias in such contexts.

To formalize the notion of input-dependent bias, let $\Gamma$ denote either an S4D or S6 unit as defined in~\cref{eq.S4D} and~\cref{eq.S6}, respectively. Given a $d$-dimensional input sequence $\mathbf{u} = (\mathbf{u}_1, \ldots, \mathbf{u}_L)$, we ask: how does the output of $\Gamma$ depend on each individual input vector $\mathbf{u}_k$? To study this, we examine the {relative gradient} of the output with respect to each $\mathbf{u}_k$. Specifically, we define:
\begin{equation}\label{eq.relgrad}
    S_k = \frac{\|\mathbf{J}_k\|_F}{\sum_{k'=1}^L \|\mathbf{J}_{k'}\|_F}, \qquad \mathbf{J}_{k'} = \frac{\partial}{\partial \mathbf{u}_{k'}} \Gamma(\mathbf{u}_1, \ldots, \mathbf{u}_L)_L \in \mathbb{R}^{d \times d},
\end{equation}
where $\|\cdot\|_F$ denotes the Frobenius norm, and $\mathbf{J}_{k'}$ is the Jacobian of the final output vector with respect to the $k'$th input vector. Intuitively, the quantity $S_k$ measures the relative influence of $\mathbf{u}_k$ on the final output. In other words, it tells us how much of the output's sensitivity to perturbations is attributed to the $k$th input. For a linear system such as S4D, the Jacobians $\mathbf{J}_{k'}$ are independent of the input $\mathbf{u}$, so the relative gradients $S_k$ remain constant across different inputs. This indicates that the model treats all inputs uniformly and does not introduce bias toward a particular input as $\mathbf{u}$ changes. In contrast, we now show that S6 units introduce a bias when input vectors vary in magnitude.

\begin{thm}\label{thm.inductivebias}
    An S6 unit imposes an exponentially large bias as the input magnitude increases. That is, fix a $k_0 \leq L$ and let $\mathbf{A} \in \R^{n \times n}$ be a diagonal matrix with negative diagonal entries and $\mathbf{w} \neq \boldsymbol{0}$. For a.e.\ input sequence $\mathbf{u} = (\mathbf{u}_1, \ldots, \mathbf{u}_L) \in \R^{L}$ and a.e.\ parameters $\mathbf{B} \in \R^{n \times d}$, $\mathbf{C} \in \R^{d \times n}$, and $\mathbf{b} \in \R^{d}$, let $S_{\text{S6}, k}$ be defined in~\cref{eq.relgrad}, where $\Gamma = \Gamma_{\text{S6}}$ is defined in~\cref{eq.S6}. As $c \rightarrow \infty$, the following statements hold for any $p > 0$:

    \begin{enumerate}[noitemsep,leftmargin=*]
        \item If $\mathbf{w}^\top \mathbf{u}_{k_0} > 0$, we have
        \begin{equation*}
            S_{\text{S6}, \;k}((\mathbf{u}_1, \ldots, \mathbf{u}_{k_0-1}, c\mathbf{u}_{k_0}, \mathbf{u}_{k_0+1}, \ldots, \mathbf{u}_L)) = 
            \begin{cases}
                \mathcal{O}(c^{-p}) &,\; k < k_0, \\
                \Theta(c^{-1}) &,\; k = k_0, \\
                \Theta(1) &,\; k > k_0. \\
            \end{cases}
        \end{equation*}
        \item If $\mathbf{w}^\top \mathbf{u}_{k_0} < 0$, we have
        \begin{equation*}
            S_{\text{S6}, \;k}((\mathbf{u}_1, \ldots, \mathbf{u}_{k_0-1}, c\mathbf{u}_{k_0}, \mathbf{u}_{k_0+1}, \ldots, \mathbf{u}_L)) = 
            \begin{cases}
                \Theta(1) &,\; k \neq k_0, \\
                \mathcal{O}(c^{-p}) &,\; k = k_0. \\
            \end{cases}
        \end{equation*}
    \end{enumerate}

    Moreover, let $S_{\text{S4D}, k}$ be defined in~\cref{eq.relgrad}, where $\Gamma = \Gamma_{\text{S4D}}$ is defined in~\cref{eq.S4D}. We have that $S_{\text{S4D}, k}$ is constant and does not depend on the input $\mathbf{u}$.
\end{thm}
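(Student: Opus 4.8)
The S4D half is immediate. Unrolling the linear recurrence~\cref{eq.S4D} gives $\mathbf{y}_L = \sum_{k=1}^L \mathbf{K}_{L-k}\mathbf{u}_k$ with fixed convolution matrices $\mathbf{K}_{L-k}$ (diagonal, with $(i,i)$ entry $\overline{\mathbf{C}}^{(i)}(\overline{\mathbf{A}}^{(i)})^{L-k}\overline{\mathbf{B}}^{(i)}$), so $\mathbf{J}_{k'} = \mathbf{K}_{L-k'}$ is independent of $\mathbf{u}$ and $S_{\text{S4D},k}$ is constant. For S6 I would first unroll~\cref{eq.S6} to write, for each channel $i$,
\[
\mathbf{y}_L^{(i)} = \sum_{k=1}^L \overline{\mathbf{C}}_L\,\mathbf{P}_k^{(i)}\,\overline{\mathbf{B}}_k^{(i)}\,\mathbf{u}_k^{(i)}, \qquad \mathbf{P}_k^{(i)} := \prod_{j=k+1}^L \overline{\mathbf{A}}_j^{(i)},
\]
then substitute $\mathbf{u}_{k_0}\mapsto c\mathbf{u}_{k_0}$ and record the only factors that depend on $c$: $\Delta_{k_0}^{(i)}$, $\overline{\mathbf{A}}_{k_0}^{(i)}$, $\overline{\mathbf{B}}_{k_0}^{(i)}$, the direct factor $c\mathbf{u}_{k_0}^{(i)}$, and (only when $k_0 = L$) $\overline{\mathbf{C}}_L$. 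I would run the argument for $k_0 < L$ so that $\overline{\mathbf{C}}_L$ stays fixed; the endpoint $k_0 = L$ is genuinely degenerate, since then $\overline{\mathbf{C}}_L = (c\mathbf{u}_L)^\top\mathbf{C}$ is also rescaled, and I would dispatch it separately.

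Everything is driven by the softplus gate. Using $\mathrm{softplus}(x)=x+\mathcal{O}(e^{-x})$ as $x\to+\infty$ and $\mathrm{softplus}(x)=\mathcal{O}(e^{x})$ as $x\to-\infty$, together with $\mathbf{A}=\mathrm{diag}(a_1,\dots,a_n)$ and $a_j<0$, two regimes appear. If $\mathbf{w}^\top\mathbf{u}_{k_0}>0$ then $\Delta_{k_0}^{(i)}\sim c\,\mathbf{w}^\top\mathbf{u}_{k_0}\to\infty$, so $\overline{\mathbf{A}}_{k_0}^{(i)}=\exp(\Delta_{k_0}^{(i)}\mathbf{A})$ decays like $e^{-\gamma c}$ (faster than any polynomial), while $\overline{\mathbf{B}}_{k_0}^{(i)}\to -\mathbf{A}^{-1}\mathbf{B}(c\mathbf{u}_{k_0})$ grows like $c$. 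If $\mathbf{w}^\top\mathbf{u}_{k_0}<0$ then $\Delta_{k_0}^{(i)}=\mathcal{O}(e^{-\beta c})\to 0$, so $\overline{\mathbf{A}}_{k_0}^{(i)}-\mathbf{I}=\mathcal{O}(e^{-\beta c})$ and $\overline{\mathbf{B}}_{k_0}^{(i)}=\mathcal{O}(c\,e^{-\beta c})\to\mathbf{0}$. I would also record the gate derivative $\partial\Delta_{k_0}^{(i)}/\partial\mathbf{u}_{k_0}=\sigma(c\mathbf{w}^\top\mathbf{u}_{k_0}+b^{(i)})\mathbf{w}$, which tends to $\mathbf{w}$ in the first regime and decays like $e^{-\beta c}$ in the second.

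The order of each $\mathbf{J}_k$ then follows from the product rule on the unrolled sum, using the structural fact that any dependence of $\mathbf{y}_L$ on a position $k'<k_0$ must cross position $k_0$ and hence carries the factor $\overline{\mathbf{A}}_{k_0}^{(i)}$. When $\mathbf{w}^\top\mathbf{u}_{k_0}>0$, the $k_0$-term dominates because $\overline{\mathbf{B}}_{k_0}^{(i)}\mathbf{u}_{k_0}^{(i)}=\Theta(c^2)$: differentiating it in $\mathbf{u}_{k_0}$ costs one power of $c$, giving $\|\mathbf{J}_{k_0}\|_F=\Theta(c)$; differentiating it in $\mathbf{u}_k$ with $k>k_0$ acts on the factor $\overline{\mathbf{A}}_k^{(i)}$ inside $\mathbf{P}_{k_0}^{(i)}$ at order $1$, giving $\|\mathbf{J}_k\|_F=\Theta(c^2)$; and every $\mathbf{J}_k$ with $k<k_0$ carries $\overline{\mathbf{A}}_{k_0}^{(i)}=\mathcal{O}(\mathrm{poly}(c)\,e^{-\gamma c})$. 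The denominator is therefore $\Theta(c^2)$, yielding the three rates $\mathcal{O}(c^{-p})$, $\Theta(c^{-1})$, $\Theta(1)$. When $\mathbf{w}^\top\mathbf{u}_{k_0}<0$, every path by which $\mathbf{u}_{k_0}$ reaches the output passes through one of $\partial\Delta_{k_0}^{(i)}/\partial\mathbf{u}_{k_0}$, $\overline{\mathbf{B}}_{k_0}^{(i)}$, or $\overline{\mathbf{A}}_{k_0}^{(i)}-\mathbf{I}$, all $\mathcal{O}(\mathrm{poly}(c)\,e^{-\beta c})=\mathcal{O}(c^{-p})$ for every $p$, so $\|\mathbf{J}_{k_0}\|_F$ is exponentially small; meanwhile for $k\neq k_0$ the system converges to the limit with $\overline{\mathbf{A}}_{k_0}^{(i)}=\mathbf{I}$, $\overline{\mathbf{B}}_{k_0}^{(i)}=\mathbf{0}$, and $\mathbf{J}_k$ tends to a finite limit, giving $\|\mathbf{J}_k\|_F=\Theta(1)$. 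The denominator is $\Theta(1)$, giving $\mathcal{O}(c^{-p})$ at $k_0$ and $\Theta(1)$ elsewhere.

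The upper ($\mathcal{O}$) bounds are routine once these factor asymptotics are in hand; the real work is the matching lower bounds, i.e.\ the $\Theta$ claims, which require that the leading-order coefficients do not vanish and that no cancellation occurs across channels when forming the Frobenius norm. This is exactly where the ``a.e.'' hypotheses enter: each leading coefficient (for instance the $c^2$-coefficient $\overline{\mathbf{C}}_L\mathbf{P}_{k_0}^{(i)}(-\mathbf{A}^{-1}\mathbf{B}\mathbf{u}_{k_0})\mathbf{u}_{k_0}^{(i)}$ and its $\mathbf{u}_k$-derivative) is a real-analytic, not-identically-zero function of the entries of $\mathbf{u}$, $\mathbf{B}$, $\mathbf{C}$, and $\mathbf{b}$, hence nonzero off a measure-zero set. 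I expect this non-vanishing and non-cancellation bookkeeping, rather than the asymptotics themselves, to be the main obstacle: one must argue that at least one channel contributes at the nominal order and that summing over channels inside $\|\cdot\|_F$ cannot accidentally lower that order.
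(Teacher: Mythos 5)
Your proposal is correct and follows essentially the same route as the paper's proof: unroll the recurrence, differentiate with the product rule, track the softplus-driven asymptotics of $\overline{\mathbf{A}}_{k_0}$ and $\overline{\mathbf{B}}_{k_0}$ in the two gate regimes (exponential decay for $k<k_0$, $\Theta(c)$ at $k=k_0$, $\Theta(c^2)$ for $k>k_0$ in the first regime; $\Theta(1)$ off $k_0$ in the second), and rule out cancellation for a.e.\ parameters---the only cosmetic differences being that the paper phrases the two regimes as $c\to+\infty$ versus $c\to-\infty$ and packages the non-cancellation step as \Cref{lem.nocancellation} instead of your real-analyticity argument. Your flag on the endpoint $k_0=L$ (where $\overline{\mathbf{C}}_L$ is also rescaled) is a legitimate subtlety that the paper's proof silently glosses over, consistent with the analogous \Cref{thm.B2S6inductivebias} being stated only for $k_0<L$.
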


The theorem shows that as the magnitude of a single input vector $\mathbf{u}_{k_0}$ grows, one of two extreme behaviors can emerge: either the current input (Case 2) or all previous inputs (Case 1) exert an exponentially diminishing influence on the final output. Neither outcome is desirable when modeling sequences with long-range dependencies. In practice, many implementations of Mamba mitigate this issue through normalization layers that constrain the magnitudes of input vectors. Additionally, unlike the original formulation in~\cite{gu2023mamba}, more sophisticated parameterizations are often used to compute the sampling intervals $\Delta_k^{(i)}$. These modifications can be interpreted as efforts to counteract the exponential input bias introduced by large-magnitude vectors. In~\cref{sec:improvements}, we show how the $\bbmamba$ unit further provides a more balanced and robust inductive bias.

To illustrate the inductive bias of S6, we consider a synthetic task with inputs sampled from
\[
    \mathbf{u} = (\mathbf{u}_1, \boldsymbol{0}, \ldots, \boldsymbol{0}, \mathbf{u}_L), \qquad \mathbf{u}_1 = \begin{bmatrix}
        u_1, \ldots, u_1
    \end{bmatrix}^\top \in \mathbb{R}^{d}, \quad u_1 \sim \mathcal{N}(0, \sigma_1), \quad \mathbf{u}_L \sim \mathcal{N}(\boldsymbol{0}, \sigma_2 \mathbf{I}_d),
\]
where $\sigma_1, \sigma_2 > 0$ are fixed. The task is to learn the mapping $G(\mathbf{u}) = |u_1|$. Since the target size scales with $\sigma_1$, we report the relative error $|G(\mathbf{u}) - \tilde{G}(\mathbf{u})| / \sigma_1$ to make comparisons fair across different settings. \Cref{fig:copyme} shows the relative errors of different models under different values of $\sigma_1$ and $\sigma_2$. While the S4D model remains robust across all cases, the S6 model fails significantly when $\sigma_1$ is small and $\sigma_2$ is large. This corroborates~\Cref{thm.inductivebias}: a large input $\mathbf{u}_L$ in the half-space $\{\mathbf{u} \mid \mathbf{w}^\top \mathbf{u} > 0\}$ triggers fast forgetting, erasing the memory of $\mathbf{u}_1$ and making the prediction unreliable.

\begin{figure}[!t]
    \centering
    
    \begin{minipage}{0.3\textwidth}
        \begin{overpic}[width = 1\textwidth]{./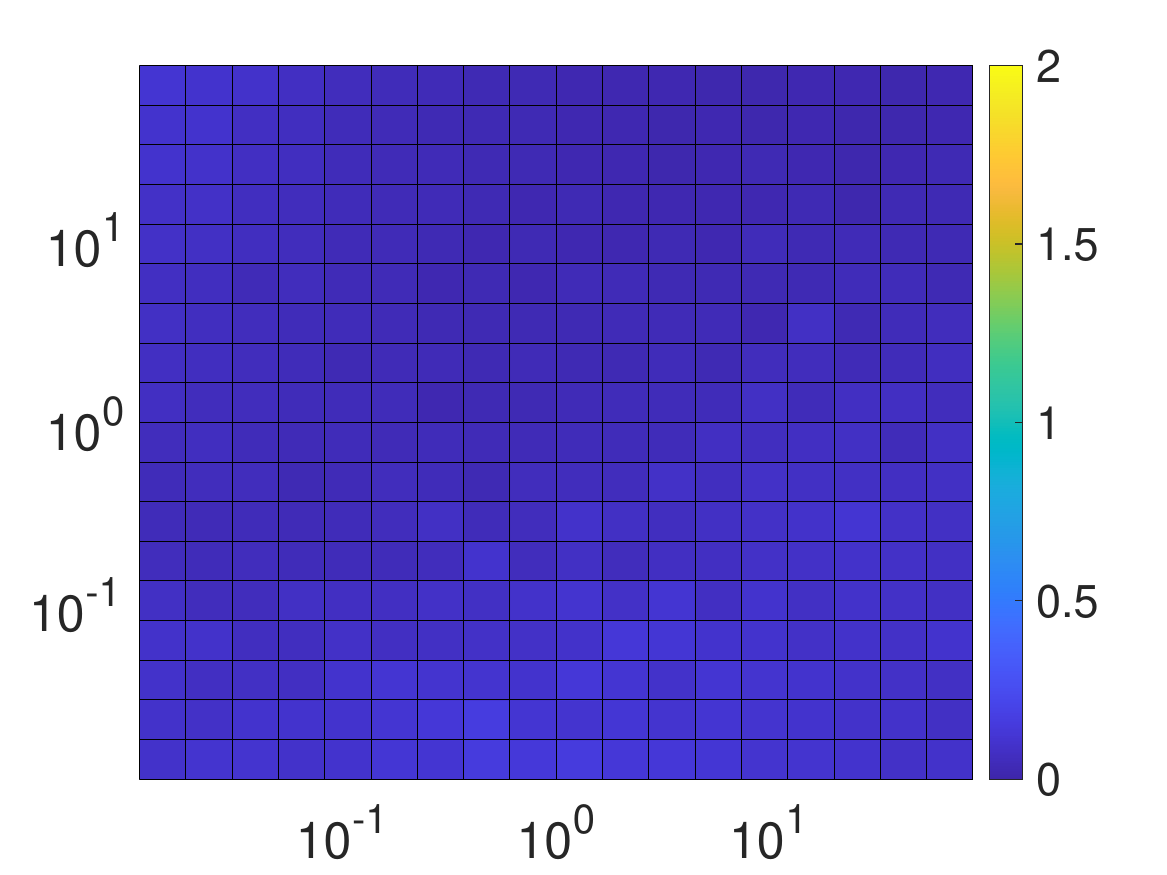}
        \put(-1,35){\scriptsize \rotatebox{90}{$\sigma_2$}}
        \put(45,-3){\scriptsize $\sigma_1$}
        \put(95,60){\scriptsize \rotatebox{270}{$|{G}(\mathbf{u}) - \tilde{G}(\mathbf{u})| / \sigma_1$}}
        \put(40,75){\textbf{S4D}}
        \end{overpic}
    \end{minipage}
    \hspace{0.2cm}
    \begin{minipage}{0.3\textwidth}
        \begin{overpic}[width = 1\textwidth]{./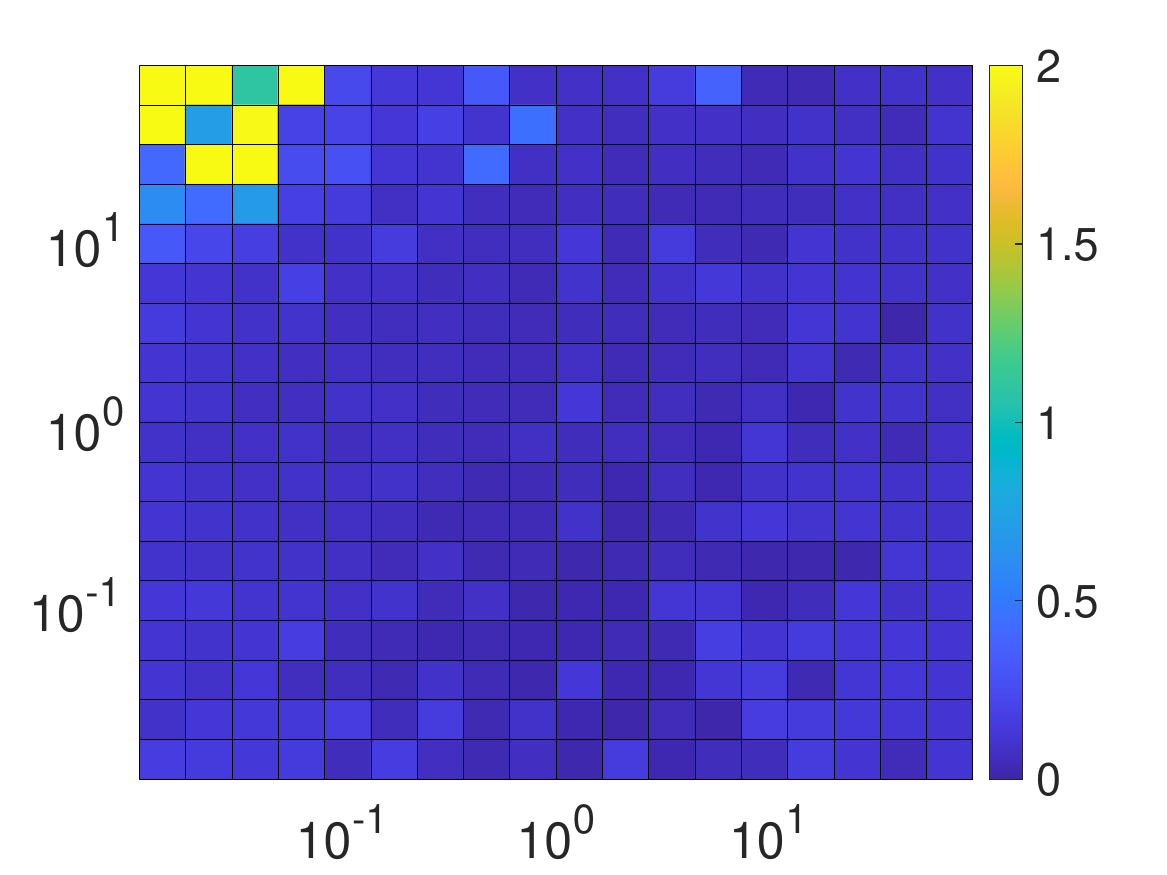}
        \put(-1,35){\scriptsize \rotatebox{90}{$\sigma_2$}}
        \put(45,-3){\scriptsize $\sigma_1$}
        \put(95,60){\scriptsize \rotatebox{270}{$|{G}(\mathbf{u}) - \tilde{G}(\mathbf{u})| / \sigma_1$}}
        \put(43,75){\textbf{S6}}
        \end{overpic}
    \end{minipage}
    \hspace{0.2cm}
    \begin{minipage}{0.3\textwidth}
        \begin{overpic}[width = 1\textwidth]{./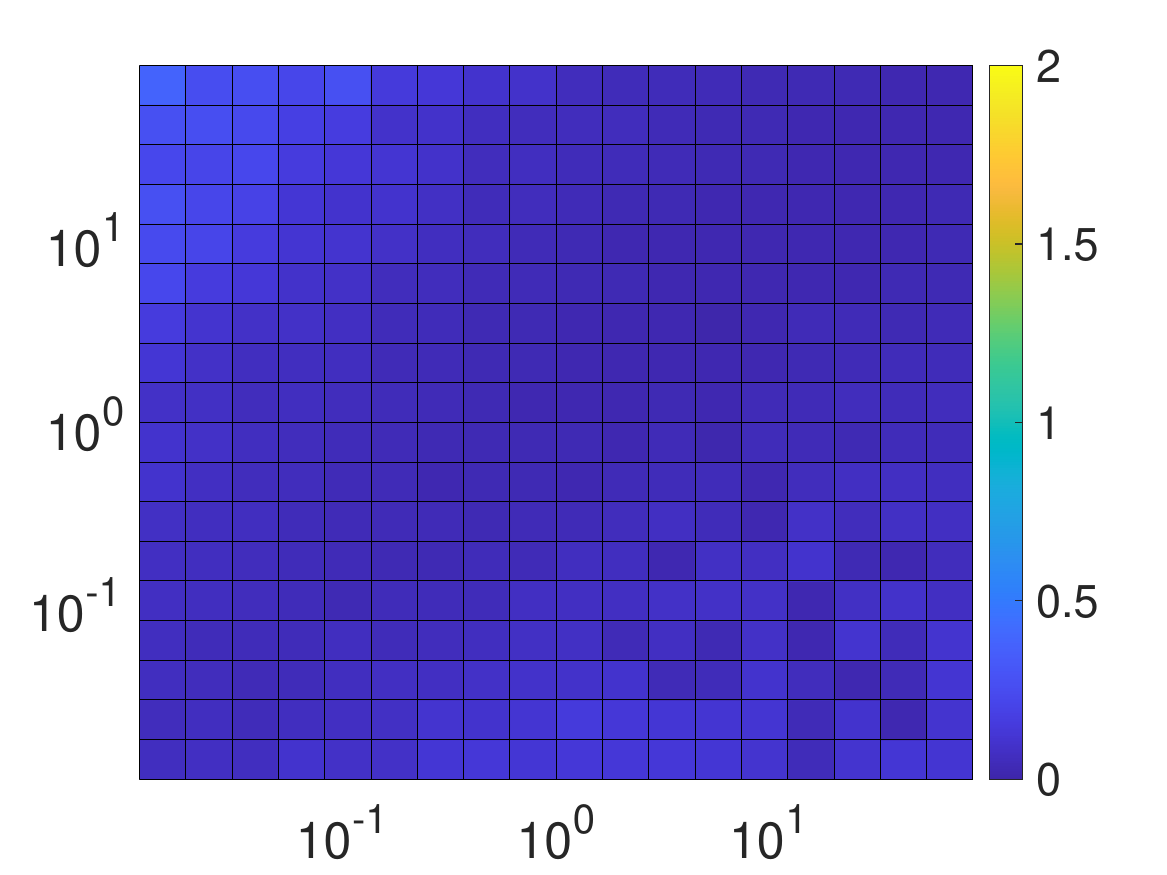}
        \put(-1,35){\scriptsize \rotatebox{90}{$\sigma_2$}}
        \put(45,-3){\scriptsize $\sigma_1$}
        \put(95,60){\scriptsize \rotatebox{270}{$|{G}(\mathbf{u}) - \tilde{G}(\mathbf{u})| / \sigma_1$}}
        \put(40,75){{\textbf{B}$_{\boldsymbol{2}}$\textbf{S}$_{\boldsymbol{6}}$}}
        \end{overpic}
    \end{minipage}
    
    \caption{The mean relative loss $|{G}(\mathbf{u}) - \tilde{G}(\mathbf{u})| / \sigma_1$ for different choices of $\sigma_1$ and $\sigma_2$. The S6 model cannot make useful predictions when $\sigma_1$ is small and $\sigma_2$ is large; $\bbmamba$ fixes this. In all experiments, we fix $d = 32$. For $\bbmamba$, we set $h = 8$ and $p = 4$.}
    \label{fig:copyme}

\end{figure}

\section{Mambas are not stable to train}\label{sec:stability}

Our discussion so far has focused on the expressiveness and generalization of S6 models. In general, a model lacking expressiveness cannot achieve low training error on complex tasks, while poor generalization refers to models that fit the training data well but perform poorly on test data. If you have trained S6 models on LRA tasks, then you might observe another issue: the training loss curve does not decrease smoothly. Occasionally, a single optimization step causes the model to collapse, turning a high-accuracy model into one that performs no better than random guessing. This reflects a {training stability} issue rather than one of expressiveness or generalization.

This phenomenon has been empirically studied in the context of general neural networks~\cite{dauphin2014identifying,damian2022self}, where instability is often linked to sharp curvature in the loss landscape. In this section, we provide a theoretical explanation for the training instability observed in S6 models. Since we are not tied to a specific task or loss function, we instead study how the output $\Gamma(\mathbf{u}; \boldsymbol{\Theta})$ depends on the model parameters $\boldsymbol{\Theta}$. While curvature involves second-order derivatives, we focus on first-order gradients. If $({\partial}/{\partial \boldsymbol{\Theta}}) \Gamma(\mathbf{u}; \boldsymbol{\Theta})$ is large in magnitude, large curvatures in the loss landscape might appear when we compose $\Gamma(\mathbf{u}; \boldsymbol{\Theta})$ with other functions to form a loss function.

We find that instability in S6 models is closely tied to how the sampling interval $\Delta$ is computed. To formalize this, we prove a theorem comparing the gradients of S6 and S4D models with respect to their $\Delta$-related parameters. For simplicity, we restrict our analysis to a single-input setting, i.e., $d = 1$, and drop all superscripts. Hence, the $\Delta^{(1)}$ for S4D in~\cref{eq.ZOH} only depends on a parameter $b = b^{(1)} \in \R$ and the $\Delta^{(1)}_k$ for S6 in~\cref{eq.ZOHmamba} only depends on two parameters $w \in \R$ and $b = b^{(1)} \in \R$. Since the specific input distribution is unknown, we make a generic assumption that the inputs are random variables with mild regularity conditions, ensuring that the analysis is broadly applicable.

\begin{thm}\label{thm.stability}
    Assume that $d = 1$ and $n \geq 1$. Let $\Gamma_{\text{S4D}}(\mathbf{u}; \boldsymbol{\Theta})$ and $\Gamma_{\text{S6}}(\mathbf{u}; \boldsymbol{\Theta})$ be given in~\cref{eq.S4D} and~(\ref{eq.S6}), respectively, where $\boldsymbol{\Theta}$ is the collection of all model parameters. Given a diagonal matrix $\mathbf{A} \in \R^{n \times n}$ whose diagonal entries are negative, and $\mathbf{B} \in \R^{n \times 1}$, the following statements hold:

    \begin{enumerate}[noitemsep,leftmargin=*]
        \item An S6 model is less stable to train than an S4D model as the input magnitude increases. That is, fix some $L \geq 1$, $b \in \R$, and $w = 0$. Let $\mathbf{u} = (u_1, \ldots, u_L)$ be sampled from a distribution $\mathbb{D}$. For every $\mathbf{C} \in \R^{1 \times n}$, if $\mathbb{E}_{\mathbf{u} \sim \mathbb{D}_L} |(\partial/\partial w) \Gamma_{\text{S6}}|$, $\mathbb{E}_{\mathbf{u} \sim \mathbb{D}_L} |(\partial/\partial b) \Gamma_{\text{S6}}|$, $\mathbb{E}_{\mathbf{u} \sim \mathbb{D}_L} |(\partial/\partial b) \Gamma_{\text{S4D}}| \neq 0$, then
        \[
            \frac{\mathbb{E}_{\mathbf{u} \sim c\mathbb{D}_L} |(\partial/\partial w) \Gamma_{\text{S6}}(\mathbf{u})_L|}{\mathbb{E}_{\mathbf{u} \sim c\mathbb{D}_L} |(\partial/\partial b) \Gamma_{\text{S4D}}(\mathbf{u})_L|} = \Omega(c^3), \qquad \frac{\mathbb{E}_{\mathbf{u} \sim c\mathbb{D}_L} |(\partial/\partial b) \Gamma_{\text{S6}}(\mathbf{u})_L|}{\mathbb{E}_{\mathbf{u} \sim c\mathbb{D}_L} |(\partial/\partial b) \Gamma_{\text{S4D}}(\mathbf{u})_L|} = \Omega(c^2), \qquad c \rightarrow \infty.
        \]
        \item An S6 model is less stable to train than an S4D model as the input length increases. That is, fix $w = 0$. There exists a sequence $b(L) \rightarrow -\infty$ as $L \rightarrow \infty$, such that given any sequence of distributions $\mathbb{{D}}_L$ on $[0,1]^{L-1} \times \{1\}$ with $\mathbb{E}_{\mathbf{u} \sim \mathbb{D}_L}[\Gamma_{\text{S4D}}(\mathbf{u})_L] = 0$, $c_1 \!\leq\! \text{Var}_{\mathbf{u} \sim \mathbb{D}_L}[u_j] \!\leq\! c_2$ for all $1 \!\leq\! j \!\leq\! L-1$, where $c_1, c_2 > 0$ are universal constants, and $\abs{\sum_{1 \leq i < j \leq L} \text{Cov}(\overline{\mathbf{A}}^{L-i} u_i, \overline{\mathbf{A}}^{L-j} u_j)} = \mathcal{O}(L)$, where $\overline{\mathbf{A}} = \text{exp}(\text{exp}(b(L)) \mathbf{A})$, we have for a.e.\ $\mathbf{C} \in \R^{1 \times n}$ that
        \[
            \limsup_{L \rightarrow \infty} \left.\left(\frac{\mathbb{E}_{\mathbf{u} \sim \mathbb{D}_L} |(\partial/\partial b(L)) \Gamma_{\text{S6}}(\mathbf{u})_L|}{\mathbb{E}_{\mathbf{u} \sim \mathbb{D}_L} |(\partial/\partial b(L)) \Gamma_{\text{S4D}}(\mathbf{u})_L|}\right)\middle/ \sqrt{L}\right. > 0.
        \]
    \end{enumerate}

\end{thm}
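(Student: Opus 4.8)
The plan is to collapse both units to explicit scalar sums and then exploit the single structural asymmetry: with $w=0$ the S6 output is a \emph{quadratic} form in the inputs while the S4D output is \emph{linear}. Since $\mathbf{A}$ is diagonal and $w=0$, the interval $\Delta=\mathrm{softplus}(b)$ (resp.\ $\exp(b)$) is constant in $k$, so $\overline{\mathbf{A}}_k$ is constant and unrolling~\cref{eq.S4D} and~\cref{eq.S6} with $\mathbf{x}_0=\mathbf{0}$ and $u_L=1$ gives
\begin{equation*}
\Gamma_{\text{S4D}}(\mathbf{u})_L=\sum_{i=1}^{L}\phi_i\,u_i,\qquad \Gamma_{\text{S6}}(\mathbf{u})_L=\sum_{i=1}^{L}\phi_i\,u_i^2,\qquad \phi_i=\mathbf{C}\,\overline{\mathbf{A}}^{\,L-i}\mathbf{A}^{-1}(\overline{\mathbf{A}}-\mathbf{I})\mathbf{B},
\end{equation*}
where $\overline{\mathbf{A}}=\exp(\Delta\mathbf{A})$; the two units agree to leading order because $\mathrm{softplus}(b)\sim e^{b}$ and $\mathrm{softplus}'(b)\sim e^{b}$ as $b\to-\infty$. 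Differentiating through $b\mapsto\Delta\mapsto\overline{\mathbf{A}}$ and using $\tfrac{d}{d\Delta}\overline{\mathbf{A}}^{\,L-i}=(L-i)\mathbf{A}\,\overline{\mathbf{A}}^{\,L-i}$, I would write each $b$-derivative as a common prefactor $e^{b}(1+o(1))$ times $\sum_i\eta_i v_i$, with $v_i\in\{u_i,u_i^2\}$ and $\eta_i:=\partial\phi_i/\partial\Delta$ carrying the crucial time weight $(L-i)$. The common prefactor cancels, so the quantity to estimate is $\mathbb{E}\abs{\sum_i\eta_i u_i^2}\big/\mathbb{E}\abs{\sum_i\eta_i u_i}$.

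Next I would fix $b(L)$ at the critical memory scale, $b(L)\approx-\log L$ so that $\Delta\approx 1/L$ and $\overline{\mathbf{A}}^{\,L-i}$ varies nontrivially across the whole sequence; this is essentially the slowest decay for which $\mathbb{E}[\Gamma_{\text{S4D}}]=0$ is even attainable, since for faster decay the weights $\phi_i$ become sign-definite and cannot offset the forced positive contribution of the clamped coordinate $u_L=1$. For the numerator I would use Jensen, $\mathbb{E}\abs{\sum_i\eta_i u_i^2}\ge\abs{\sum_i\eta_i\,\mathbb{E}[u_i^2]}$, exploit that $\mathbb{E}[u_i^2]\ge\Var(u_i)\ge c_1>0$ so the nonnegative $u_i^2$ cannot cancel, and argue by a Riemann-sum/genericity argument in $\mathbf{C}$ (isolating the terms with $i$ near $L$, where $\eta_i$ is sign-definite) that this is of order $L$. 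For the denominator I would use Cauchy--Schwarz, $\mathbb{E}\abs{\sum_i\eta_i u_i}\le\bigl(\Var(\sum_i\eta_i u_i)+(\mathbb{E}\sum_i\eta_i u_i)^2\bigr)^{1/2}$, and cap the variance at $\mathcal{O}(L)$ using $\Var(u_j)\le c_2$ together with the hypothesis $\abs{\sum_{i<j}\Cov(\overline{\mathbf{A}}^{L-i}u_i,\overline{\mathbf{A}}^{L-j}u_j)}=\mathcal{O}(L)$, which is exactly the term that controls the off-diagonal of the gradient's second moment.

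The main obstacle is the mean of the denominator. Mean-zero of the \emph{output}, $\sum_i\phi_i\,\mathbb{E}[u_i]=0$, does \emph{not} immediately give mean-zero of the \emph{gradient}, because $\eta_i$ is not proportional to $\phi_i$: differentiation injects the extra factor $(L-i)$. I would attack this by splitting $\eta_i$ into its $\phi_i$-aligned component, which is annihilated by the constraint, and a residual time-weighted component, and then must show that after this cancellation the surviving contribution to $\mathbb{E}[\partial_b\Gamma_{\text{S4D}}]$ is only $\mathcal{O}(\sqrt L)$ in the chosen normalization, so that the denominator is governed by its standard deviation $\Theta(\sqrt L)$ rather than its mean. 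Pinning down this suppression rigorously -- reconciling the single scalar constraint on $\Gamma_{\text{S4D}}$ with the time-weighted gradient, clarifying how strongly the covariance hypothesis (the only assumption coupling distinct time steps) forces the residual to cancel, and excluding degenerate $\mathbf{C}$ via the ``a.e.'' clause -- is the technical heart of the argument. Once it is in place, combining the $\Theta(L)$ numerator lower bound with the $\mathcal{O}(\sqrt L)$ denominator upper bound yields a ratio of order $\sqrt L$, i.e.\ $\limsup_{L}(\text{ratio})/\sqrt L>0$.
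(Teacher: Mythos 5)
Your reduction of both units to the scalar sums $\sum_i\phi_i u_i$ and $\sum_i\phi_i u_i^2$, and your two-sided strategy for Part~2 (a second-moment upper bound on the S4D gradient via the mean-zero, variance, and covariance hypotheses; a $\Theta(L)$ lower bound on the S6 gradient via Jensen, non-negativity of $u_i^2$, and $\mathbb{E}[u_i^2]\geq \mathrm{Var}(u_i)\geq c_1$) match the paper's proof in skeleton. But the step you flag as ``the technical heart'' is a genuine, fatal gap, and it is created by your own choice of scale. You pin $b(L)\approx-\log L$, i.e.\ $\Delta\approx 1/L$, so that the time weight $(L-i)(\overline{\mathbf{A}}-\mathbf{I})=\Theta((L-i)/L)$ is order one; then $\eta_i$ has a component genuinely not proportional to $\phi_i$, and the single scalar constraint $\sum_i\phi_i\,\mathbb{E}[u_i]=0$ cannot suppress $\sum_i\eta_i\,\mathbb{E}[u_i]$. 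The suppression to $\mathcal{O}(\sqrt L)$ you hope for does not occur: the theorem quantifies over \emph{all} admissible distribution sequences, and an adversary can take independent $u_i$ (so the covariance sum is $0$), variances in $[c_1,c_2]$, and means chosen to satisfy the one output constraint while making the time-weighted part of $\mathbb{E}[\partial_b\Gamma_{\text{S4D}}]$ of size $\Theta(L)$. Then the denominator is $\Theta(L)e^{b}$, the numerator is at most $\mathcal{O}(L)e^{b}$, the ratio is $\mathcal{O}(1)$, and the conclusion $\limsup(\cdot)/\sqrt L>0$ is false at your scale; no argument can close the gap there.

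The paper avoids this entirely by exploiting the freedom that the theorem only asserts \emph{existence} of some $b(L)\rightarrow-\infty$: it takes $b(L)$ decaying so fast that $L\,\|\overline{\mathbf{A}}-\mathbf{I}\|\rightarrow 0$ (e.g.\ $\exp(b(L))$ exponentially small in $L$). Then the offending term $(L-j)(\overline{\mathbf{A}}-\mathbf{I})$ is uniformly negligible, $\eta_i$ collapses (up to exponentially small error) onto the output weights, the mean-zero hypothesis kills the gradient mean, and the denominator is governed by its standard deviation $\mathcal{O}(\sqrt L)e^{b(L)}$---no suppression lemma is needed. (Your observation that for very small $\Delta$ the weights $\phi_i$ become sign-definite, so that mean-zero output is hard to achieve against the clamped $u_L=1$, is in fact an observation about the paper's regime; it shrinks---possibly empties---the hypothesis class, which makes the statement easier to prove, not harder, and is not a reason to retreat to the critical scale where the statement itself fails.) Two further omissions: first, you never address Part~1 at all---the $\Omega(c^3)$ and $\Omega(c^2)$ ratios follow in the paper from the homogeneity degrees $4$, $3$, and $1$ in $\mathbf{u}$ of $\partial_w\Gamma_{\text{S6}}$, $\partial_b\Gamma_{\text{S6}}$, and $\partial_b\Gamma_{\text{S4D}}$, which your quadratic-versus-linear remark gestures at but never completes (note the extra factor $u_L$ from $\overline{\mathbf{C}}_L$, which is not fixed to $1$ in Part~1). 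Second, for $n>1$ the S6 lower bound must rule out cancellation among the $n$ diagonal components' $\pm\Theta(L)$ contributions; the paper extracts a convergent subsequence (this is precisely why the conclusion is a $\limsup$) and then invokes its almost-everywhere no-cancellation lemma (\Cref{lem.nocancellation}) in $\mathbf{C}$. Your one-line appeal to ``genericity in $\mathbf{C}$'' would need to be fleshed out into exactly this argument.
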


The first part of our result shows that an S6 model becomes increasingly unstable relative to an S4D model as the input size grows. This is expected, as the matrices $\overline{\mathbf{B}}^{(i)}_k$ and $\overline{\mathbf{C}}_k$ in~\cref{eq.S6} depend linearly on the input $\mathbf{u}_k$. However, this effect is typically mitigated in practice by input normalization and is therefore of less practical concern. The more significant insight lies in the second part of the theorem: as the sequence length $L$ increases, the S6 model also becomes less stable to train. This is particularly relevant in the context of LRA tasks, where sequences can be extremely long; for example, the \texttt{PathX} task involves sequences of length $16384$. In such cases, the training stability of S6 and S4D are dramatically different. Note that we have made some technical assumptions. First, to preserve long-range dependencies as $L \rightarrow \infty$, the sampling interval $\Delta$ must shrink. This justifies why we assumed $b(L) \rightarrow -\infty$ so that $\Delta \rightarrow 0^+$ as $L \rightarrow \infty$. Second, since the pairwise covariances between $\overline{\mathbf{A}}^{L-i}u_i$ and $\overline{\mathbf{A}}^{L-j}u_j$ can be both positive and negative, cancellation occurs and assuming their total sum scales as $\mathcal{O}(L)$ is mild and reflects the natural behavior of aggregated dependencies in long sequences. We also fixed $w = 0$, leaving the case when $w \neq 0$ for future work.

We perform numerical experiments to corroborate~\Cref{thm.stability}, leaving a more empirical training experiment in~\Cref{app:expstable}. We set $d = 1$ and randomly sample $\text{diag}(\mathbf{A})$ from the left half of the complex plane, and randomly sample $\mathbf{B}$ and $\mathbf{C}$. We then use these matrices to construct an S4D system and an S6 system. We sample our length-$L$ input sequences from i.i.d.\ Gaussian distributions with mean zero and standard deviation $c$.\footnote{Note, in particular, that this sequence has no long-range dependency. We made this choice because it is a natural one without any prior information. It does not affect the training stability a lot. One can try different input functions with long-range dependencies, e.g., we have tried sinusoidal waves, and the results are similar.} We fix the ratio $L \,/\, \text{exp}(b(L))^{-1}$ to take into account the fact that as $L$ increases, we need a longer memory window to capture the long-range dependency. We then compute the quantities in~\Cref{thm.stability}. For the first two experiments, we fix $L = 100$ and let $c$ increase. Then, we fix $c = 1$ and let $L$ increase. The results are shown in~\Cref{fig:numstability}. All results are averaged over 30 different trials. In these experiments, we see that the three ratios studied in~\Cref{thm.stability} follow exactly the pattern of a cubic, quadratic, and square root growth, respectively. This is what we expect from~\Cref{thm.stability}.

\begin{figure}[!t]
    \centering
    \vspace{\baselineskip}
    \begin{minipage}{0.32\textwidth}
        \begin{overpic}[width = 1\textwidth]{./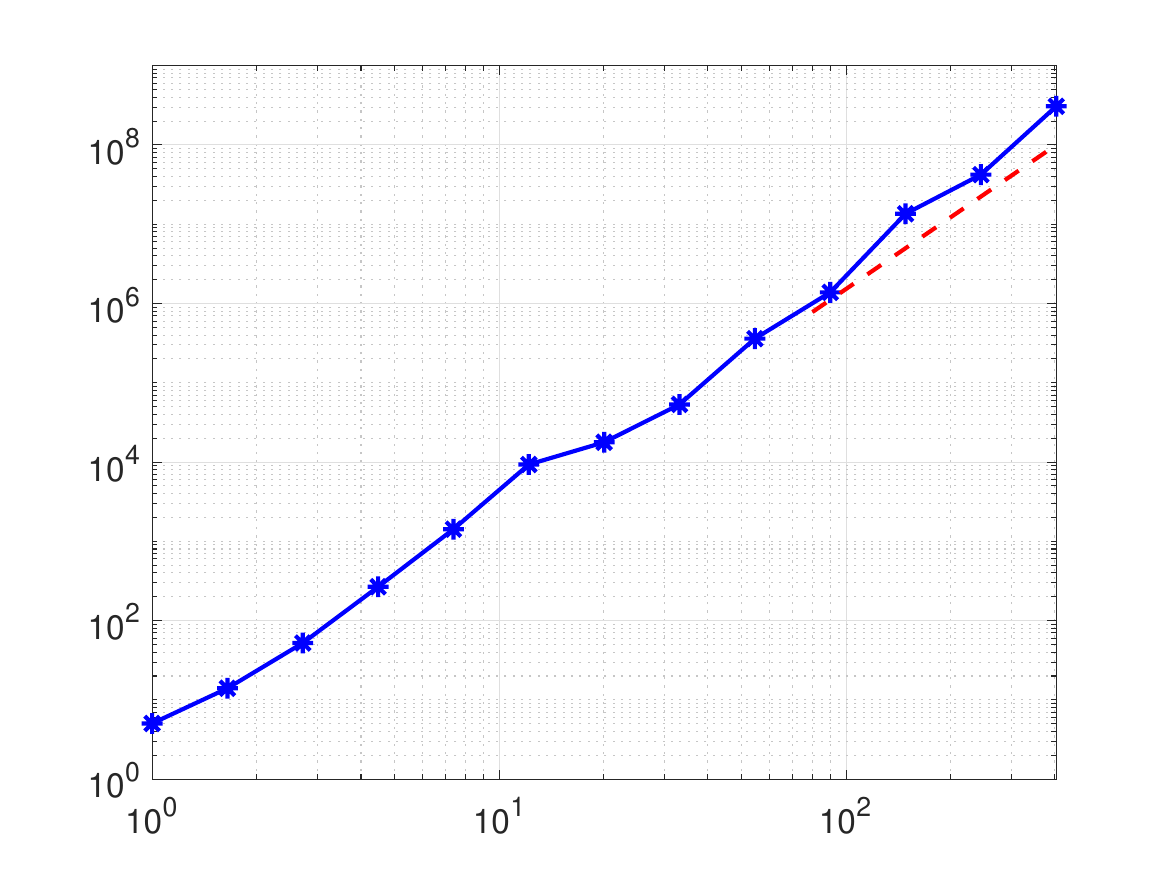}
        \put(14,77){$\boldmath{|\frac{\partial y_{\textbf{S6}}}{\partial w} / \frac{\partial y_{\textbf{S4D}}}{\partial b}|,\; c \rightarrow \infty}$}
        \put(-2,18){\scriptsize \rotatebox{90}{$|\frac{\partial y_{\text{S6}}}{\partial w} / \frac{\partial y_{\text{S4D}}}{\partial b}|$}}
        \put(50,-2){\scriptsize $c$}
        \end{overpic}
    \end{minipage}
    \begin{minipage}{0.32\textwidth}
        \begin{overpic}[width = 1\textwidth]{./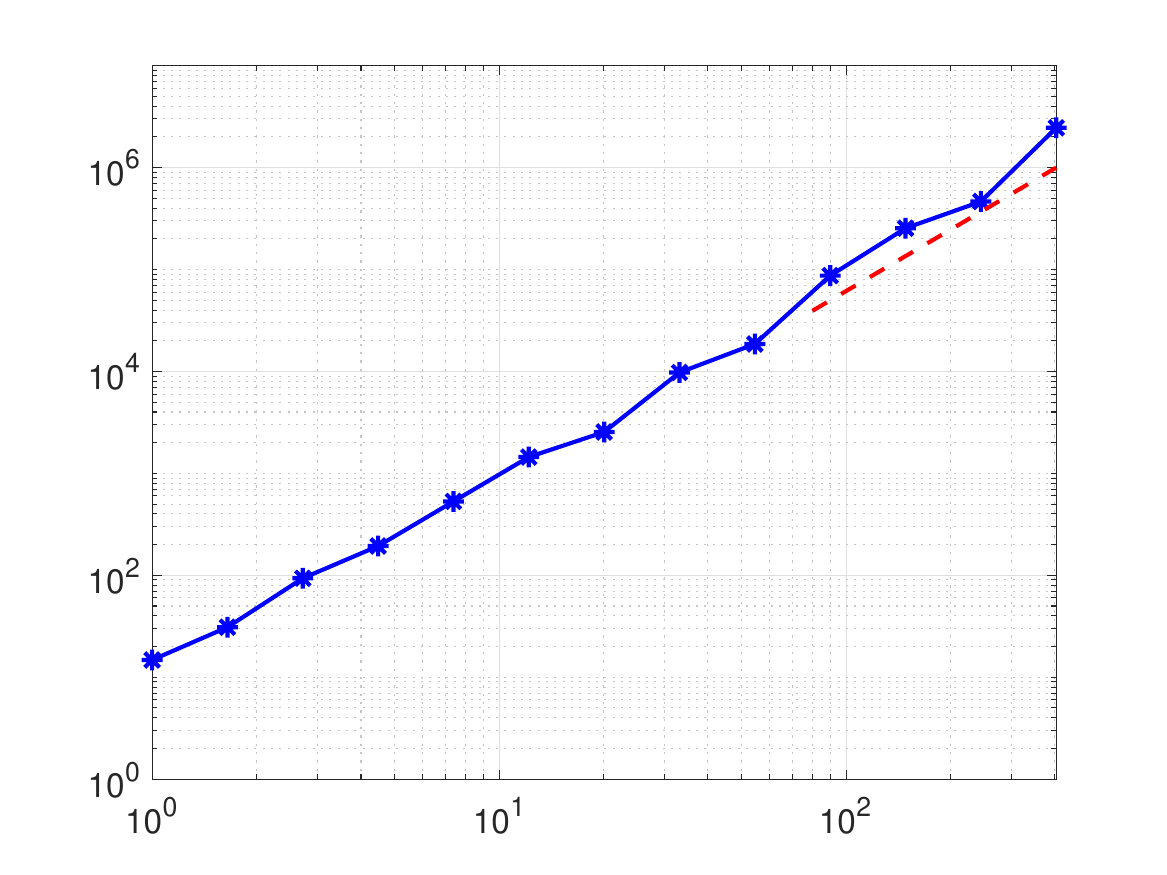}
        \put(14,77){$\boldmath{|\frac{\partial y_{\textbf{S6}}}{\partial b} / \frac{\partial y_{\textbf{S4D}}}{\partial b}|,\; c \rightarrow \infty}$}
        \put(-2,18){\scriptsize \rotatebox{90}{$|\frac{\partial y_{\text{S6}}}{\partial b} / \frac{\partial y_{\text{S4D}}}{\partial b}|$}}
        \put(50,-2){\scriptsize $c$}
        \end{overpic}
    \end{minipage}
    \begin{minipage}{0.32\textwidth}
        \begin{overpic}[width = 1\textwidth]{./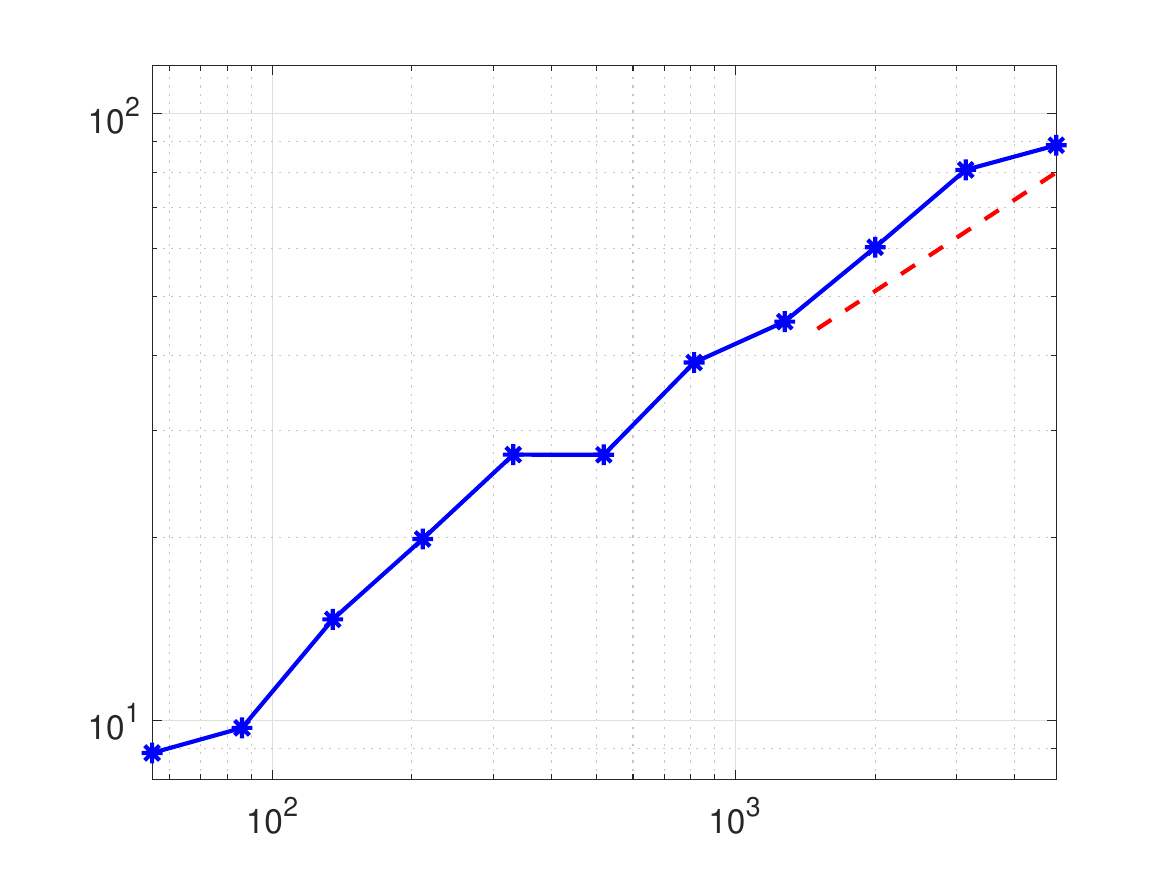}
        \put(14,77){$\boldmath{|\frac{\partial y_{\textbf{S6}}}{\partial b} / \frac{\partial y_{\textbf{S4D}}}{\partial b}|,\; L \rightarrow \infty}$}
        \put(-2,18){\scriptsize \rotatebox{90}{$|\frac{\partial y_{\text{S6}}}{\partial b} / \frac{\partial y_{\text{S4D}}}{\partial b}|$}}
        \put(50,-2){\scriptsize $L$}
        \end{overpic}
    \end{minipage}
    \caption{Numerical experiments to verify~\Cref{thm.stability}, where we compute the ratio between the gradients with respect to the S6 parameters and S4D parameters. For the first two figures, we fix $L = 100$; for the last figure, we fix $c = 1$. The gradients are computed using closed algebraic formulas. The red reference lines in the three log-log plots have slopes of $3$, $2$, and $1/2$, respectively.}
    \label{fig:numstability}
\end{figure}

\section{\text{B}$_{\boldsymbol{2}}$S$_{\boldsymbol{6}}$: an expressive, gently biased, and stable selective model}\label{sec:improvements} 

We introduce $\bbmamba$ (Block-Biased-S6) to address the shortcomings of S6. Given an input sequence $\mathbf{u} = (\mathbf{u}_1, \ldots, \mathbf{u}_L)$, where $\mathbf{u}_k \in \mathbb{R}^{d \times 1}$ for $1 \leq k \leq L$, we assume that $d = h \times p$ for two hyperparameters $h, p \geq 1$. We then partition each input vector into $h$ sub-vectors of size $p$:
$\mathbf{u}_k = \begin{bmatrix}
    ^1\mathbf{u}_k^\top & \cdots & ^h\mathbf{u}_k^\top
\end{bmatrix}^\top$. Note that we use a left-superscript for the block index. Then, the output of a $\bbmamba$ unit is a sequence $\Gamma_{\bbmamba}(\mathbf{u}) = \mathbf{y} = (\mathbf{y}_1, \ldots, \mathbf{y}_L)$, where each output vector can also be partitioned into $h$ sub-vectors of size $p$, i.e., $\mathbf{y}_k = \begin{bmatrix}
    ^1\mathbf{y}_k^\top & \cdots & ^h\mathbf{y}_k^\top
\end{bmatrix}^\top$, defined by
\begin{equation}\label{eq.B2S6}
    ^j\mathbf{x}^{(i)}_{k} = {^j\overline{\mathbf{A}}^{(i)}_k}\; ^j\mathbf{x}^{(i)}_{k-1} + {^j\overline{\mathbf{B}}^{(i)}_k}\; ^j\mathbf{u}_k^{(i)}, \quad ^j\mathbf{y}_k^{(i)} = {^j\overline{\mathbf{C}}_k}\; {^j\mathbf{x}^{(i)}_k}, \qquad 1 \leq i \leq p, \qquad 1 \leq j \leq h,
\end{equation}
where $^j\mathbf{x}^{(i)}_{k} \in \R^n$, $^j\mathbf{u}^{(i)}_k \in \R, {^j\mathbf{y}^{(i)}_k} \in \R$, and the matrices $^j\overline{\mathbf{A}}^{(i)}_k \in \R^{n \times n}$, $^j\overline{\mathbf{B}}^{(i)}_k \in \R^{n \times 1}$, and $^j\overline{\mathbf{C}}_k \in \R^{1 \times n}$ are input-dependent and computed at each step $k$ as
\begin{equation}\label{eq.ZOHbbmamba}
    \begin{aligned}
    &^j\overline{\mathbf{A}}^{(i)}_k \!=\! \exp({^j\Delta_k^{(i)}} \mathbf{A}), \quad 
    {^j\Delta_k^{(i)}} \!=\! \text{softplus}(^j\mathbf{w}^\top \; {^j\mathbf{u}_k} + {^jb^{(i)}}),\\
    &^j\overline{\mathbf{B}}^{(i)}_k \!=\! \mathbf{A}^{-1} 
    ({^j\overline{\mathbf{A}}^{(i)}_k} - \mathbf{I}) (^j\mathbf{B}_{\text{weight}} \;{^j\mathbf{u}_k} + {^j\mathbf{B}^{(i)}_{\text{bias}}}), \quad 
    \overline{\mathbf{C}}_k \!=\! {^j\mathbf{u}_k^\top} \; {^j\mathbf{C}},
    \end{aligned}
    \qquad 1 \leq k \leq L.
\end{equation}
The trainable parameters are $\mathbf{A} \in \R^{n \times n}$, $^j\mathbf{B}_{\text{weight}} \in \R^{n \times p}$, $^j\mathbf{B}^{(i)}_{\text{bias}} \in \R^{n \times 1}$, $^j\mathbf{C} \in \R^{p \times n}$, $^j\mathbf{w} \in \R^{p \times 1}$, and $^jb^{(i)} \in \R$, for every $1 \leq i \leq p$ and $1 \leq j \leq h$. The pseudocode for $\bbmamba$ is given in~\Cref{alg:B2S6}, which compares to~\Cref{alg:S6} for S6 found in the Mamba paper~\cite{gu2023mamba}, where $s_\Delta(\cdot) = \text{Broadcast}_d(\text{Linear}_1(\cdot))$ and $^js_\Delta(\cdot) = \text{Broadcast}_p(\text{Linear}_1(\cdot))$ for every $1 \leq j \leq h$.

\begin{minipage}{0.45\textwidth}
\begin{algorithm}[H]
    \caption{S6 Forward Pass}\label{alg:S6}
    \noindent\scriptsize{\textbf{Input:} $\mathbf{x} : (B\; L \; d)$} \\
    \noindent\scriptsize{\textbf{Output:} $\mathbf{y} : (B\; L \; d)$}
    \begin{algorithmic}[1]
        \Statex \scriptsize{\newfixedline{}}
        \Statex \scriptsize{\onefixedline{}}
        \State \scriptsize{\fixedline{$\mathbf{B}_{\text{S6}}: (B\; L \; n), \quad \mathbf{B}_{\text{S6}}(i,j,:) \gets \mathbf{B} \mathbf{x}(i,j,:)$}}
        \Statex \scriptsize{\twofixedline{}}
        \State \scriptsize{\fixedline{$\mathbf{C}_{\text{S6}}: (B\; L \; n), \quad \mathbf{C}_{\text{S6}}(i,j,:) \gets \mathbf{x}(i,j,:)^\top \mathbf{C}$}}
        \State \scriptsize{\fixedline{$\Delta: (B\; L \; d) \gets \text{softplus}(s_\Delta(\mathbf{x}) + \mathbf{b})$}}
        \State \scriptsize{\fixedline{$\overline{\mathbf{A}}, \overline{\mathbf{B}}: (B\;L\;d\;n) \gets \text{discretize}(\Delta, \mathbf{A}, \mathbf{B}_{\text{S6}})$}}
        \State \scriptsize{\fixedline{$\mathbf{y} \gets \text{SSM}(\overline{\mathbf{A}}, \overline{\mathbf{B}}, \mathbf{C}_{\text{S6}})(\mathbf{x})$}}
        \Statex \scriptsize{\threefixedline{}}
    \end{algorithmic}
\end{algorithm}
\end{minipage}
\hfill
\begin{minipage}{0.53\textwidth}
\begin{algorithm}[H]
    \renewcommand{\algorithmicfor}{\textcolor{Mahogany}{\textbf{parfor}}}
    \renewcommand{\algorithmicdo}{\textcolor{Mahogany}{\textbf{do}}}
    \caption{$\bbmamba$ Forward Pass}\label{alg:B2S6}
    \noindent\scriptsize{\textbf{Input:} $\mathbf{x} : (B\; L \; d)$} \\
    \noindent\scriptsize{\textbf{Output:} $\mathbf{y} : (B\; L \; d)$}
    \begin{algorithmic}[1]
        \For{\scriptsize{\fixedline{$\textcolor{Mahogany}{j = 1:h}$}}} \scriptsize{\Comment{independent for every block}}
            \State \scriptsize{\fixedline{$^jI \gets (jp - p +1):jp$} \Comment{block index}}
            \State \scriptsize{\fixedline{$^j\mathbf{B}: (B\; L \; n), \quad ^j\mathbf{B}(i,j,:) \gets {^j\mathbf{B}_{\text{weight}}} \mathbf{x}(i,j,\textcolor{Mahogany}{^jI})$}}
            \State \scriptsize{\fixedline{$^j\mathbf{B}: \textcolor{Mahogany}{(B\;L\;p\;n)} \gets \text{Broadcast}_p({^j\mathbf{B}}) + \textcolor{Mahogany}{\text{Broadcast}_L({^j\mathbf{B}_{\text{bias}}})}$}}
            \State \scriptsize{\fixedline{$^j\mathbf{C}: (B\; L \; n), \quad {^j\mathbf{C}}(i,j,:) \gets \mathbf{x}(i,j,\textcolor{Mahogany}{^jI})^\top {^j\mathbf{C}}$}}
            \State \scriptsize{\fixedline{$^j\Delta: \textcolor{Mahogany}{(B\; L \; p)} \gets \text{softplus}({^js}_\Delta(\mathbf{x}(:,:,\textcolor{Mahogany}{^jI})) + {^j\mathbf{b}})$}}
            \State \scriptsize{\fixedline{$^j\overline{\mathbf{A}}, {^j\overline{\mathbf{B}}}: \textcolor{Mahogany}{(B\;L\;p\;n)} \gets \text{discretize}({^j\Delta}, \mathbf{A}, {^j\mathbf{B}})$}}
            \State \scriptsize{\fixedline{$\mathbf{y}(:,:,\textcolor{Mahogany}{^jI}) \gets \text{SSM}(^j\overline{\mathbf{A}}, {^j\overline{\mathbf{B}}}, {^j\mathbf{C}})(\mathbf{x}(:,:,\textcolor{Mahogany}{^jI}))$}}
            \renewcommand{\algorithmicfor}{\textcolor{black}{\textbf{parfor}}}
        \EndFor \scriptsize{\fixedline{}}
    \end{algorithmic}
\end{algorithm}
\end{minipage}\vspace{+0.4cm}

Comparing the $\bbmamba$ unit in~\cref{eq.B2S6} to the S6 unit in~\cref{eq.S6}, we highlight two key differences (see~\Cref{fig:B2S6}). First, $\bbmamba$ partitions the $d$-dimensional input into $h$ blocks of $p$-dimensional sub-vectors and applies an independent recurrent unit to each block. Second, we introduce a bias term $^j\mathbf{B}_{\text{bias}}^{(i)}$ in the computation of the input matrix $^j\overline{\mathbf{B}}_k$. While this term is input-independent, it varies across channels, increasing the model's effective width. We do not claim that we invented the block unit design, which was previously suggested in the Mamba2 paper~\cite{dao2024transformers}, but in this work, we rigorously demonstrate how the combination of block structure and channel-specific bias enhances the expressiveness and generalization of the model on long-range sequence modeling tasks.

First, we show that unlike an S6 model, a $\bbmamba$ model regains the universal approximation property studied in~\Cref{thm.UAT} by introducing \textit{either} the block unit \textit{or} the bias unit.

\begin{thm}\label{thm.B2S6UAT}
    Fix a constant for ${^j\Delta^{(i)}_k}$ for all $i, j$, and $k$ in~\cref{eq.B2S6}. The following two statements hold:

    \begin{enumerate}[noitemsep,leftmargin=*]
        \item The block unit alone makes $\bbmamba$ a universal approximator. More precisely, set $^j\mathbf{B}^{(i)}_{\text{bias}} = \boldsymbol{0}$ for all $i$ and $j$, and let $\sigma: \R \rightarrow \R$ be any Lipschitz continuous, non-polynomial activation function. Given any continuous function $G: [0,1]^{L-1} \times \{1\} \rightarrow \R$ and any $\epsilon > 0$. There exist some $h$ and $p$ with $h \times p = d \geq 1$, $n \geq 1$, and a choice of parameters $\mathbf{M}, \mathbf{N}, \boldsymbol{\theta}, \mathbf{A}, {^j\mathbf{B}_{\text{weight}}}$, and $^j\mathbf{C}$, where $1 \leq j \leq h$, such that the map $\tilde{G}$ in~\cref{eq.onelayer} with $\Gamma = \Gamma_{\bbmamba}$ in~\cref{eq.B2S6} satisfies that
        \[
            |\tilde{G}(\mathbf{u}) - G(\mathbf{u})| \leq \epsilon, \qquad \text{for any } \mathbf{u} \in [0,1]^{L-1} \times \{1\}.
        \]

        \item The bias unit alone makes $\bbmamba$ a universal approximator. More precisely, set $h = 1$ and $p = d$, and let $\sigma: \R \rightarrow \R$ be any Lipschitz continuous, non-polynomial activation function. Given any continuous function $G: [0,1]^{L-1} \times \{1\} \rightarrow \R$ and any $\epsilon > 0$. There exist some $d \geq 1$, $n \geq 1$, and a choice of parameters $\mathbf{M}, \mathbf{N}, \boldsymbol{\theta}, \mathbf{A}, {^1\mathbf{B}_{\text{weight}}}$, ${^1\mathbf{B}_{\text{bias}}^{(i)}}$ , and $^1\mathbf{C}$, where $1 \leq i \leq d$, such that the map $\tilde{G}$ in~\cref{eq.onelayer} with $\Gamma = \Gamma_{\bbmamba}$ in~\cref{eq.B2S6} satisfies that
        \[
            |\tilde{G}(\mathbf{u}) - G(\mathbf{u})| \leq \epsilon, \qquad \text{for any } \mathbf{u} \in [0,1]^{L-1} \times \{1\}.
        \]
    \end{enumerate}
\end{thm}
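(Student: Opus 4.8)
The plan is to reduce both statements to the classical shallow-network (ridge-function) universal approximation theorem~\cite{pinkus1999approximation}: for any continuous non-polynomial $\sigma$, finite sums $\sum_i N_i\,\sigma(\mathbf{a}_i^\top\mathbf{z}+\theta_i)$ are dense in $C(K)$ for compact $K$. Concretely, I would fix $\Delta$ to the prescribed constant (so every $\overline{\mathbf{A}}=\exp(\Delta\mathbf{A})$ is the same diagonal matrix), take $\mathbf{A}$ diagonal with distinct negative entries $A_1,\dots,A_n$ and $n\ge L$, write $\mu_\ell=\exp(\Delta A_\ell)$ and $\beta_\ell=(\mu_\ell-1)/A_\ell\neq 0$, and then show that with these choices the pre-activation vector $\Gamma_{\bbmamba}(\mathbf{M}u_1,\dots,\mathbf{M}u_L)_L+\boldsymbol\theta$ has entries that are arbitrary affine functionals of a suitable transform of $(u_1,\dots,u_{L-1})$, with coefficients choosable independently across the $d$ output coordinates. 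Throughout I exploit $u_L=1$ both to keep $\overline{\mathbf{C}}_L$ from vanishing and to absorb one coefficient into $\boldsymbol\theta$.

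For the bias-only case ($h=1$, $p=d$), the key move is to set $\mathbf{B}_{\text{weight}}=\boldsymbol 0$. Then $\overline{\mathbf{B}}^{(i)}_k=\mathbf{A}^{-1}(\overline{\mathbf{A}}-\mathbf{I})\,{}^1\mathbf{B}^{(i)}_{\text{bias}}$ is input-independent and channel-specific, so the recurrence becomes a genuine linear S4D-type system whose state is linear (not quadratic) in the inputs. Unrolling and reading out at $k=L$ with $\overline{\mathbf{C}}_L=\mathbf{m}^\top{}^1\mathbf{C}=:\mathbf{c}^\top$ (taking $\mathbf{m}=\mathbf{1}$) gives $\mathbf{y}^{(i)}_L=\sum_{k=1}^L\delta^{(i)}_k u_k$ with $\delta^{(i)}_k=\sum_{\ell=1}^n w^{(i)}_\ell\mu_\ell^{\,L-k}$ and $w^{(i)}_\ell=c_\ell\beta_\ell({}^1\mathbf{B}^{(i)}_{\text{bias}})_\ell$. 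Choosing $\mathbf{c}$ with all entries nonzero makes $w^{(i)}_\ell$ freely and independently adjustable in each channel through ${}^1\mathbf{B}^{(i)}_{\text{bias}}$; since the $\mu_\ell$ are distinct and $n\ge L$, the associated Vandermonde-type system is solvable, so $(\delta^{(i)}_1,\dots,\delta^{(i)}_L)$ can hit any target. Hence each channel realizes an arbitrary affine functional of $(u_1,\dots,u_{L-1})$, and the density theorem closes the argument. This is essentially the S4D argument of~\Cref{thm.UAT}, with the channel-specific bias ${}^1\mathbf{B}^{(i)}_{\text{bias}}$ playing the role of S4D's $\mathbf{B}^{(i)}$ -- precisely the degree of freedom S6 lacks.

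For the block-only case I would take $p=1$, $h=d$, set ${}^j\mathbf{B}^{(i)}_{\text{bias}}=\boldsymbol 0$, and use $\mathbf{m}=\mathbf{1}$. Now ${}^j\overline{\mathbf{B}}_k$ is proportional to $u_k$ while the scalar input to each one-dimensional block is also $u_k$, so the state -- and hence the readout -- is quadratic in the inputs: block $j$ produces $\mathbf{y}^{\,j}_L=\sum_{k=1}^L\gamma^{\,j}_k u_k^2$, where $\gamma^{\,j}_k=\sum_\ell w^{\,j}_\ell\mu_\ell^{\,L-k}$ is again realizable to any target by the same distinct-modes Vandermonde argument, independently per block. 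The extra wrinkle is the squaring: I would set $v_k=u_k^2$ and note that $\Phi:(u_1,\dots,u_{L-1})\mapsto(u_1^2,\dots,u_{L-1}^2)$ is a homeomorphism of $[0,1]^{L-1}$, so approximating the continuous function $G\circ\Phi^{-1}$ in the $\mathbf{v}$ coordinates is equivalent to approximating $G$ in the $\mathbf{u}$ coordinates. The pre-activations are then arbitrary affine functionals of $\mathbf{v}$, and the density theorem applies in the $\mathbf{v}$ variable.

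I expect the main obstacle to be this quadratic dependence in the block-only case: unlike S4D and unlike the bias-only reduction, the shared embedding forces the readout to be a quadratic rather than affine functional of the raw inputs, and the argument only closes because squaring is a homeomorphism of $[0,1]$ -- which is exactly why the theorem is stated on $[0,1]^{L-1}\times\{1\}$ and not on a domain containing negative coordinates. The remaining work is bookkeeping: verifying that the product-form weights $w_\ell=c_\ell\beta_\ell b_\ell$ sweep out all of $\R^n$ (needing $\mu_\ell\neq 1$ so $\beta_\ell\neq 0$, distinct $\mu_\ell$, and nonzero $\mathbf{c}$), and that $u_L=1$ both prevents $\overline{\mathbf{C}}_L$ from collapsing the output and supplies the constant absorbed into $\boldsymbol\theta$.
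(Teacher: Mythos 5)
Your proof is correct, and its overall architecture matches the paper's: both handle the bias-only case by observing that setting $^1\mathbf{B}_{\text{weight}} = \boldsymbol{0}$ collapses $\bbmamba$ into a channel-wise S4D-type linear system feeding a ridge expansion, and both handle the block-only case with the same squaring trick, exploiting that $u \mapsto u^2$ is a homeomorphism of $[0,1]$ (the paper phrases this as approximating $H(\mathbf{u}) = G(\sqrt{\mathbf{u}})$ with the bias-only system and then transplanting its parameters into the block system; you phrase it as applying density to $G\circ\Phi^{-1}$ in the squared coordinates --- the content is identical, down to the observation that this is where the restriction to nonnegative inputs is used). The one genuine difference is the ingredient used to realize the ridge directions. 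The paper reduces statement 2 to Part 1 of \Cref{thm.UAT}, whose proof approximates the target convolution kernel by an LTI system via the cited result of~\cite{shida2024state}, and consequently must invoke the Lipschitz hypothesis on $\sigma$ to propagate that kernel-approximation error through the activation. You instead realize the coefficients \emph{exactly}: with $n \geq L$, distinct negative diagonal entries of $\mathbf{A}$, hence distinct modes $\mu_\ell \in (0,1)$, the map $\mathbf{w} \mapsto \bigl(\sum_{\ell} w_\ell \mu_\ell^{L-k}\bigr)_{k=1}^{L}$ is onto $\R^L$ by Vandermonde invertibility, and the product structure $w_\ell = c_\ell \beta_\ell b_\ell$ with $c_\ell, \beta_\ell \neq 0$ lets the channel-specific (resp.\ block-specific) parameters sweep all of $\R^n$ independently per channel. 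This buys you a self-contained argument that realizes arbitrary affine functionals of the (possibly squared) inputs exactly rather than approximately, and as a byproduct your proof needs only continuity of $\sigma$ for the density theorem of~\cite{pinkus1999approximation}, not Lipschitz continuity. The paper's route is shorter because it reuses the machinery already built for \Cref{thm.UAT}; yours is logically independent of that theorem and somewhat stronger in its hypotheses on $\sigma$.
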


The block design and channel-specific bias in the $\bbmamba$ model significantly enhance its expressiveness, enabling it to handle more complex sequential tasks that benefit from wider neural networks. To illustrate this, we revisit the experiment from~\cref{sec:UAT}, this time using only the block structure (without the bias term), noting that the addition of the bias would only further improve performance. As shown in~\Cref{fig:wavesum}, unlike an S6 model, the performance of the $\bbmamba$ model improves substantially as $d$ increases, holding the ratio $h/p$ constant.

In contrast to the S6 model, the $\bbmamba$ architecture introduces a gentler inductive bias that is better suited for long-range tasks. Instead of~\Cref{thm.inductivebias}, we now establish the following result.

\begin{thm}\label{thm.B2S6inductivebias}
    Fix a $k_0 < L$ and let $\mathbf{A} \in \R^{n \times n}$ be a diagonal matrix with negative diagonal entries. For a.e.\ input sequence $\mathbf{u} = (\mathbf{u}_1, \ldots, \mathbf{u}_L) \in \R^{L}$ such that there exist $j_1$ and $j_2$ with ${^{j_1}\mathbf{w}^\top}\; {^{j_1}\mathbf{u}_{k_0}} > 0$ and ${^{j_2}\mathbf{w}^\top}\; {^{j_2}\mathbf{u}_{k_0}} < 0$, and a.e.\ parameters $^j\mathbf{B}_{\text{weight}}$, $^j\mathbf{B}_{\text{bias}}^{(i)}$, $^j\mathbf{C}$, and $^j{b}^{(i)}$, where $1 \leq j \leq h$ and $1 \leq i \leq p$, let $S_{\bbmamba, k}$ be defined in~\cref{eq.relgrad}, where $\Gamma = \Gamma_{\bbmamba}$ is defined in~\cref{eq.B2S6}. We have
    \[
        S_{\bbmamba, k}((\mathbf{u}_1, \ldots, \mathbf{u}_{k_0-1}, c\mathbf{u}_{k_0}, \mathbf{u}_{k_0+1}, \ldots, \mathbf{u}_L)) = \begin{cases}
            \mathcal{O}(|c|^{-2}) &, k < k_0, \\
            \mathcal{O}(|c|^{-1}) &, k = k_0, \\
            \Theta(1) &, k > k_0, \\
        \end{cases} \qquad c \rightarrow \pm \infty.
    \]
\end{thm}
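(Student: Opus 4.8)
The plan is to exploit the block structure of $\bbmamba$, which makes the Jacobian block-diagonal. Since the output block $^j\mathbf{y}_L$ is produced by a recurrence that only touches the input block $^j\mathbf{u}_{k'}$ (the parameters $^j\mathbf{w}, {^j\mathbf{B}_{\text{weight}}}, {^j\mathbf{B}^{(i)}_{\text{bias}}}, {^j\mathbf{C}}, {^jb^{(i)}}$ are block-specific and the shared $\mathbf{A}$ does not couple blocks), we have $\partial\,{^j\mathbf{y}_L}/\partial\,{^{j'}\mathbf{u}_{k'}}=\mathbf{0}$ for $j\neq j'$, so $\|\mathbf{J}_{k'}\|_F^2=\sum_{j=1}^h\|{^j\mathbf{J}_{k'}}\|_F^2$ with $^j\mathbf{J}_{k'}=\partial\,{^j\mathbf{y}_L}/\partial\,{^j\mathbf{u}_{k'}}\in\R^{p\times p}$. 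This reduces everything to the raw (unnormalized) block Jacobian norms $\|{^j\mathbf{J}_{k'}}\|_F$ as $c\to\pm\infty$, after which I aggregate by keeping the dominant block and then normalize. First I would classify each block by the sign of the argument $c\,{^j\mathbf{w}^\top}{^j\mathbf{u}_{k_0}}+{^jb^{(i)}}$ driving $^j\Delta^{(i)}_{k_0}$: for a fixed sign of $c$, a block is a \emph{reset block} if this argument tends to $+\infty$ (so $^j\overline{\mathbf{A}}_{k_0}\to\mathbf{0}$) and a \emph{preserve block} if it tends to $-\infty$ (so $^j\overline{\mathbf{A}}_{k_0}\to\mathbf{I}$ and $^j\overline{\mathbf{B}}_{k_0}\to\mathbf{0}$ exponentially). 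The hypothesis that both $j_1$ with $^{j_1}\mathbf{w}^\top{^{j_1}\mathbf{u}_{k_0}}>0$ and $j_2$ with $^{j_2}\mathbf{w}^\top{^{j_2}\mathbf{u}_{k_0}}<0$ exist guarantees that for \emph{either} sign of $c$ there is at least one reset block and at least one preserve block.

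For each block type I would run the single-block analysis already developed in the proof of~\Cref{thm.inductivebias} (a reset block reproduces Case~1, a preserve block reproduces Case~2), but tracking the unnormalized norms. In a reset block, $^j\overline{\mathbf{A}}_{k_0}\to\mathbf{0}$ annihilates every contribution routed through step $k_0$, while $^j\overline{\mathbf{B}}_{k_0}$ and the scaled input $^j\mathbf{u}_{k_0}$ each grow like $c$, so the $k_0$-contribution to the final state grows like $c^2$. The crucial point is how this large state propagates: for $k<k_0$ every path to $^j\mathbf{y}_L$ passes through $^j\overline{\mathbf{A}}_{k_0}$ and is exponentially small; for $k=k_0$ the direct bilinear term has gradient of order $c$; and for $k>k_0$ the input $^j\mathbf{u}_k$ perturbs the output either through $^j\overline{\mathbf{C}}_L={^j\mathbf{u}_L^\top}{^j\mathbf{C}}$ (when $k=L$) or through $^j\overline{\mathbf{A}}_k$ multiplying the $\Theta(c^2)$ state, giving order $c^2$. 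Thus a reset block contributes $(\text{exp. small},\,\Theta(c),\,\Theta(c^2))$ for $(k<k_0,\,k=k_0,\,k>k_0)$. A preserve block instead has $^j\overline{\mathbf{A}}_{k_0}\to\mathbf{I}$, so no contribution is killed and the state stays $\Theta(1)$; its block Jacobian is $\Theta(1)$ for $k\neq k_0$ and exponentially small for $k=k_0$. Genericity of the inputs and parameters ensures the leading coefficients in each $\Theta(\cdot)$ are nonzero and the signs $^j\mathbf{w}^\top{^j\mathbf{u}_{k_0}}$ strictly nonzero.

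Aggregating via $\|\mathbf{J}_k\|_F^2=\sum_j\|{^j\mathbf{J}_k}\|_F^2$ and keeping the dominant block gives $\|\mathbf{J}_k\|_F=\Theta(1)$ for $k<k_0$ (the preserve block supplies a $\Theta(1)$ floor that the reset block's exponential decay cannot beat), $\|\mathbf{J}_{k_0}\|_F=\Theta(c)$ (reset block dominant), and $\|\mathbf{J}_k\|_F=\Theta(c^2)$ for $k>k_0$ (reset block dominant). Since $k_0<L$, there is at least one index $k>k_0$, so the denominator $\sum_{k'}\|\mathbf{J}_{k'}\|_F$ is $\Theta(c^2)$; dividing yields $S_{\bbmamba,k}=\Theta(|c|^{-2}),\,\Theta(|c|^{-1}),\,\Theta(1)$ in the three regimes, which in particular gives the stated $\mathcal{O}$ and $\Theta$ bounds. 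The role of the two-sided hypothesis is precisely that the reset block produces the $\Theta(c^2)$ normalizing scale while the preserve block keeps the influence of past inputs only \emph{polynomially}, rather than exponentially as in S6 Case~1, suppressed; requiring blocks of both signs also makes the conclusion symmetric under $c\to+\infty$ and $c\to-\infty$. I expect the main obstacle to be the fully rigorous bookkeeping of the competing dependency paths in the reset block, in particular showing that the $\Theta(c^2)$ growth of the final state genuinely transmits to $\mathbf{J}_k$ for \emph{every} $k>k_0$ (both the $k=L$ path through $^j\overline{\mathbf{C}}_L$ and the intermediate paths through $^j\overline{\mathbf{A}}_k$) while the $k=k_0$ gradient is held to exactly order $c$, since it is this gap between $\Theta(c)$ and $\Theta(c^2)$ that produces the $|c|^{-1}$ versus $|c|^{-2}$ separation.
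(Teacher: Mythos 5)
Your proposal is correct and takes essentially the same route as the paper's proof: both exploit the block-diagonal structure of the Jacobian $\mathbf{J}_r = \mathrm{diag}({^1\mathbf{J}_r}, \ldots, {^h\mathbf{J}_r})$, reduce each block to the two asymptotic cases already established in the proof of~\Cref{thm.inductivebias} (your ``reset'' and ``preserve'' blocks are exactly the paper's $c \rightarrow \infty$ and $c \rightarrow -\infty$ cases for S6, with the two-sided sign hypothesis guaranteeing one block of each type for either sign of $c$), and then aggregate by keeping the dominant block so that the denominator is $\Theta(c^2)$ and the three regimes yield $\Theta(|c|^{-2})$, $\Theta(|c|^{-1})$, and $\Theta(1)$. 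The paper's write-up is terser but identical in substance.
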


If each $^j\mathbf{w}$ is randomly initialized for $1 \leq j \leq h$, then the probabilities that a given input satisfies $^j \mathbf{w}^\top \; ^j\mathbf{u}_{k_0} < 0$ and $^j \mathbf{w}^\top \; ^j\mathbf{u}_{k_0} > 0$ are equal. As a result, the probability that the assumptions in~\Cref{thm.B2S6inductivebias} are violated vanishes exponentially with the number of blocks $h$. Compared to~\Cref{thm.inductivebias}, this means that $\bbmamba$ exhibits a significantly milder inductive bias in the presence of large inputs, avoiding exponentially decaying effects as the input magnitude increases. This behavior is confirmed by the experiment in~\cref{sec:inductivebias}, where $\bbmamba$ maintains strong performance even when $\sigma_1$ is very small and $\sigma_2$ is very large. Lastly, while the block and bias components improve expressiveness and inductive bias, they do not directly resolve the training instability that arises with long sequences. Therefore, for LRA tasks, we reduce the learning rates of $^j\mathbf{w}$ and $^j b^{(i)}$ to improve training stability.

\section{Experiments}\label{sec:experiments}

\textbf{Ablation.} In this paper, we show that the $\bbmamba$ model achieves strong performance on the LRA benchmark. Compared to the standard Mamba model, our $\bbmamba$ variant incorporates a multihead structure and a bias term $\mathbf{B}_{\text{bias}}$, and uses complex-valued parameters for $\text{diag}(\mathbf{A})$, $\mathbf{B}_{\text{weight}}$, and $\mathbf{B}_{\text{bias}}$ (see~\cite{ran2024provable,yu2025tuning,liu2024autocorrelation}). Before presenting full LRA results, we demonstrate the utility of each modification in~\Cref{tab:ablation}. The second column shows model accuracy without bias terms, the third without complex parameterization, and all rows below the first use the multihead structure. As seen in~\Cref{tab:ablation}, all three components contribute positively to the performance on the sCIFAR-10 task. An interesting observation is that, when $d = h \times p$ is fixed, model accuracy increases monotonically with $h$ only in the absence of $\mathbf{B}_{\text{bias}}$. This should not be surprising: if $\mathbf{B}_{\text{bias}} = \boldsymbol{0}$, the model's effective width is $h$, so larger $h$ improves expressiveness and generalization. However, once $\mathbf{B}_{\text{bias}}$ is introduced, the model already attains high effective (though non-selective) width even when $h = 1$ because $^j\mathbf{B}^{(i)}_{\text{bias}}$ varies with $i$. Thus, increasing $h$ trades off between more selective blocks, which improves the \textit{quantity} of selectivity, and reduced receptive field $^j\mathbf{u}_{k}$ per block, which impairs the \textit{quality} of each selection.

\begin{table}[!ht]
    \centering

    \setlength\extrarowheight{2pt}
    \caption{Ablation study of our $\bbmamba$ model. We train a model to learn the sCIFAR-10 task, where we fix $d = 128$ and change the number of blocks $h$ and the size of each block $p$ (see~\cref{eq.B2S6}). For each pair of $h$ and $p$, we train a full $\bbmamba$ model with complex parameters, a $\bbmamba$ model with complex parameters but no bias term, and a full $\bbmamba$ model with real parameters. A more greenish color indicates a better accuracy, while a more reddish color labels a worse accuracy.}
    
    \resizebox{0.8\textwidth}{!}{
\begin{tabular}{ccccc}
\toprule
\multicolumn{2}{c}{} &
\textbf{$\;\;$With $\mathbf{B}_{\text{bias}}$ (Complex) $\;\;$} &
\textbf{Without $\mathbf{B}_{\text{bias}}$ (Complex)} &
\textbf{$\quad\;$With $\mathbf{B}_{\text{bias}}$ (Real)$\quad\;$} \\
\cmidrule(r){1-2} \cmidrule(l){3-5}
\textbf{$h$} & \textbf{$p$} & Accuracy $\pm$ Std & Accuracy $\pm$ Std & Accuracy $\pm$ Std \\
\midrule
1   & 128 & \cellcolor[HTML]{a5f5a5}\textbf{81.56} $\pm$ 1.22 & \cellcolor[HTML]{ff9c9c}\textbf{71.77} $\pm$ 1.88 & \cellcolor[HTML]{ff1010}\textcolor{white}{\textbf{47.82}} $\textcolor{white}{\pm}$ \textcolor{white}{5.42} \\
2   & 64  & \cellcolor[HTML]{51e051}\textbf{84.83} $\pm$ 0.73 & \cellcolor[HTML]{ffc9c9}\textbf{79.44} $\pm$ 1.10 & \cellcolor[HTML]{ff3636}\textcolor{white}{\textbf{54.40}} $\textcolor{white}{\pm}$ \textcolor{white}{0.35} \\
4   & 32  & \cellcolor[HTML]{6ae66a}\textbf{83.86} $\pm$ 1.38 & \cellcolor[HTML]{b5f9b5}\textbf{80.93} $\pm$ 0.94 & \cellcolor[HTML]{ff3939}\textcolor{white}{\textbf{54.77}} $\textcolor{white}{\pm}$ \textcolor{white}{0.54} \\
8   & 16  & \cellcolor[HTML]{14d114}\textbf{87.19} $\pm$ 0.26 & \cellcolor[HTML]{a5f5a5}\textbf{81.54} $\pm$ 1.14 & \cellcolor[HTML]{ff3333}\textcolor{white}{\textbf{53.81}} $\textcolor{white}{\pm}$ \textcolor{white}{0.87} \\
16  & 8   & \cellcolor[HTML]{18d218}\textbf{87.04} $\pm$ 0.33 & \cellcolor[HTML]{6ce76c}\textbf{83.78} $\pm$ 0.79 & \cellcolor[HTML]{ff2727}\textcolor{white}{\textbf{51.79}} $\textcolor{white}{\pm}$ \textcolor{white}{3.61} \\
32  & 4   & \cellcolor[HTML]{37da37}\textbf{85.83} $\pm$ 0.80 & \cellcolor[HTML]{63e563}\textbf{84.11} $\pm$ 0.39 & \cellcolor[HTML]{ff4141}\textcolor{white}{\textbf{56.18}} $\textcolor{white}{\pm}$ \textcolor{white}{0.18} \\
64  & 2   & \cellcolor[HTML]{2dd72d}\textbf{86.30} $\pm$ 0.60 & \cellcolor[HTML]{5fe45f}\textbf{84.27} $\pm$ 0.76 & \cellcolor[HTML]{ff2929}\textcolor{white}{\textbf{52.03}} $\textcolor{white}{\pm}$ \textcolor{white}{2.04} \\
128 & 1   & \cellcolor[HTML]{44dd44}\textbf{85.32} $\pm$ 0.55 & \cellcolor[HTML]{52e052}\textbf{84.79} $\pm$ 0.43 & \cellcolor[HTML]{ff4242}\textcolor{white}{\textbf{56.45}} $\textcolor{white}{\pm}$ \textcolor{white}{0.71} \\ 
\bottomrule
\end{tabular}}

    \label{tab:ablation}
\end{table}

\textbf{Long-Range Arena.} In~\Cref{tab:LRA}, we see that our $\bbmamba$ model outperforms many selective and non-selective models on LRA. In particular, it is the first selective SSM we are aware of that achieves state-of-the-art performance on this benchmark. A more comprehensive table is found in~\Cref{app:expdetails}.

\begin{table}[!ht]
    \centering

    \caption{Test accuracies in the Long-Range Arena of different variants of SSMs. A bold number indicates the best accuracy on a task while an underlined number corresponds to the second best. Experiments are repeated with five random seeds.}
    
    \resizebox{0.85\textwidth}{!}{
        \begin{tabular}{c c c c c c c c}
         \specialrule{.1em}{.05em}{.05em}
         {Model} & \!\!\!\texttt{ListOps}\!\!\! & \texttt{Text} & \!\!\!\texttt{Retrieval}\!\!\! & \texttt{Image} & \!\!\!\!\texttt{Pathfinder}\!\!\!\! & \texttt{Path-X} & \!\!\!Avg. \\
         \hline
         S4~\cite{gu2022efficiently} & $59.60$ & $86.82$ & $90.90$ & $\underline{88.65}$ & $94.20$ & $96.35$ & $86.09$  \\
         S4D~\cite{gu2022parameterization} & $60.47$ & $86.18$ & $89.46$ & $88.19$ & $93.06$ & $91.95$ & $84.89$ \\
         S5~\cite{smith2023simplified} & $62.15$ & $\boldsymbol{89.31}$ & $91.40$ & $88.00$ & $\underline{95.33}$ & $\boldsymbol{98.58}$ & $\underline{87.46}$ \\
         S6 (Mamba)~\cite{gu2023mamba} & $38.02$ & $82.98$ & $72.14$ & $69.82$ & $69.26$ & $67.32$ & $66.59$ \\
         S7~\cite{soydan2024s7} & $\underline{63.77}$ & $87.22$ & $\boldsymbol{91.80}$ & $61.14$ & $65.62$ & $61.50$ & $71.82$ \\
         $\bbmamba$ (ours) & $\boldsymbol{63.85}$ & $\underline{88.32}$ & $\underline{91.44}$ & $\boldsymbol{88.81}$ & $\boldsymbol{95.93}$ & $\underline{97.90}$ & $\boldsymbol{87.71}$ \vspace{-0.1cm} \\
          & \tiny{$\qquad \pm 0.45$} & \tiny{$\qquad \pm 0.50$} & \tiny{$\qquad \pm 0.36$} & \tiny{$\qquad \pm 0.22$} & \tiny{$\qquad \pm 0.38$} & \tiny{$\qquad \pm 0.17$} & \\
         \specialrule{.1em}{.05em}{.05em}
    \end{tabular}}

    \label{tab:LRA}
\end{table}

\textbf{Language Tasks.} While our primary goal is not to train a large language model (LLM), we conduct a preliminary evaluation of $\bbmamba$'s language modeling capability using a sampled version of the SlimPajama-627B dataset~\cite{cerebras2023slimpajama}. Results in~\Cref{app:expLLM} indicate that our new design does not degrade Mamba's performance on language tasks, suggesting that improved long-range processing is achieved without sacrificing existing utilities, which supports its potential as a versatile foundation model.

\section{Conclusion}

In this work, we provided a theoretical analysis of Mamba's limitations in modeling long-range sequences, focusing on expressiveness, inductive bias, and training stability. We proposed a new model, $\bbmamba$, which introduces a block structure and channel-specific bias to improve upon each of these dimensions. Empirical results demonstrate that $\bbmamba$ significantly enhances performance on long-range sequence tasks while maintaining versatility across domains. Future work includes a more detailed analysis of training stability across all Mamba parameters, exploring architectural refinements to further mitigate instability, studying the training dynamics of Mamba and $\bbmamba$ to better characterize their inductive biases, and scaling up $\bbmamba$ as a language or foundation model.

\subsubsection*{Acknowledgments}

AY was partially supported by the Office of Naval Research under Grant Number N00014-23-1-2729. NBE would like to acknowledge the U.S. Department of Energy, under Contract Number DE-AC02-05CH11231 and DE-AC02-05CH11231, for providing partial support of this work.

\bibliography{references}

\begin{thebibliography}{100}

\bibitem{agarwal2023spectral}
Naman Agarwal, Daniel Suo, Xinyi Chen, and Elad Hazan.
\newblock Spectral state space models.
\newblock {\em arXiv preprint arXiv:2312.06837}, 2023.

\bibitem{allen2019learning}
Zeyuan Allen-Zhu, Yuanzhi Li, and Yingyu Liang.
\newblock Learning and generalization in overparameterized neural networks,
  going beyond two layers.
\newblock {\em Advances in neural information processing systems}, 32, 2019.

\bibitem{alonso2024state}
Carmen~Amo Alonso, Jerome Sieber, and Melanie~N Zeilinger.
\newblock State space models as foundation models: A control theoretic
  overview.
\newblock {\em arXiv preprint arXiv:2403.16899}, 2024.

\bibitem{ancona2017towards}
Marco Ancona, Enea Ceolini, Cengiz {\"O}ztireli, and Markus Gross.
\newblock Towards better understanding of gradient-based attribution methods
  for deep neural networks.
\newblock {\em arXiv preprint arXiv:1711.06104}, 2017.

\bibitem{ansuini2019intrinsic}
Alessio Ansuini, Alessandro Laio, Jakob~H Macke, and Davide Zoccolan.
\newblock Intrinsic dimension of data representations in deep neural networks.
\newblock {\em Advances in Neural Information Processing Systems}, 32, 2019.

\bibitem{arjovsky2016unitary}
Martin Arjovsky, Amar Shah, and Yoshua Bengio.
\newblock Unitary evolution recurrent neural networks.
\newblock In {\em International Conference on Machine Learning}, pages
  1120--1128. PMLR, 2016.

\bibitem{arora2019fine}
Sanjeev Arora, Simon Du, Wei Hu, Zhiyuan Li, and Ruosong Wang.
\newblock Fine-grained analysis of optimization and generalization for
  overparameterized two-layer neural networks.
\newblock In {\em International conference on machine learning}, pages
  322--332. PMLR, 2019.

\bibitem{baehrens2010explain}
David Baehrens, Timon Schroeter, Stefan Harmeling, Motoaki Kawanabe, Katja
  Hansen, and Klaus-Robert M{\"u}ller.
\newblock How to explain individual classification decisions.
\newblock {\em The Journal of Machine Learning Research}, 11:1803--1831, 2010.

\bibitem{bai2018empirical}
Shaojie Bai, J~Zico Kolter, and Vladlen Koltun.
\newblock An empirical evaluation of generic convolutional and recurrent
  networks for sequence modeling.
\newblock {\em arXiv preprint arXiv:1803.01271}, 2018.

\bibitem{bartlett2021deep}
Peter~L Bartlett, Andrea Montanari, and Alexander Rakhlin.
\newblock Deep learning: a statistical viewpoint.
\newblock {\em Acta numerica}, 30:87--201, 2021.

\bibitem{basri}
Ronen Basri, David Jacobs, Yoni Kasten, and Shira Kritchman.
\newblock The convergence rate of neural networks for learned functions of
  different frequencies.
\newblock {\em Adv. Neur. Info. Proc. Syst.}, 32, 2019.

\bibitem{beylkin2024fast}
Gregory Beylkin.
\newblock Fast convolution algorithm for state space models.
\newblock {\em arXiv preprint arXiv:2411.17729}, 2024.

\bibitem{chang2019antisymmetricrnn}
Bo~Chang, Minmin Chen, Eldad Haber, and Ed~H Chi.
\newblock Antisymmetricrnn: A dynamical system view on recurrent neural
  networks.
\newblock In {\em International Conference on Machine Learning}, 2019.

\bibitem{chaudhari2019entropy}
Pratik Chaudhari, Anna Choromanska, Stefano Soatto, Yann LeCun, Carlo Baldassi,
  Christian Borgs, Jennifer Chayes, Levent Sagun, and Riccardo Zecchina.
\newblock Entropy-sgd: Biasing gradient descent into wide valleys.
\newblock {\em Journal of Statistical Mechanics: Theory and Experiment},
  2019(12):124018, 2019.

\bibitem{chen}
Tianping Chen and Hong Chen.
\newblock Universal approximation to nonlinear operators by neural networks
  with arbitrary activation functions and its application to dynamical systems.
\newblock {\em IEEE Transactions on Neural Networks}, 6(4):911--917, 1995.

\bibitem{choromanska2015loss}
Anna Choromanska, Mikael Henaff, Michael Mathieu, G{\'e}rard~Ben Arous, and
  Yann LeCun.
\newblock The loss surfaces of multilayer networks.
\newblock In {\em Artificial intelligence and statistics}, pages 192--204.
  PMLR, 2015.

\bibitem{choromanski2020rethinking}
Krzysztof Choromanski, Valerii Likhosherstov, David Dohan, Xingyou Song,
  Andreea Gane, Tamas Sarlos, Peter Hawkins, Jared Davis, Afroz Mohiuddin,
  Lukasz Kaiser, et~al.
\newblock Rethinking attention with performers.
\newblock In {\em International Conference on Machine Learning}, 2020.

\bibitem{cybenko1989approximation}
George Cybenko.
\newblock Approximation by superpositions of a sigmoidal function.
\newblock {\em Mathematics of control, signals and systems}, 2(4):303--314,
  1989.

\bibitem{damian2022self}
Alex Damian, Eshaan Nichani, and Jason~D Lee.
\newblock Self-stabilization: The implicit bias of gradient descent at the edge
  of stability.
\newblock {\em International Conference on Learning Representations}, 2023.

\bibitem{dao2024transformers}
Tri Dao and Albert Gu.
\newblock Transformers are ssms: Generalized models and efficient algorithms
  through structured state space duality.
\newblock {\em arXiv preprint arXiv:2405.21060}, 2024.

\bibitem{dauphin2014identifying}
Yann~N Dauphin, Razvan Pascanu, Caglar Gulcehre, Kyunghyun Cho, Surya Ganguli,
  and Yoshua Bengio.
\newblock Identifying and attacking the saddle point problem in
  high-dimensional non-convex optimization.
\newblock {\em Advances in neural information processing systems}, 27, 2014.

\bibitem{dutt1993fast}
Alok Dutt and Vladimir Rokhlin.
\newblock Fast fourier transforms for nonequispaced data.
\newblock {\em SIAM Journal on Scientific computing}, 14(6):1368--1393, 1993.

\bibitem{erichson2021lipschitz}
N~Benjamin Erichson, Omri Azencot, Alejandro Queiruga, Liam Hodgkinson, and
  Michael~W Mahoney.
\newblock Lipschitz recurrent neural networks.
\newblock In {\em International Conference on Learning Representations}, 2021.

\bibitem{glorot2010understanding}
Xavier Glorot and Yoshua Bengio.
\newblock Understanding the difficulty of training deep feedforward neural
  networks.
\newblock In {\em Proceedings of the thirteenth international conference on
  artificial intelligence and statistics}, pages 249--256. JMLR Workshop and
  Conference Proceedings, 2010.

\bibitem{grazzi2024unlocking}
Riccardo Grazzi, Julien Siems, Arber Zela, J{\"o}rg~KH Franke, Frank Hutter,
  and Massimiliano Pontil.
\newblock Unlocking state-tracking in linear rnns through negative eigenvalues.
\newblock {\em International Conference on Learning Representations}, 2025.

\bibitem{gu2023mamba}
Albert Gu and Tri Dao.
\newblock Mamba: Linear-time sequence modeling with selective state spaces.
\newblock {\em arXiv preprint arXiv:2312.00752}, 2023.

\bibitem{gu2022parameterization}
Albert Gu, Karan Goel, Ankit Gupta, and Christopher R{\'e}.
\newblock On the parameterization and initialization of diagonal state space
  models.
\newblock {\em Advances in Neural Information Processing Systems},
  35:35971--35983, 2022.

\bibitem{gu2022efficiently}
Albert Gu, Karan Goel, and Christopher Re.
\newblock Efficiently modeling long sequences with structured state spaces.
\newblock In {\em International Conference on Learning Representations}, 2022.

\bibitem{gupta2022diagonal}
Ankit Gupta, Albert Gu, and Jonathan Berant.
\newblock Diagonal state spaces are as effective as structured state spaces.
\newblock {\em Advances in Neural Information Processing Systems},
  35:22982--22994, 2022.

\bibitem{hasani2022liquid}
Ramin Hasani, Mathias Lechner, Tsun-Hsuan Wang, Makram Chahine, Alexander
  Amini, and Daniela Rus.
\newblock Liquid structural state-space models.
\newblock {\em International Conference on Learning Representations}, 2023.

\bibitem{he2016deep}
Kaiming He, Xiangyu Zhang, Shaoqing Ren, and Jian Sun.
\newblock Deep residual learning for image recognition.
\newblock In {\em Proceedings of the IEEE conference on computer vision and
  pattern recognition}, pages 770--778, 2016.

\bibitem{honarpisheh2025generalization}
Arya Honarpisheh, Mustafa Bozdag, Mario Sznaier, and Octavia Camps.
\newblock Generalization error analysis for selective state-space models
  through the lens of attention.
\newblock {\em arXiv preprint arXiv:2502.01473}, 2025.

\bibitem{hu2024state}
Zheyuan Hu, Nazanin~Ahmadi Daryakenari, Qianli Shen, Kenji Kawaguchi, and
  George~Em Karniadakis.
\newblock State-space models are accurate and efficient neural operators for
  dynamical systems.
\newblock {\em arXiv preprint arXiv:2409.03231}, 2024.

\bibitem{hwang2024hydra}
Sukjun Hwang, Aakash~Sunil Lahoti, Ratish Puduppully, Tri Dao, and Albert Gu.
\newblock Hydra: Bidirectional state space models through generalized matrix
  mixers.
\newblock {\em Advances in Neural Information Processing Systems},
  37:110876--110908, 2024.

\bibitem{jacot2018neural}
A.~Jacot, F.~Gabriel, and C.~Hongler.
\newblock Neural tangent kernel: Convergence and generalization in neural
  networks.
\newblock {\em Adv. Neur. Info. Proc. Syst.}, 31, 2018.

\bibitem{kalman1960new}
Rudolph~Emil Kalman.
\newblock A new approach to linear filtering and prediction problems.
\newblock {\em J. Basic Eng}, 82(1):35--45, 1960.

\bibitem{katharopoulos2020transformers}
Angelos Katharopoulos, Apoorv Vyas, Nikolaos Pappas, and Fran{\c{c}}ois
  Fleuret.
\newblock Transformers are rnns: Fast autoregressive transformers with linear
  attention.
\newblock In {\em International conference on machine learning}, pages
  5156--5165. PMLR, 2020.

\bibitem{keskar2017large}
Nitish~Shirish Keskar, Dheevatsa Mudigere, Jorge Nocedal, Mikhail Smelyanskiy,
  and Ping Tak~Peter Tang.
\newblock On large-batch training for deep learning: Generalization gap and
  sharp minima.
\newblock {\em International Conference on Learning Representations}, 2017.

\bibitem{kidger}
Patrick Kidger and Terry~J. Lyons.
\newblock Universal approximation with deep narrow networks.
\newblock {\em CoRR}, abs/1905.08539, 2019.

\bibitem{kim2024exploring}
Sungyoon Kim, Aaron Mishkin, and Mert Pilanci.
\newblock Exploring the loss landscape of regularized neural networks via
  convex duality.
\newblock {\em International Conference on Learning Representations}, 2025.

\bibitem{kitaev2020reformer}
Nikita Kitaev, {\L}ukasz Kaiser, and Anselm Levskaya.
\newblock Reformer: The efficient transformer.
\newblock In {\em International Conference on Machine Learning}, 2020.

\bibitem{li2018visualizing}
Hao Li, Zheng Xu, Gavin Taylor, Christoph Studer, and Tom Goldstein.
\newblock Visualizing the loss landscape of neural nets.
\newblock {\em Advances in neural information processing systems}, 31, 2018.

\bibitem{li2024spmamba}
Kai Li, Guo Chen, Runxuan Yang, and Xiaolin Hu.
\newblock Spmamba: State-space model is all you need in speech separation.
\newblock {\em arXiv preprint arXiv:2404.02063}, 2024.

\bibitem{li2024videomamba}
Kunchang Li, Xinhao Li, Yi~Wang, Yinan He, Yali Wang, Limin Wang, and Yu~Qiao.
\newblock Videomamba: State space model for efficient video understanding.
\newblock In {\em European Conference on Computer Vision}, pages 237--255.
  Springer, 2024.

\bibitem{lieber2024jamba}
Opher Lieber, Barak Lenz, Hofit Bata, Gal Cohen, Jhonathan Osin, Itay
  Dalmedigos, Erez Safahi, Shaked Meirom, Yonatan Belinkov, Shai
  Shalev-Shwartz, et~al.
\newblock Jamba: A hybrid transformer-mamba language model.
\newblock {\em International Conference on Learning Representations}, 2025.

\bibitem{liu2024generalization}
Fusheng Liu and Qianxiao Li.
\newblock From generalization analysis to optimization designs for state space
  models.
\newblock {\em arXiv preprint arXiv:2405.02670}, 2024.

\bibitem{liu2024autocorrelation}
Fusheng Liu and Qianxiao Li.
\newblock Autocorrelation matters: Understanding the role of initialization
  schemes for state space models.
\newblock {\em International Conference on Learning Representations}, 2025.

\bibitem{liu2025sigma}
Ziwei Liu, Qidong Liu, Yejing Wang, Wanyu Wang, Pengyue Jia, Maolin Wang, Zitao
  Liu, Yi~Chang, and Xiangyu Zhao.
\newblock Sigma: Selective gated mamba for sequential recommendation.
\newblock In {\em Proceedings of the AAAI Conference on Artificial
  Intelligence}, volume~39, pages 12264--12272, 2025.

\bibitem{loshchilov2016sgdr}
Ilya Loshchilov and Frank Hutter.
\newblock Sgdr: Stochastic gradient descent with warm restarts.
\newblock {\em International Conference on Learning Representations}, 2017.

\bibitem{lu2021learning}
Lu~Lu, Pengzhan Jin, Guofei Pang, Zhongqiang Zhang, and George~Em Karniadakis.
\newblock Learning nonlinear operators via deeponet based on the universal
  approximation theorem of operators.
\newblock {\em Nature machine intelligence}, 3(3):218--229, 2021.

\bibitem{ma2024mamba}
Haoyu Ma, Yushu Chen, Wenlai Zhao, Jinzhe Yang, Yingsheng Ji, Xinghua Xu,
  Xiaozhu Liu, Hao Jing, Shengzhuo Liu, and Guangwen Yang.
\newblock A mamba foundation model for time series forecasting.
\newblock {\em arXiv preprint arXiv:2411.02941}, 2024.

\bibitem{mei2018mean}
Song Mei, Andrea Montanari, and Phan-Minh Nguyen.
\newblock A mean field view of the landscape of two-layer neural networks.
\newblock {\em Proceedings of the National Academy of Sciences},
  115(33):E7665--E7671, 2018.

\bibitem{muca2024theoretical}
Nicola Muca~Cirone, Antonio Orvieto, Benjamin Walker, Cristopher Salvi, and
  Terry Lyons.
\newblock Theoretical foundations of deep selective state-space models.
\newblock {\em Advances in Neural Information Processing Systems},
  37:127226--127272, 2024.

\bibitem{nie2023a}
Yuqi Nie, Nam~H Nguyen, Phanwadee Sinthong, and Jayant Kalagnanam.
\newblock A time series is worth 64 words: Long-term forecasting with
  transformers.
\newblock In {\em The Eleventh International Conference on Learning
  Representations}, 2023.

\bibitem{orvieto2023resurrecting}
Antonio Orvieto, Samuel~L Smith, Albert Gu, Anushan Fernando, Caglar Gulcehre,
  Razvan Pascanu, and Soham De.
\newblock Resurrecting recurrent neural networks for long sequences.
\newblock {\em arXiv preprint arXiv:2303.06349}, 2023.

\bibitem{parnichkun2024state}
Rom~N Parnichkun, Stefano Massaroli, Alessandro Moro, Jimmy~TH Smith, Ramin
  Hasani, Mathias Lechner, Qi~An, Christopher R{\'e}, Hajime Asama, and Stefano
  Ermon.
\newblock State-free inference of state-space models: The transfer function
  approach.
\newblock {\em International Conference on Machine Learning}, 2024.

\bibitem{pascanu2013difficulty}
Razvan Pascanu, Tomas Mikolov, and Yoshua Bengio.
\newblock On the difficulty of training recurrent neural networks.
\newblock In {\em International conference on machine learning}, pages
  1310--1318. Pmlr, 2013.

\bibitem{patro2024mamba}
Badri~Narayana Patro and Vijay~Srinivas Agneeswaran.
\newblock Mamba-360: Survey of state space models as transformer alternative
  for long sequence modelling: Methods, applications, and challenges.
\newblock {\em arXiv preprint arXiv:2404.16112}, 2024.

\bibitem{pei2025let}
Yan~Ru Pei.
\newblock Let {S}{S}{M}s be {C}onv{N}ets: State-space modeling with optimal
  tensor contractions.
\newblock {\em International Conference on Learning Representations}, 2025.

\bibitem{pinkus1999approximation}
Allan Pinkus.
\newblock Approximation theory of the mlp model in neural networks.
\newblock {\em Acta numerica}, 8:143--195, 1999.

\bibitem{qi2024s4}
Biqing Qi, Junqi Gao, Dong Li, Kaiyan Zhang, Jianxing Liu, Ligang Wu, and Bowen
  Zhou.
\newblock S4++: Elevating long sequence modeling with state memory reply.
\newblock 2024.

\bibitem{qiao2024vl}
Yanyuan Qiao, Zheng Yu, Longteng Guo, Sihan Chen, Zijia Zhao, Mingzhen Sun,
  Qi~Wu, and Jing Liu.
\newblock Vl-mamba: Exploring state space models for multimodal learning.
\newblock {\em arXiv preprint arXiv:2403.13600}, 2024.

\bibitem{raju2025perturbed}
Gokul Raju~Govinda Raju, Nikola Zubi{\'c}, Marco Cannici, and Davide
  Scaramuzza.
\newblock Perturbed state space feature encoders for optical flow with event
  cameras.
\newblock {\em arXiv preprint arXiv:2504.10669}, 2025.

\bibitem{ran2024provable}
Yuval Ran-Milo, Eden Lumbroso, Edo Cohen-Karlik, Raja Giryes, Amir Globerson,
  and Nadav Cohen.
\newblock Provable benefits of complex parameterizations for structured state
  space models.
\newblock {\em Advances in Neural Information Processing Systems},
  37:115906--115939, 2024.

\bibitem{rando2025serpent}
Stefano Rando, Luca Romani, Matteo Migliarini, Luca Franco, Denis Gudovskiy,
  and Fabio Galasso.
\newblock Serpent: Selective resampling for expressive state space models.
\newblock {\em arXiv preprint arXiv:2501.11729}, 2025.

\bibitem{romero2021ckconv}
David~W Romero, Anna Kuzina, Erik~J Bekkers, Jakub~M Tomczak, and Mark
  Hoogendoorn.
\newblock Ckconv: Continuous kernel convolution for sequential data.
\newblock In {\em International Conference on Machine Learning}, 2022.

\bibitem{rotskoff2022trainability}
Grant Rotskoff and Eric Vanden-Eijnden.
\newblock Trainability and accuracy of artificial neural networks: An
  interacting particle system approach.
\newblock {\em Communications on Pure and Applied Mathematics},
  75(9):1889--1935, 2022.

\bibitem{rusch2021unicornn}
T~Konstantin Rusch and Siddhartha Mishra.
\newblock Unicornn: A recurrent model for learning very long time dependencies.
\newblock In {\em International Conference on Machine Learning}, pages
  9168--9178. PMLR, 2021.

\bibitem{rusch2024oscillatory}
T~Konstantin Rusch and Daniela Rus.
\newblock Oscillatory state-space models.
\newblock {\em International Conference on Learning Representations}, 2025.

\bibitem{sagun2017empirical}
Levent Sagun, Utku Evci, V~Ugur Guney, Yann Dauphin, and Leon Bottou.
\newblock Empirical analysis of the hessian of over-parametrized neural
  networks.
\newblock {\em arXiv preprint arXiv:1706.04454}, 2017.

\bibitem{schafer2006recurrent}
Anton~Maximilian Sch{\"a}fer and Hans~Georg Zimmermann.
\newblock Recurrent neural networks are universal approximators.
\newblock In {\em Artificial Neural Networks--ICANN 2006: 16th International
  Conference, Athens, Greece, September 10-14, 2006. Proceedings, Part I 16},
  pages 632--640. Springer, 2006.

\bibitem{shen2025spikingssms}
Shuaijie Shen, Chao Wang, Renzhuo Huang, Yan Zhong, Qinghai Guo, Zhichao Lu,
  Jianguo Zhang, and Luziwei Leng.
\newblock {SpikingSSMs}: Learning long sequences with sparse and parallel
  spiking state space models.
\newblock In {\em Proceedings of the AAAI Conference on Artificial
  Intelligence}, volume~39, pages 20380--20388, 2025.

\bibitem{shrikumar2017learning}
Avanti Shrikumar, Peyton Greenside, and Anshul Kundaje.
\newblock Learning important features through propagating activation
  differences.
\newblock In {\em International conference on machine learning}, pages
  3145--3153. PMlR, 2017.

\bibitem{sieber2024understanding}
Jerome Sieber, Carmen Amo~Alonso, Alexandre Didier, Melanie Zeilinger, and
  Antonio Orvieto.
\newblock Understanding the differences in foundation models: Attention, state
  space models, and recurrent neural networks.
\newblock {\em Advances in Neural Information Processing Systems},
  37:134534--134566, 2024.

\bibitem{simonyan2013deep}
Karen Simonyan, Andrea Vedaldi, and Andrew Zisserman.
\newblock Deep inside convolutional networks: Visualising image classification
  models and saliency maps.
\newblock {\em International Conference on Learning Representations}, 2014.

\bibitem{smekal2024towards}
Jakub Sm{\'e}kal, Jimmy~TH Smith, Michael Kleinman, Dan Biderman, and Scott~W
  Linderman.
\newblock Towards a theory of learning dynamics in deep state space models.
\newblock {\em arXiv preprint arXiv:2407.07279}, 2024.

\bibitem{smith2023simplified}
Jimmy~T.H. Smith, Andrew Warrington, and Scott Linderman.
\newblock Simplified state space layers for sequence modeling.
\newblock In {\em The Eleventh International Conference on Learning
  Representations}, 2023.

\bibitem{cerebras2023slimpajama}
Daria Soboleva, Faisal Al-Khateeb, Robert Myers, Jacob~R Steeves, Joel
  Hestness, and Nolan Dey.
\newblock {SlimPajama: A 627B token cleaned and deduplicated version of
  RedPajama}, 2023.

\bibitem{soydan2024s7}
Taylan Soydan, Nikola Zubi{\'c}, Nico Messikommer, Siddhartha Mishra, and
  Davide Scaramuzza.
\newblock S7: Selective and simplified state space layers for sequence
  modeling.
\newblock {\em arXiv preprint arXiv:2410.03464}, 2024.

\bibitem{suh2024survey}
Namjoon Suh and Guang Cheng.
\newblock A survey on statistical theory of deep learning: Approximation,
  training dynamics, and generative models.
\newblock {\em Annual Review of Statistics and Its Application}, 12, 2024.

\bibitem{sundararajan2017axiomatic}
Mukund Sundararajan, Ankur Taly, and Qiqi Yan.
\newblock Axiomatic attribution for deep networks.
\newblock In {\em International conference on machine learning}, pages
  3319--3328. PMLR, 2017.

\bibitem{telgarsky2016benefits}
Matus Telgarsky.
\newblock Benefits of depth in neural networks.
\newblock In {\em Conference on learning theory}, pages 1517--1539. PMLR, 2016.

\bibitem{vaswani2017attention}
Ashish Vaswani, Noam Shazeer, Niki Parmar, Jakob Uszkoreit, Llion Jones,
  Aidan~N Gomez, {\L}ukasz Kaiser, and Illia Polosukhin.
\newblock Attention is all you need.
\newblock {\em Advances in neural information processing systems}, 30, 2017.

\bibitem{waleffe2024empirical}
Roger Waleffe, Wonmin Byeon, Duncan Riach, Brandon Norick, Vijay Korthikanti,
  Tri Dao, Albert Gu, Ali Hatamizadeh, Sudhakar Singh, Deepak Narayanan, et~al.
\newblock An empirical study of mamba-based language models.
\newblock {\em arXiv preprint arXiv:2406.07887}, 2024.

\bibitem{wang2025star}
Maolin Wang, Sheng Zhang, Ruocheng Guo, Wanyu Wang, Xuetao Wei, Zitao Liu,
  Hongzhi Yin, Yi~Chang, and Xiangyu Zhao.
\newblock Star-rec: Making peace with length variance and pattern diversity in
  sequential recommendation.
\newblock {\em arXiv preprint arXiv:2505.03484}, 2025.

\bibitem{wang2023stablessm}
Shida Wang and Qianxiao Li.
\newblock Stable{S}{S}{M}: Alleviating the curse of memory in state-space
  models through stable reparameterization.
\newblock {\em arXiv preprint arXiv:2311.14495}, 2023.

\bibitem{shida2024state}
Shida Wang and Beichen Xue.
\newblock State-space models with layer-wise nonlinearity are universal
  approximators with exponential decaying memory.
\newblock {\em Advances in Neural Information Processing Systems}, 36, 2024.

\bibitem{wang2024state}
Shida Wang and Beichen Xue.
\newblock State-space models with layer-wise nonlinearity are universal
  approximators with exponential decaying memory.
\newblock {\em Advances in Neural Information Processing Systems}, 36, 2024.

\bibitem{wang2024statesurvey}
Xiao Wang, Shiao Wang, Yuhe Ding, Yuehang Li, Wentao Wu, Yao Rong, Weizhe Kong,
  Ju~Huang, Shihao Li, Haoxiang Yang, et~al.
\newblock State space model for new-generation network alternative to
  transformers: A survey.
\newblock {\em arXiv preprint arXiv:2404.09516}, 2024.

\bibitem{wang2025deep}
Yihan Wang, Lujun Zhang, Annan Yu, N~Benjamin Erichson, and Tiantian Yang.
\newblock A deep state space model for rainfall-runoff simulations.
\newblock {\em arXiv preprint arXiv:2501.14980}, 2025.

\bibitem{wang2025mamba}
Zihan Wang, Fanheng Kong, Shi Feng, Ming Wang, Xiaocui Yang, Han Zhao, Daling
  Wang, and Yifei Zhang.
\newblock Is mamba effective for time series forecasting?
\newblock {\em Neurocomputing}, 619:129178, 2025.

\bibitem{wilber2024superfast}
Heather Wilber, Ethan~N Epperly, and Alex~H Barnett.
\newblock A superfast direct inversion method for the nonuniform discrete
  fourier transform.
\newblock {\em arXiv e-prints}, pages arXiv--2404, 2024.

\bibitem{xiao2020disentangling}
Lechao Xiao, Jeffrey Pennington, and Samuel Schoenholz.
\newblock Disentangling trainability and generalization in deep neural
  networks.
\newblock In {\em International Conference on Machine Learning}, pages
  10462--10472. PMLR, 2020.

\bibitem{xie2024evaluating}
Tiankai Xie, Caleb Geniesse, Jiaqing Chen, Yaoqing Yang, Dmitriy Morozov,
  Michael~W Mahoney, Ross Maciejewski, and Gunther~H Weber.
\newblock Evaluating loss landscapes from a topology perspective.
\newblock {\em arXiv preprint arXiv:2411.09807}, 2024.

\bibitem{yang2020feature}
Greg Yang and Edward~J Hu.
\newblock Feature learning in infinite-width neural networks.
\newblock {\em arXiv preprint arXiv:2011.14522}, 2020.

\bibitem{yelongmamba}
Zhifan Ye, Kejing Xia, Yonggan Fu, Xin Dong, Jihoon Hong, Xiangchi Yuan, Shizhe
  Diao, Jan Kautz, Pavlo Molchanov, and Yingyan~Celine Lin.
\newblock Longmamba: Enhancing mamba's long-context capabilities via
  training-free receptive field enlargement.
\newblock In {\em The Thirteenth International Conference on Learning
  Representations}, 2025.

\bibitem{yu2021arbitrary}
Annan Yu, Chlo{\'e} Becquey, Diana Halikias, Matthew~Esmaili Mallory, and Alex
  Townsend.
\newblock Arbitrary-depth universal approximation theorems for operator neural
  networks.
\newblock {\em arXiv preprint arXiv:2109.11354}, 2021.

\bibitem{yu2025tuning}
Annan Yu, Dongwei Lyu, Soon~Hoe Lim, Michael~W Mahoney, and N~Benjamin
  Erichson.
\newblock Tuning frequency bias of state space models.
\newblock {\em International Conference on Learning Representations}, 2025.

\bibitem{yu2024there}
Annan Yu, Michael~W Mahoney, and N~Benjamin Erichson.
\newblock {H}{O}{P}{E} for a robust parameterization of long-memory state space
  models.
\newblock {\em Internation Conference on Learning Representations}, 2025.

\bibitem{yu2024robustifying}
Annan Yu, Arnur Nigmetov, Dmitriy Morozov, Michael~W. Mahoney, and N.~Benjamin
  Erichson.
\newblock Robustifying state-space models for long sequences via approximate
  diagonalization.
\newblock In {\em The Twelfth International Conference on Learning
  Representations}, 2024.

\bibitem{yu2023stability}
Annan Yu and Alex Townsend.
\newblock On the stability of unevenly spaced samples for interpolation and
  quadrature.
\newblock {\em BIT Numerical Mathematics}, 63(2):23, 2023.

\bibitem{yu2022tuning}
Annan Yu, Yunan Yang, and Alex Townsend.
\newblock Tuning frequency bias in neural network training with nonuniform
  data.
\newblock {\em International Conference on Learning Representations}, 2023.

\bibitem{yun2019transformers}
Chulhee Yun, Srinadh Bhojanapalli, Ankit~Singh Rawat, Sashank~J Reddi, and
  Sanjiv Kumar.
\newblock Are transformers universal approximators of sequence-to-sequence
  functions?
\newblock {\em Internation Conference on Learning Representations}, 2020.

\bibitem{zhang4938844srs4}
Fengrui Zhang, Jiaoyi Hou, Dayong Ning, Cheng Zhou, Gangda Liang, and Zhilei
  Liu.
\newblock Srs4: A stacked residual deep neural network for heave motion
  continuous prediction of salvage barge.
\newblock {\em Available at SSRN 4938844}, 2024.

\bibitem{zhang2024survey}
Hanwei Zhang, Ying Zhu, Dan Wang, Lijun Zhang, Tianxiang Chen, Ziyang Wang, and
  Zi~Ye.
\newblock A survey on visual mamba.
\newblock {\em Applied Sciences}, 14(13):5683, 2024.

\bibitem{zhong2016overview}
Guoqiang Zhong, Li-Na Wang, Xiao Ling, and Junyu Dong.
\newblock An overview on data representation learning: From traditional feature
  learning to recent deep learning.
\newblock {\em The Journal of Finance and Data Science}, 2(4):265--278, 2016.

\bibitem{zhou2020universality}
Ding-Xuan Zhou.
\newblock Universality of deep convolutional neural networks.
\newblock {\em Applied and computational harmonic analysis}, 48(2):787--794,
  2020.

\bibitem{zhou2022fedformer}
Tian Zhou, Ziqing Ma, Qingsong Wen, Xue Wang, Liang Sun, and Rong Jin.
\newblock Fedformer: Frequency enhanced decomposed transformer for long-term
  series forecasting.
\newblock In {\em International Conference on Machine Learning}, pages
  27268--27286. PMLR, 2022.

\bibitem{zhu2024vision}
Lianghui Zhu, Liao Bencheng, Qian Zhang, Xinlong Wang, Wenyu Liu, and Xinggang
  Wang.
\newblock Vision mamba: Efficient visual representation learning with
  bidirectional state space model.
\newblock {\em International Conference on Machine Learning}, 2024.

\end{thebibliography}
\bibliographystyle{plain}

\clearpage
\newpage 

\appendix

\section{Related works}\label{app:relwork}

\textbf{Sequence Models.} Sequential data appear across a wide range of domains, including natural language processing, computer vision, generative modeling, and scientific machine learning. To model such data, primitive deep learning approaches primarily relied on recurrent neural networks (RNNs) and their variants~\cite{arjovsky2016unitary,chang2019antisymmetricrnn,erichson2021lipschitz,rusch2021unicornn,orvieto2023resurrecting}, as well as convolutional neural networks (CNNs)~\cite{bai2018empirical,romero2021ckconv}. The last decade has witnessed the dominance of the transformer architecture~\cite{katharopoulos2020transformers,choromanski2020rethinking,kitaev2020reformer,zhou2022fedformer,nie2023a}, following the seminal work~\cite{vaswani2017attention}. Recently, a new class of models known as state-space models (SSMs) has emerged as a promising competitor~\cite{gu2022efficiently,gu2022parameterization,hasani2022liquid,smith2023simplified,gu2023mamba,dao2024transformers}. These models represent sequences through an underlying continuous-time dynamical system and offer a key advantage: they can handle sequences of varying lengths with a fixed number of parameters~\cite{wang2025star} and be trained with time and memory complexity that scales linearly with sequence length~\cite{gu2022efficiently,smith2023simplified,gu2023mamba}.

\textbf{State Space Models.} The term ``state space models'' (SSMs) originates from control theory and dates back to~\cite{kalman1960new}. The first widely adopted SSM in machine learning was the S4 model proposed by~\cite{gu2022efficiently}. Both the original S4 and its successor, Liquid-S4~\cite{hasani2022liquid}, use a diagonal-plus-rank-one (DPRO) structure in the state matrix to significantly reduce computational cost compared to traditional RNNs. Later,~\cite{gu2022parameterization} showed that a purely diagonal state matrix could achieve comparable performance, leading to the simplified S4D model. Since then, most SSMs have adopted diagonal state matrices, including S5~\cite{smith2023simplified}, Regularized SSM~\cite{liu2024generalization}, Stable-SSM~\cite{wang2023stablessm}, S4-PTD, and S5-PTD~\cite{yu2024robustifying,raju2025perturbed}. While a diagonal parameterization explores the structural simplicity of SSMs,~\cite{pei2025let} analyzes the computational acceleration of LTI systems and~\cite{beylkin2024fast} investigates its numerical stability. Other works explore implicit sequence models that do not rely directly on an LTI formulation or rely on a modified LTI formulation, such as~\cite{yu2024there,agarwal2023spectral,parnichkun2024state,rusch2024oscillatory}. An extensive list of different SSM architectures has been given in the survey articles~\cite{wang2024statesurvey,patro2024mamba}. The expressiveness of SSMs has been studied in~\cite{yu2024there,liu2024autocorrelation}, their training dynamics in~\cite{smekal2024towards,yu2025tuning}, and generalization properties in~\cite{yu2025tuning,liu2024generalization}. Several studies have also investigated the representation stability of SSMs, including~\cite{wang2023stablessm,yu2024there}. Applications of SSMs in different scientific fields have been studied in several papers~\cite{zhang4938844srs4,wang2025deep,raju2025perturbed}.

\textbf{Mamba Models.} Mamba~\cite{gu2023mamba} extends SSMs by introducing input-dependent dynamics via a selective mechanism, resulting in a highly efficient sequential architecture. Unlike earlier SSMs that use fixed, input-independent recurrence, Mamba adapts its dynamics at each step, enabling it to rival transformer-based models in language tasks. The theoretical benefits of this input-dependent recurrence was studied in~\cite{muca2024theoretical} through the lens of controlled differential equations (CDEs). Mamba has since inspired a number of extensions and studies. Mamba2~\cite{dao2024transformers} incorporates a multihead structure with softmax gating. Mamba has also been applied beyond language modeling~\cite{patro2024mamba}, including computer vision~\cite{zhu2024vision,zhang2024survey,li2024videomamba}, time-series forecasting~\cite{wang2025mamba}, multimodal learning~\cite{qiao2024vl}, and audio processing~\cite{li2024spmamba}. In addition,~\cite{ma2024mamba} proposed using Mamba as a foundation model backbone across modalities. Despite its versatility, follow-up studies such as~\cite{alonso2024state} have highlighted the limitations of Mamba on long-range sequence benchmarks such as LRA. The S7 model~\cite{soydan2024s7} has been proposed as another selective model with slightly improved performances on LRA tasks; though, the accuracies are still substantially worse than models like S4D and S5. Other efforts to enhance Mamba's long memory retention and selectivity mechanism are found in~\cite{yelongmamba} and~\cite{rando2025serpent}, respectively. The spectral properties of the state matrix are considered in~\cite{grazzi2024unlocking} and related to Mamba's state-tracking capabilities. In addition, the recent work~\cite{honarpisheh2025generalization} proposes a theoretical generalization upper bound based on the Rademacher complexity of Mamba models. Notably,~\cite{sieber2024understanding} conducted a thorough comparison of these different recurrent units, including S4, Mamba, and traditional RNNs. Many works have shown the advantages of a bidirectional structure in SSMs and Mambas~\cite{hwang2024hydra,liu2025sigma}, which we also adopt in training the LRA benchmark tasks.

\textbf{Universal Approximation Theorems.} In this paper, we use the universal approximation theorem (UAT) as a tool to assess a model's expressiveness. UATs ask whether a target function can be approximated to arbitrary accuracy by a sufficiently large neural network. This line of work dates back to~\cite{cybenko1989approximation}, who proved that two-layer neural networks with sigmoid activation are universal approximators. This result was later extended to networks with any continuous, non-polynomial activation function. While many classical results focus on shallow but wide networks, recent work by~\cite{kidger} established that deep, narrow networks can also be universal approximators. Universal approximation properties have been studied across various architectures. Notable examples include shallow and wide operator neural networks~\cite{lu2021learning,chen}, as well as their deep and narrow counterparts~\cite{yu2021arbitrary}. For sequence models, UATs are known for RNNs~\cite{schafer2006recurrent}, CNNs~\cite{zhou2020universality}, Transformers~\cite{yun2019transformers}, and SSMs~\cite{wang2023stablessm}. To the best of our knowledge, however, very little has been done to establish universal approximation properties for Mamba models.

\textbf{Width and Depth of Neural Networks.} While UATs focus on the density of neural networks in a given topology, more practically relevant questions involve the {rate} of approximation, e.g., how deep or wide a network must be to approximate a target function within a given accuracy. One of the most celebrated results is by~\cite{telgarsky2016benefits}, which shows that depth improves expressiveness more efficiently than width. However, in practice, deep but narrow networks often face optimization challenges during training~\cite{glorot2010understanding,pascanu2013difficulty,choromanska2015loss,rotskoff2022trainability}. In contrast, networks with greater effective width tend to enjoy better theoretical guarantees on generalization. This advantage can be understood through both the neural tangent kernel (NTK) perspective~\cite{jacot2018neural,arora2019fine,allen2019learning,basri,yu2022tuning} and mean-field theory~\cite{mei2018mean,suh2024survey,bartlett2021deep}, both of which highlight the benefits of wide models in training dynamics and generalization performance.

\textbf{Sensitivity Analysis.} Neural networks often operate on high-dimensional inputs, and a large body of work has studied how to quantify the sensitivity of the output to individual input components---a line of research typically referred to as {attribution}. While our paper focuses on relative gradients, many other gradient-based attribution methods have been developed, including DeepLIFT~\cite{shrikumar2017learning}, integrated gradients~\cite{sundararajan2017axiomatic}, sensitivity-$n$~\cite{ancona2017towards}, and others~\cite{baehrens2010explain,simonyan2013deep}. Most of these methods compare a given input to a baseline or reference input, whereas our analysis focuses specifically on the effect of large-magnitude inputs in isolation---without reference to a benchmark input. The sensitivity analysis in this paper is closely related to the stability analysis of nonuniform FFT~\cite{dutt1993fast,wilber2024superfast,yu2023stability}, due to varying sampling intervals.

\textbf{Training Stability and the Loss Landscape.} Training stability in deep neural networks has been extensively studied through the lens of loss landscape geometry. Sharp minima and high-curvature regions are often associated with poor generalization and unstable optimization~\cite{keskar2017large,chaudhari2019entropy,li2018visualizing}. Gradient explosion or vanishing can lead to optimization failure~\cite{glorot2010understanding,pascanu2013difficulty} for recurrent neural networks. More recent works have linked curvature and gradient dynamics to the trainability of neural networks~\cite{choromanska2015loss,rotskoff2022trainability,sagun2017empirical}. These studies emphasize the importance of smooth, well-conditioned landscapes for stable training, motivating architectural choices~\cite{he2016deep,xie2024evaluating} and learning rate schedulers~\cite{loshchilov2016sgdr} that mitigate instability.

\section{Proofs of UAT and non-UAT results}

In this section, we prove all technical results related to UATs presented in~\cref{sec:UAT} and~\ref{sec:improvements}. Recall that the univariate neural network architecture that we study in this paper is given by
\[
    \tilde{G}(\mathbf{u}) = \tilde{G}((u_1, \ldots, u_L)) = \mathbf{N} \sigma(\Gamma((\mathbf{M} u_1, \ldots, \mathbf{M} u_L))_L + \boldsymbol{\theta}).
\]

\subsection{Proof of~\Cref{thm.UAT}}

To prove that such a $\tilde{G}$ is not a universal approximator when $\Gamma = \Gamma_{\text{S6}}$, we will see that since an S6 model ties the matrices $\overline{\mathbf{B}}_k$ and $\overline{\mathbf{C}}_k$ for all channels, the system becomes a quadratic encoder of the input sequence. We need the following technical lemma to show that a quadratic encoder is not invertible, and therefore, information will be lost when we apply this encoder to an input sequence.

\begin{lem}\label{lem.noencoder}
    Given any $L \geq 3$, there exists a continuous function $G: [0,1]^L \rightarrow \R$ such that given any quadratic polynomial of $L$ variables $P: [0,1]^{L-1} \times \{1\} \rightarrow \R$, there exist two points $\mathbf{x}, \mathbf{y} \in [0,1]^{L-1} \times \{1\}$ such that
    \[
        P(\mathbf{x}) = P(\mathbf{y}), \qquad |G(\mathbf{x}) - G(\mathbf{y})| > 1.
    \]
\end{lem}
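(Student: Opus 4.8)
The plan is to reformulate the desired property in terms of \emph{ridge-type} approximation and then reduce it to a two–dimensional geometric statement about conics. Observe first that the conclusion is equivalent to saying that $G$ cannot be written, even approximately, as a function of a single quadratic: if for some continuous $\Psi:\R\to\R$ one had $|G(\mathbf{x})-\Psi(P(\mathbf{x}))|\le \tfrac12$ for all $\mathbf{x}$, then every pair with $P(\mathbf{x})=P(\mathbf{y})$ would satisfy $|G(\mathbf{x})-G(\mathbf{y})|\le 1$, which is exactly what we wish to preclude. Hence it suffices to build a single continuous $G$ on the slice with $\|G-\Psi\circ P\|_\infty>\tfrac12$ for every quadratic $P$ and every continuous $\Psi$; the two points $\mathbf{x},\mathbf{y}$ are then extracted from a level set of $P$ along which $G$ oscillates by more than $1$. (This is the relevant lemma for the non-UAT because tying $\mathbf{A}$ and $\overline{\mathbf{C}}_k$ across channels collapses the entire S6 pre-activation, with $\Delta$ frozen, into affine images $M^{(i)}Q(\mathbf{u})+c^{(i)}$ of one scalar quadratic $Q$, so $\tilde G$ factors as $\Psi\circ Q$.)

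Since $L-1\ge 2$, I would let $G$ depend only on the first two free coordinates, i.e.\ restrict attention to the square $[0,1]^2$ obtained by freezing $u_3,\dots,u_{L-1}$ and setting $u_L=1$; a quadratic in $L$ variables restricts there to a quadratic $P(x,y)$ in two variables. The candidate is a genuinely two–dimensional, highly oscillatory function such as $G(x,y)=\cos(2\pi K x)+\cos(2\pi K y)$ for a large integer $K$, whose total swing is $4$, comfortably above the threshold $1$. The heart of the argument is then the claim: \emph{for every non-constant quadratic $P$ on $[0,1]^2$, some level set $\{P=v\}$ is long enough that $G$ varies by more than $1$ along it.}

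To make this precise I would use that a quadratic has at most one critical point, so on the compact square at least one global extreme lies on the boundary, and for any value $v$ strictly between the minimum and maximum the level set $\{P=v\}$ separates the square and hence contains an arc of diameter bounded below by a universal constant $\delta_0$. Choosing $K$ so that $G$ completes several full oscillations over any segment of length $\delta_0$ forces $G$ to take values differing by more than $1$ on that arc, producing the two equal-$P$ points. The diagonal special case $P=\sum_k a_k u_k^2$ — precisely what an S6 unit produces — is cleaner still: the change of variables $v_k=u_k^2$ is a homeomorphism of $[0,1]^{L-1}$ turning $P$ into a linear form, so $\Psi\circ P$ becomes a ridge function and any genuinely bivariate $G$, e.g.\ $v_1 v_2$, cannot be one.

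The main obstacle is this quantitative geometric step: ruling out that the oscillatory $G$ is accidentally (nearly) constant along all level sets of some cleverly shaped conic. The danger lies in degenerate or very small level sets — isolated points, short arcs, or a near-linear $P$ whose level sets align with the nodal lines of $G$. The resolution is never to use a small level set: one always works at an intermediate value $v$, where the single-critical-point structure guarantees a level set of diameter at least $\delta_0$, and to choose the two oscillation directions of $G$ with incommensurate behaviour so that no conic arc of that diameter can keep $G$ within a band of width $1$. Establishing the uniform lower bound $\delta_0$ on the diameter of mid-level sets, uniformly over \emph{all} quadratics, is where the real work lies.
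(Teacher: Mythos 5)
Your proposal has genuine gaps: the two quantitative claims it rests on are not proven, and both are false as literally stated. First, the claim that \emph{``for any value $v$ strictly between the minimum and maximum the level set $\{P=v\}$ \dots contains an arc of diameter bounded below by a universal constant $\delta_0$''} fails: for $P(x,y)=(x-\tfrac12)^2+(y-\tfrac12)^2$ and $v$ close to $0$, the level set is a circle of radius $\sqrt{v}$, which separates the square but has arbitrarily small diameter. What you actually need is that \emph{some} level set has diameter $\geq \delta_0$ uniformly over all conics (including degenerate ones); this is plausible but requires real work (e.g.\ an isodiametric/isoperimetric argument at a median level), which you explicitly defer. Second, and more seriously, the step \emph{``no conic arc of that diameter can keep $G$ within a band of width $1$''} is false for your choice $G(x,y)=\cos(2\pi Kx)+\cos(2\pi Ky)$: along any line $y=x+(2m+1)/(2K)$ one has $\cos(2\pi Ky)=-\cos(2\pi Kx)$, so $G\equiv 0$ exactly on a straight segment of diameter up to $\sqrt2$ — and such lines \emph{are} level sets of the admissible (degenerate) quadratic $P(x,y)=y-x$. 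So no pointwise statement about individual arcs can work; the argument would have to show that for each fixed $P$ at least one of its \emph{many} level sets oscillates, which is a different, global statement you never establish. There is also a logical wrinkle in your opening reduction: ruling out continuous $\Psi$ with $\|G-\Psi\circ P\|_\infty\le\tfrac12$ does not yield the desired pair $\mathbf{x},\mathbf{y}$ unless you also prove the converse (small oscillation on every level set implies a \emph{continuous} selection $\Psi$ exists), and the natural mid-envelope $\Psi$ need not be continuous when level-set components appear or vanish.

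For contrast, the paper avoids all quantitative geometry with a counting argument: since a bivariate quadratic has at most $N$ strict local extrema, one places $N+1$ disjoint small disks $D_j$ with centers $\mathbf{x}_j$, and takes (via Urysohn) a continuous $G$ equal to $0$ at each center and $2$ on each $\partial D_j$. For any quadratic $P$, the connected component of $\{P=P(\mathbf{x}_j)\}$ through $\mathbf{x}_j$ either meets $\partial D_j$ — producing the two points with equal $P$-value and $|G$-difference$|=2$ — or is trapped in $D_j$, forcing a strict local extremum there; by pigeonhole the latter cannot happen for all $N+1$ disks. If you want to salvage your oscillation strategy, you would need to (i) prove the uniform diameter bound at a well-chosen level, and (ii) replace the per-arc claim by an argument over the whole pencil of level sets of $P$, handling the degenerate (affine, parallel-line) case where your $G$ is exactly constant on infinitely many level sets; both steps are substantially harder than the paper's pigeonhole.
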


\begin{proof}
    Since the restriction of a quadratic polynomial of $L$ variables to $3$ variables is still a quadratic polynomial, without loss of generality, we assume that $L = 3$. Moreover, instead of assuming that $G$ and $P$ are functions on $[0,1]^{3-1} \times \{1\}$, we can assume that they are functions on $[0,1]^{2}$. It is well-known that there is a number $N \geq 1$ such that every bivariate quadratic polynomial has at most $N$ strict local minima or maxima. We construct $G$ by selecting $N+1$ arbitrary distinct points $\mathbf{x}_1, \mathbf{x}_2, \ldots, \mathbf{x}_{N+1}$ in $(0,1)^2$. Around each point $\mathbf{x}_j$, we make a small disk $D_j = D_{\rho}(\mathbf{x}_j)$. By taking $\rho$ small enough, we can also ensure that $D_j \subset [0,1]^2$ for all $1 \leq j \leq N+1$ and that $D_1, D_2, \ldots, D_{N+1}$ are mutually disjoint. Let $G$ be a continuous function such that $G(\mathbf{x}_j) = 0$ for every $1 \leq j \leq N+1$ and $G$ equals $2$ on $\partial D_j$, the boundary of $D_j$, for every $1 \leq j \leq N+1$. Such a $G$ clearly exists by Urysohn's lemma. We claim that $G$ satisfies the condition in the our lemma. To see this, given any arbitrary quadratic polynomial $P: [0,1]^2 \rightarrow \R$, we let $S_j$ be the connected component of the set $\{\mathbf{x} | P(\mathbf{x}) = P(\mathbf{x}_j)\}$ that contains $\mathbf{x}_j$. There are two possibilities: either $S_j$ intersects $\partial D_j$ or not. If $S_j$ does not intersect $\partial D_j$, then that means there is a strict local minimum or a strict local maximum of $P$ in $D_j$, but $D_1, \ldots, D_{N+1}$ are mutually disjoint and there are at most $N$ local minima or maxima of $P$. Hence, there is at least one $j$ such that $S_j$ intersects $\partial D_j$. Let $\mathbf{y} \in S_j \cap \partial D_j$. Then, we have that
    \[
        P(\mathbf{x}_j) = P(\mathbf{y}), \qquad |G(\mathbf{x}_j) - G(\mathbf{y})| = |0 - 2| > 1.
    \]
    The proof is complete.
\end{proof}

We are now ready to prove~\Cref{thm.UAT}. The proof of the first part is built upon a result from~\cite{pinkus1999approximation} for UAT of a two-layer wide neural network to approximate any continuous function and a result from~\cite{wang2024state} for UAT of LTI systems to approximate any convolutional kernel.

\begin{proof}[Proof of~\Cref{thm.UAT}]
    We prove the two statements separately.
    
    \noindent \textbf{Proof of Part I.} Without loss of generality, assume that $\sigma$ is $1$-Lipschitz.\footnote{Otherwise, if $\sigma$ is $\ell$-Lipschitz, we can divide $\sigma$ by $\ell$ to make it $1$-Lipschitz and multiply $\mathbf{N}$ by $\ell$ so that the value of $\tilde{G}$ does not change.} Let a continuous function $G: [0,1]^L \rightarrow \R$ and an error tolerance $\epsilon > 0$ be given. By~\cite{pinkus1999approximation}, there exists a function $\tilde{G}: [0,1]^L \rightarrow \R$ of the form
    \[
        \tilde{G}(\mathbf{u}) = \sum_{i=1}^d f^{(i)} \sigma((\mathbf{w}^{(i)})^\top \mathbf{u} + \theta^{(i)}),
    \]
    where $\mathbf{w}^{(i)} \in \R^{L}$ and $b_i \in \R$ for every $1 \leq i \leq d$, such that
    \[
        |\tilde{G}(\mathbf{u}) - {G}(\mathbf{u})| \leq \frac{\epsilon}{2}
    \]
    for every $\mathbf{u} \in [0,1]^L$. For any $1 \leq i \leq d$, by~\cite{shida2024state}, there exist $n_i \geq 1$, ${\mathbf{A}}^{(i)} \in \R^{n_i \times n_i}$, ${\mathbf{B}}^{(i)} \in \R^{n_i \times 1}$, and ${\mathbf{C}}^{(i)} \in \R^{1 \times n_i}$, such that the matrices $\overline{\mathbf{A}}^{(i)}$, $\overline{\mathbf{B}}^{(i)}$, and $\overline{\mathbf{C}}^{(i)}$ from the discretized LTI system (see~\cref{eq.ZOH}) satisfy that
    \[
        |(\mathbf{w}_j)^{(i)} - \overline{\mathbf{C}}^{(i)} (\overline{\mathbf{A}}^{(i)})^{L-j} \overline{\mathbf{B}}^{(i)}| < \frac{\epsilon}{2\sqrt{d}L(\|\mathbf{f}\|_2+1)}
    \]
    for all $1 \leq j \leq L$, where $\mathbf{f} = [f^{(1)} \; \cdots f^{(d)}]^\top$. Hence, we have that
    \[
        \begin{aligned}
            &\left|\sigma((\mathbf{w}^{(i)})^\top \mathbf{u} + \theta^{(i)}) - \sigma\left(\sum_{j=1}^L \overline{\mathbf{C}}^{(i)} (\overline{\mathbf{A}}^{(i)})^{L-j} \overline{\mathbf{B}}^{(i)} u_j + \theta^{(i)}\right)\right| \\
            &\qquad \leq \left|(\mathbf{w}^{(i)})^\top \mathbf{u} - \left(\sum_{j=1}^L \overline{\mathbf{C}}^{(i)} (\tilde{\mathbf{A}}^{(i)})^{L-j} \overline{\mathbf{B}}^{(i)} u_j\right)\right| \\
            &\qquad \leq \sqrt{\sum_{j=1}^L \left((\mathbf{w}^{(i)})_j - \overline{\mathbf{C}}^{(i)} (\overline{\mathbf{A}}^{(i)})^{L-j} \overline{\mathbf{B}}^{(i)}\right)^2} \; \|\mathbf{u}\|_2 \leq L \frac{\epsilon}{2\sqrt{d}L(\|\mathbf{f}\|_2+1)} = \frac{\epsilon}{2\sqrt{d}(\|\mathbf{f}\|_2+1)},
        \end{aligned}
    \]
    where the first inequality follows from the Lipschitz continuity of $\sigma$ and the second inequality follows from the H\"older's inequality. Set $n = \sum_{i=1}^d n_i$ and define a block-diagonal matrix $\mathbf{A}$ so that
    \[
        \mathbf{A} = \begin{bmatrix}
            {\mathbf{A}}^{(1)} & & & \\
            & {\mathbf{A}}^{(2)} & & \\
            & & \ddots & \\
            & & & {\mathbf{A}}^{(d)}
        \end{bmatrix}.
    \]
    For every $1 \leq i \leq d$, define $\mathbf{B}^{(i)} \in \R^{n \times 1}$ to be a sparse column vector so that $\mathbf{B}^{(i)}((\sum_{i'=1}^{i-1} n_{i'} + 1) : (\sum_{i'=1}^{i} n_{i'})) = {\mathbf{B}}^{(i)}$ and is zero elsewhere. Similarly, define $\mathbf{C}^{(i)} \in \R^{1 \times n}$ to be a sparse column vector so that $\mathbf{C}^{(i)}((\sum_{i'=1}^{i-1} n_{i'} + 1) : (\sum_{i'=1}^{i} n_{i'})) = {\mathbf{C}}^{(i)}$ and is zero elsewhere. Now, it is easy to see that given these definitions, the S4D defined in~\cref{eq.S4D} satisfies that
    \[
        (\boldsymbol{\Gamma}_{\text{S4D}}((u_1, \ldots, u_L))_L)^{(i)} = \sum_{j=1}^L \overline{\mathbf{C}}^{(i)} (\overline{\mathbf{A}}^{(i)})^{L-j} \overline{\mathbf{B}}^{(i)} u_j
    \]
    for every $1 \leq i \leq d$. Hence, let $\tilde{G}_{\text{S4D}}$ be such that
    \[
        \tilde{G}_{\text{S4D}}: [0,1]^{L} \rightarrow \R,\quad (u_1, \ldots, u_L) \mapsto \mathbf{f} \sigma(\boldsymbol{\Gamma}_{\text{S4D}}((\mathbf{M} u_1, \ldots, \mathbf{M} u_L))_L + \boldsymbol{\theta}),
    \]
    where $\mathbf{M} = [1 \; \cdots \; 1]^\top$. For any $\mathbf{u} \in \R^L$, we have that
    \[
        \begin{aligned}
        |\tilde{G}_{\text{S4D}}(\mathbf{u}) - \tilde{G}(\mathbf{u})| &\leq \|\mathbf{f}\|_2 \sqrt{\sum_{i=1}^d \left(\sigma((\mathbf{w}^{(i)})^\top \mathbf{u}) - \sigma\left(\sum_{j=1}^L \overline{\mathbf{C}}^{(i)} (\overline{\mathbf{A}}^{(i)})^{L-j} \overline{\mathbf{B}}^{(i)} u_j\right)\right)^2} \\
        &\leq \|\mathbf{f}\|_2  \sqrt{d} \frac{\epsilon}{2\sqrt{d}(\|\mathbf{f}\|_2 + 1)} < \frac{\epsilon}{2}.
        \end{aligned}
    \]
    Now, for any $\mathbf{u} \in \R^L$, we have that
    \[
        |\tilde{G}_{\text{S4D}}(\mathbf{u}) - {G}(\mathbf{u})| \leq |\tilde{G}_{\text{S4D}}(\mathbf{u}) - \tilde{G}(\mathbf{u})| + |\tilde{G}(\mathbf{u}) - G(\mathbf{u})| \leq \frac{\epsilon}{2} + \frac{\epsilon}{2} \leq \epsilon.
    \]
    The statement is proved.

    \noindent \textbf{Proof of Part II.} Given any sequence $\mathbf{u} \in \R^L$, any encoder $\mathbf{M} = [m^{(1)} \; \cdots \; m^{(L)}]^\top$, and any Mamba parameters $\mathbf{A} \in \R^{n \times n}$, $\mathbf{B} \in \R^{n \times d}$, and $\mathbf{C} \in \R^{d \times n}$, we first show how the Mamba system in~\cref{eq.S6} can be simplified. For any $1 \leq i \leq d$, we have that
    \begin{equation}\label{eq.mambasimple}
    \begin{aligned}
        \mathbf{x}_{k+1}^{(i)} &= \mathbf{A} \mathbf{x}_k^{(i)} + (\overline{\mathbf{P}} u_k) m^{(i)} u_k, \\
        (\mathbf{y}_{k})^{(i)} &= (u_k \overline{\mathbf{Q}}) \mathbf{x}_k^{(i)},
    \end{aligned}
    \end{equation}
    where $\overline{\mathbf{A}} = \text{exp}(\mathbf{A})$, $\mathbf{P} = \mathbf{A}^{-1}(\overline{\mathbf{A}} - \mathbf{I})\mathbf{B}\mathbf{M}$, and $\overline{\mathbf{Q}} = \mathbf{M}^\top \mathbf{C}$ are fixed matrices depending only on $\mathbf{A}$, $\mathbf{B}$, $\mathbf{C}$, and $\mathbf{M}$. Hence, the $i$th entry of the final output is given by
    \[
        \boldsymbol{\Gamma}_{\text{S6}}((\mathbf{M} u_1, \ldots, \mathbf{M} u_L))_L^{(i)} = m^{(i)} \underbrace{ u_L \left(\sum_{j=1}^L \overline{\mathbf{Q}}\, \overline{\mathbf{A}}^{L-j}\, \overline{\mathbf{P}} u_j^2\right)}_{F((u_1, \ldots, u_L))}.
    \]
    Importantly, note that since $u_L = 1$, $F$ is a quadratic function in $u_1, \ldots, u_L$ that does not depend on $i$. The output of $\tilde{G}_{\text{S6}}$ can then be expressed as
    \[
        \tilde{G}_{\text{S6}}(\mathbf{u}) = \sum_{i=1}^d n^{(i)} \sigma({m}^{(i)} F(\mathbf{u}) + \theta_i),
    \]
    where $\mathbf{N} = [n^{(1)} \; \cdots \; n^{(L)}]$. Let $G$ be the function defined in~\Cref{lem.noencoder}. Then, given any choice of $\mathbf{M}, \mathbf{N}, \boldsymbol{\theta}, \mathbf{A}, \mathbf{B}$, and $\mathbf{C}$, we know that there is a pair of inputs $\mathbf{u}$ and $\mathbf{v}$ such that
    \[
        F(\mathbf{u}) = F(\mathbf{v}), \qquad |G(\mathbf{u}) - G(\mathbf{v})| > 1.
    \]
    Since $F(\mathbf{u}) = F(\mathbf{v})$, we also have that $\tilde{G}_{\text{S6}}(\mathbf{u}) = \tilde{G}_{\text{S6}}(\mathbf{v})$. By the triangle inequality, we have
    \[
        |G(\mathbf{u}) - \tilde{G}_{\text{S6}}(\mathbf{u})| + |G(\mathbf{v}) - \tilde{G}_{\text{S6}}(\mathbf{v})| \geq |G(\mathbf{u}) - G(\mathbf{v})| > 1.
    \]
    Setting $\epsilon = 1/2$, we are done.
\end{proof}

\subsection{Proof of~\Cref{thm.B2S6UAT}}

The proof of~\Cref{thm.B2S6UAT} can be easily established upon our proof that S4D models are universal approximators.

\begin{proof}[Proof of~\Cref{thm.B2S6UAT}]
    Note that the construction of a universal approximator in~\Cref{thm.UAT} can be achieved by using the same $\mathbf{C}^{(i)} = \begin{bmatrix}
        1 & \cdots & 1
    \end{bmatrix}$ for all $1 \leq i \leq d$ (see~\cite{wang2024state,yu2025tuning}). The second statement follows immediately from the first statement of~\Cref{thm.UAT} by setting $^1\mathbf{B}_{\text{weight}} = \boldsymbol{0}$, $\mathbf{B}_{\text{bias}}^{(i)}$ to be the same $\mathbf{B}^{(i)}$ used in $\Gamma_{\text{S4D}}$, and $^1\mathbf{C}$ to be the rank-$1$ matrix whose entries are all $d^{-1}$ so that ${u}_L \mathbf{M}^\top \, {^1\mathbf{C}} = \begin{bmatrix}
        1 & \cdots & 1
    \end{bmatrix}$. For the first statement, consider the function $H(\mathbf{u})$ given by $H(\mathbf{u}) = G(\sqrt{\mathbf{u}})$, where the square-root is applied elementwise. Clearly, $H$ is continuous since $G$ is continuous. Then, we know, by the second statement, that there exist $\mathbf{M} = \begin{bmatrix}
        1 & \cdots & 1
    \end{bmatrix}^\top, \mathbf{N}, \boldsymbol{\theta}, \tilde{\mathbf{A}}, {^1{\tilde{\mathbf{B}}}}_{\text{weight}} = \boldsymbol{0}, {\tilde{\mathbf{B}}}^{(i)}_{\text{bias}}$, and $^1\tilde{\mathbf{C}} = [1/d]_{i,j}$ such that the $\text{B}_2\text{S}_6$ system $\tilde{\boldsymbol{\Gamma}}_{\text{B}_2\text{S}_6}$ defined in~\cref{eq.B2S6} with $h = 1$ and $p = d$ and the map 
    \[
        \tilde{H}_{\text{B}_2\text{S}_6}: [0,1]^{L} \rightarrow \R,\quad (u_1, \ldots, u_L) \mapsto \mathbf{N} \sigma(\tilde{\boldsymbol{\Gamma}}_{\text{B}_2\text{S}_6}((\mathbf{M} u_1, \ldots, \mathbf{M} u_L))_L + \boldsymbol{\theta})
    \]
    satisfy that
    \[
        |\tilde{H}_{\text{B}_2\text{S}_6}(\mathbf{u}) - H(\mathbf{u})| \leq \epsilon, \qquad \text{for any } \mathbf{u} \in [0,1]^{L-1} \times \{1\}.
    \]
    Now, define $h = d$ and $p = 1$. Let $\mathbf{M}, \mathbf{N}, \boldsymbol{\theta}, \mathbf{A} = \tilde{\mathbf{A}}$ be the same matrices, and let $^j\mathbf{B}_{\text{weight}} = {\tilde{\mathbf{B}}^{(j)}_{\text{bias}}}$, $\mathbf{B}^{(i)}_{\text{bias}} = \boldsymbol{0}$, and $^j\mathbf{C} = \begin{bmatrix}
        1 & \cdots & 1
    \end{bmatrix}$ for all $i$ and $j$. Then, it is clear that the system ${\boldsymbol{\Gamma}}_{\text{B}_2\text{S}_6}$ defined by these matrices satisfies that
    \[
        ({\boldsymbol{\Gamma}}_{\text{B}_2\text{S}_6}(\mathbf{u}))_L = (\tilde{\boldsymbol{\Gamma}}_{\text{B}_2\text{S}_6}(\mathbf{u}^2))_L, \qquad \text{for any } \mathbf{u} \in [0,1]^{L-1} \times \{1\},
    \]
    where $\mathbf{u}^2$ is the sequence obtained by squaring every entry of $\mathbf{u}.$ Hence, the map
    \[
        \tilde{G}_{\text{B}_2\text{S}_6}: [0,1]^{L-1} \times \{1\} \rightarrow \R,\quad (u_1, \ldots, u_L) \mapsto \mathbf{N} \sigma({\boldsymbol{\Gamma}}_{\text{B}_2\text{S}_6}((\mathbf{M} u_1, \ldots, \mathbf{M} u_L))_L + \boldsymbol{\theta})
    \]
    satisfies that
    \[
        |\tilde{G}_{\text{B}_2\text{S}_6}(\mathbf{u}) - G(\mathbf{u})| = |\tilde{H}_{\text{B}_2\text{S}_6}(\mathbf{u}^2) - H(\mathbf{u}^2)| \leq \epsilon, \qquad \text{for any } \mathbf{u} \in [0,1]^{L-1} \times \{1\}.
    \]
    The proof is complete.
\end{proof}

\section{Proofs of inductive bias results}

In this section, we prove~\Cref{thm.inductivebias} and~\ref{thm.B2S6inductivebias}. The proof of these results relies on a very simple argument that helps us avoid the cancellation of terms, which we state below.

\begin{lem}\label{lem.nocancellation}
    Let $p \in \R$ be given. Assume $f: \R \rightarrow \R$ and $g: \R \rightarrow \R$ are two functions such that $f(x) = \alpha x^p + o(x^p)$ and $g(x) = \beta x^p + o(x^p)$ as $x \rightarrow \infty$, for some constants $\alpha, \beta \neq 0$. Then, we have that for a.e.\ choice of $c_1, c_2 \in \R$ that $c_1 f(x) + c_2 g(x) = \gamma x^p + o(x^p) = \Theta(x^p)$ as $x \rightarrow \infty$ for some $\gamma \neq 0$.
\end{lem}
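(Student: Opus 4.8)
The plan is to reduce the entire claim to tracking a single leading coefficient and then invoke a one-line measure-zero argument. First I would substitute the two given asymptotic expansions directly into the linear combination, writing
\[
c_1 f(x) + c_2 g(x) = (c_1 \alpha + c_2 \beta)\, x^p + c_1 \cdot o(x^p) + c_2 \cdot o(x^p).
\]
Since $c_1$ and $c_2$ are fixed scalars, each little-$o$ term scaled by a constant is again $o(x^p)$, and their sum remains $o(x^p)$; hence the combination equals $\gamma x^p + o(x^p)$ with $\gamma := c_1 \alpha + c_2 \beta$. This step is purely formal and uses only the linearity of the little-$o$ class under multiplication by fixed constants.

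Next I would establish that $\gamma \neq 0$ for almost every choice of the pair. The set of ``bad'' coefficients is precisely $\{(c_1, c_2) \in \R^2 : c_1 \alpha + c_2 \beta = 0\}$. Because the hypothesis gives $\alpha, \beta \neq 0$, the vector $(\alpha, \beta)$ is a nonzero normal, so this locus is a single line through the origin in $\R^2$, which has two-dimensional Lebesgue measure zero. Consequently, for a.e.\ $(c_1, c_2)$ we have $\gamma \neq 0$.

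Finally, with $\gamma \neq 0$ in hand, I would close the argument using the standard fact that a function of the form $\gamma x^p + o(x^p)$ with nonzero leading coefficient is $\Theta(x^p)$: the remainder is eventually dominated by $\tfrac{1}{2}|\gamma|\, x^p$, which yields the matching two-sided bounds $\tfrac{1}{2}|\gamma|\, x^p \leq |c_1 f(x) + c_2 g(x)| \leq \tfrac{3}{2}|\gamma|\, x^p$ for all sufficiently large $x$. Together with the expansion from the first step, this gives both $c_1 f + c_2 g = \gamma x^p + o(x^p)$ and $c_1 f + c_2 g = \Theta(x^p)$, as required.

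I do not expect a genuine obstacle, since the lemma is essentially elementary linear algebra layered on top of the expansions. The only point meriting care is the correct reading of ``a.e.''\ as Lebesgue measure on the pair $(c_1, c_2) \in \R^2$, together with confirming that the exceptional locus is genuinely lower-dimensional rather than of positive measure; this is immediate once one observes that $(\alpha, \beta)$ being nonzero forces the cancellation condition to cut out a proper line.
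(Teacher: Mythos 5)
Your proposal is correct and follows exactly the same route as the paper's proof: identify $\gamma = c_1\alpha + c_2\beta$ as the leading coefficient and observe that the cancellation locus $\{(c_1,c_2) : c_1\alpha + c_2\beta = 0\}$ is a line, hence a Lebesgue null set in $\R^2$. You simply spell out the details (linearity of the little-$o$ class and the eventual two-sided bound giving $\Theta(x^p)$) that the paper leaves implicit.
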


\begin{proof}
    The proof is straightforward: as long as we have that $\alpha c_1 + \beta c_2 \neq 0$, the conclusion is satisfied. Since $\alpha$ and $\beta$ are nonzero, the equation $\alpha c_1 + \beta c_2 = 0$ is satisfied only on a Lebesgue null set of $c_1$ and $c_2$.
\end{proof}

\subsection{Proof of~\Cref{thm.inductivebias}}

We now prove the main theorems by explicitly calculating the Jacobians.

\begin{proof}[Proof of~\Cref{thm.inductivebias}]
    Given an S6 system in~\cref{eq.S6} and an input sequence $\mathbf{u} = (\mathbf{u}_1, \ldots, \mathbf{u}_L)$, let $\mathbf{y}_L = \Gamma_{\text{S6}}(\mathbf{u})$ be the last output of the system. For any $1 \leq s \leq d$, we have that
    \[
        y_L^{(s)} = \mathbf{u}_L^\top \mathbf{C} \mathbf{x}_L = \mathbf{u}^\top_L \mathbf{C} \sum_{j=1}^L \left(\prod_{i=j+1}^L \mathbf{A}(\mathbf{u}_i)\right) \mathbf{B}(\mathbf{u}_j) u_j^{(s)},
    \]
    where
    \begin{align*}
        \mathbf{A}(\mathbf{u}_i) &= \text{exp}\left(\Delta(\mathbf{u}_i) \mathbf{A}\right) = \text{exp}\left(\text{softplus}(\mathbf{w}^\top \mathbf{u}_i) \mathbf{A}\right), \\
        \mathbf{B}(\mathbf{u}_i) &= \mathbf{A}^{-1} \left(\text{exp}\left(\text{softplus}(\mathbf{w}^\top \mathbf{u}_i) \mathbf{A}\right) - \mathbf{I}\right) \mathbf{B} \mathbf{u}_i.
    \end{align*}
    Note that we have changed the notations so that $\mathbf{A}(\mathbf{u}_i)$ is exactly the matrix $\overline{\mathbf{A}}_i$ and $\mathbf{B}(\mathbf{u}_i)$ is exactly the matrix $\overline{\mathbf{B}}_i$ in~\cref{eq.S6}. This helps us better keep track of the dependency of every term on the input $\mathbf{u}$. In addition, it is easy to see that $b^{(i)}$ does not play a role in the asymptotic behaviors when $c \rightarrow \pm\infty$. Hence, we set them to zero. For any $s$, we have
    \[
        (\mathbf{J}_r)_{s,s} = \frac{\partial y_L^{(s)}}{\partial u_r^{(s)}} = \mathbf{u}_L^\top \mathbf{C} \sum_{j=1}^L \underbrace{\frac{\partial}{\partial u_r^{(s)}} \left[\left(\prod_{i=j+1}^L \mathbf{A}(\mathbf{u}_i)\right) \mathbf{B}(\mathbf{u}_j) u^{(s)}_j\right]}_{\mathbf{d}^{(r,s)}_j},
    \]
    where
    \begin{equation*}
    \mathbf{d}^{(r,s)}_j = 
    \begin{cases}
       \left[\frac{\partial}{\partial u_r^{(s)}} \mathbf{A}(\mathbf{u}_r)\right] \left(\prod_{\substack{i=j+1, i \neq r}}^L \mathbf{A}(\mathbf{u}_i)\right) \mathbf{B}(\mathbf{u}_j) u^{(s)}_j &,\; j < r, \\
        \left(\prod_{i=r+1}^L \mathbf{A}(\mathbf{u}_i)\right)\! \left(\mathbf{B}(\mathbf{u}_r) \!+\! {u}_r^{(s)} \frac{\partial}{\partial u_r^{(s)}}\mathbf{B}(\mathbf{u}_r)\right) &,\; j = r, \\
       \boldsymbol{0} &,\; j > r. \\
    \end{cases}
    \end{equation*}
    For any $s \neq t$, we have
    \[
        (\mathbf{J}_r)_{s,t} = \frac{\partial y_L^{(t)}}{\partial u_r^{(s)}} = \mathbf{u}_L \mathbf{C} \sum_{j=1}^L \underbrace{\frac{\partial}{\partial u_r^{(s)}} \left[\left(\prod_{i=j+1}^L \mathbf{A}(\mathbf{u}_i)\right) \mathbf{B}(\mathbf{u}_j) u^{(t)}_j\right]}_{\mathbf{f}^{(r,s,t)}_j},
    \]
    where
    \begin{equation*}
    \mathbf{f}^{(r,s,t)}_j = 
    \begin{cases}
       \left[\frac{\partial}{\partial u_r^{(s)}} \mathbf{A}(\mathbf{u}_r)\right] \left(\prod_{\substack{i=j+1, i \neq r}}^L \mathbf{A}(\mathbf{u}_i)\right) \mathbf{B}(\mathbf{u}_j) u^{(t)}_j &,\; j < r, \\
        \left(\prod_{i=r+1}^L \mathbf{A}(\mathbf{u}_i)\right) {u}_r^{(t)} \frac{\partial}{\partial u_r^{(s)}}\mathbf{B}(\mathbf{u}_r) &,\; j = r, \\
       \boldsymbol{0} &,\; j > r. \\
    \end{cases}
    \end{equation*}
    We now break the proof into two cases.

    \noindent \textbf{Case I: $c \rightarrow \infty$.} We discuss the cases when $r < k$, $r = k$, and $r > k$ separately.
    \begin{itemize}
        \item When $r < k$, it is easy to check that $\|\mathbf{A}(\mathbf{u}_r)\|$ decays exponentially when $c \rightarrow \infty$. Hence, we have $\|\mathbf{d}_j^{(r,s)}\|$ and $\|\mathbf{f}_j^{(r,s,t)}\|$ decay exponentially as $c \rightarrow \infty$ for every $s$, $t$, and $j$. That is, the Jacobian norm $\|\mathbf{J}_r\|_F$ decays exponentially as $c \rightarrow \infty$.

        \item When $r = k$, the norm of the term
        \[
            \left(\prod_{i=r+1}^L \mathbf{A}(\mathbf{u}_i)\right) {u}_r^{(t)} \frac{\partial}{\partial u_r^{(s)}}\mathbf{B}(\mathbf{u}_r)
        \]
        grows like $\alpha c + o(c)$ as $c \rightarrow \infty$ for all $\mathbf{A}$ and a.e.\ choice of and $\mathbf{B}$, where $\alpha \neq 0$ is a constant. That is, $\mathbf{d}_k^{(k,s)}$ and $\mathbf{f}_k^{(k,s,t)}$ grow like a linear factor plus a $o(c)$ term as $c \rightarrow \infty$. Hence, by~\Cref{lem.nocancellation}, for a.e.\ choice of $\mathbf{C}$, we have that $\|\mathbf{J}_k\|_F = \Theta(c)$ as $c \rightarrow \infty$.

        \item When $r > k$, it is straightforward to check that $\|\mathbf{d}_k^{(r,s)}\| = \alpha(s) c^2 + o(c^2)$ for some constants $\alpha(s) \neq 0$ and a.e.\ choice of $\mathbf{B}$ and $\|\mathbf{f}_k^{(r,s,t)}\| = \beta(s,t) c^2 + o(c^2)$ for some constants $\beta(s,t) \neq 0$ and a.e.\ choice of $\mathbf{B}$. Hence, by~\Cref{lem.nocancellation}, for a.e.\ choice of $\mathbf{C}$, we have that $\|\mathbf{J}_r\|_F = \Theta(c^2)$ as $c \rightarrow \infty$.
    \end{itemize}

    \noindent The first part of the theorem is proved by combining the three statements above.

    \noindent \textbf{Case II: $c \rightarrow -\infty$.} We discuss the cases when $r < k$, $r = k$, and $r > k$ separately.
    \begin{itemize}
        \item When $r < k$, the only thing that changes in the expression of $\mathbf{J}_r$ when $c \rightarrow -\infty$ is the matrix $\mathbf{A}(\mathbf{u}_k)$, in which case it converges to $\mathbf{I}$. Hence, we have that $\|\mathbf{J}_r\|_F = \Theta(1)$.

        \item When $r = k$, we have that
        \[
            \left\|\frac{\partial}{\partial u_k^{(s)}} \mathbf{A}(u_k)\right\| = \mathcal{O}(|c|^{-p})
        \]
        for any $p > 0$ and $s$. Moreover, we also have that
        \[
            \|\mathbf{B}(\mathbf{u}_k)\| = \mathcal{O}(|c|^{-p}), \qquad \left\|\frac{\partial}{\partial u_k^{(s)}} \mathbf{B}(u_k)\right\| = \mathcal{O}(|c|^{-p}),
        \]
        for any $p > 0$ and $s$. That means we have $\|\mathbf{d}_j^{(k,s)}\|$ and $\|\mathbf{f}_j^{(k,s,t)}\|$ decay exponentially for any choice of $s, t$, and $j$. Hence, we have that $\|\mathbf{J}_k\|_F = \Theta(|c|^{-p})$ for any $p > 0$.

        \item When $r > k$, we note that while $\|\mathbf{u}_k\|$ grows linearly, the norm of the vector $\|\mathbf{B}(\mathbf{u}_k)\|$ decays exponentially. Therefore, we have that $\|\mathbf{B}(\mathbf{u}_k)u_k^{(s)}\|$ decays exponentially for every $s$. This shows that for a.e.\ $\mathbf{B}$, the norms $\|\mathbf{d}_j^{(r,s)}\|$ and $\|\mathbf{f}^{(r,s,t)}_j\|$ converge to constants for all $s$, $t$, and $j$. By~\Cref{lem.nocancellation}, we have $\|\mathbf{J}_r\|_F = \Theta(1)$ for a.e.\ choice of $\mathbf{C}$.
    \end{itemize}

    \noindent The second part of the theorem is proved by combining the three statements above. The claim about an $\Gamma_{\text{S4D}}$ system is obvious since the system is linear.
\end{proof}

Interestingly, from the proof of~\Cref{thm.inductivebias}, we see that when $c \rightarrow \infty$, the reason why $S_{\text{S6},k}$ is large for $k > k_0$ is that $\mathbf{u}_k$ plays an important role in determining the matrix $\overline{\mathbf{B}}_{k_0}$ that affects a large input $\mathbf{u}_{k_0}$. This is analogous to the function $f(x,y) = xy$. When $x$ is large and $y$ is small, the gradient $(\partial/\partial x)f$ is still small but $(\partial/\partial y)f$ is huge.

\subsection{Proof of~\Cref{thm.B2S6inductivebias}}

Once~\Cref{thm.inductivebias} is proved, the proof of~\Cref{thm.B2S6inductivebias} can be maintained easily from it.

\begin{proof}[Proof of~\Cref{thm.B2S6inductivebias}]
    For each $1 \leq r \leq L$, the Jacobian $\mathbf{J}_r$ is a block diagonal matrix $\mathbf{J}_r = \text{diag}({^1\mathbf{J}_r}, \ldots, {^h\mathbf{J}_r})$, where every ${^j\mathbf{J}_r}$ is a $p \times p$ matrix. Since the cases when $c$ is negative and when $c$ is positive are symmetric, without loss of generality, we assume that $c \rightarrow \infty$. We study the Jacobian norms when $r < k$, $r = k$, and $r > k$ separately. The following statements hold for a.e.\ model parameters.
    \begin{itemize}	
        \item When $r < k$, from the proof of~\Cref{thm.inductivebias}, we know that $\|^j\mathbf{J}_r\|$ decays exponentially when $^j\mathbf{w}^\top \, {^j\mathbf{u}} > 0$ and $\|^j\mathbf{J}_r\| = \Theta(1)$ when $^j\mathbf{w}^\top \, {^j\mathbf{u}} < 0$. Hence, by our assumption, we know that $\|\mathbf{J}_r\|_F = \Theta(1)$.
	
        \item When $r = k$, from the proof of~\Cref{thm.inductivebias}, we know that $\|^j\mathbf{J}_r\| = \Theta(c)$ as long as $^j\mathbf{w}^\top \, {^j\mathbf{u}} > 0$. Hence, we have $\|\mathbf{J}_r\|_F = \Theta(c)$.
	
        \item When $r > k$, from the proof of~\Cref{thm.inductivebias}, we know that $\|^j\mathbf{J}_r\| = \Theta(c^2)$ as long as $^j\mathbf{w}^\top \, {^j\mathbf{u}} > 0$. Hence, we have $\|\mathbf{J}_r\|_F = \Theta(c^2)$.
    \end{itemize}
    Combining the three cases, we proved the result.
\end{proof}

\section{Proof of the stability result}

We now bring in the last piece of technical details by proving the stability result in~\Cref{thm.stability}. The proof again relies on a tedious expansion of the gradients, which then gives us expressions that are intellectually interesting to analyze with basic probability theory.

\begin{proof}[Proof of~\Cref{thm.stability}]
    Since we assumed that $d = 1$, we drop all superscripts for channel indices. Given an input $\mathbf{u} = (u_1, \ldots, u_L)$ be an input and denote by $y_{\text{S4D}} = \Gamma_{\text{S4D}}(\mathbf{u})_L$ and $y_{\text{S6}} = \Gamma_{\text{S6}}(\mathbf{u})_L$ the outputs of an S4D system and an S6 system, respectively. Then, we have
    \begin{equation*}
    \begin{aligned}
        \frac{\partial y_{\text{S4D}}}{\partial b} &= \mathbf{C} \sum_{j=1}^L \underbrace{\frac{\partial}{\partial b} \left[\left(\prod_{i=j+1}^L \overline{\mathbf{A}}\right) \overline{\mathbf{B}} u_j\right]}_{\mathbf{r}_j},
    \end{aligned}
    \end{equation*}
    where
    \begin{equation*}
        \overline{\mathbf{A}} = \text{exp}(\text{exp}(b) \mathbf{A}), \qquad \overline{\mathbf{B}} = \mathbf{A}^{-1} (\overline{\mathbf{A}} - \mathbf{I}) \mathbf{B},
    \end{equation*}
    and
    \begin{equation*}
    \begin{aligned}
        \frac{\partial y_{\text{S6}}}{\partial w} &= u_L \mathbf{C} \sum_{j=1}^L \underbrace{\frac{\partial}{\partial w} \left[\left(\prod_{i=j+1}^L {\mathbf{A}}(u_i)\right) {\mathbf{B}}(u_j) u_j\right]}_{\mathbf{s}_j},\\
        \frac{\partial y_{\text{S6}}}{\partial b} &= u_L \mathbf{C} \sum_{j=1}^L \underbrace{\frac{\partial}{\partial b} \left[\left(\prod_{i=j+1}^L {\mathbf{A}}(u_i)\right) {\mathbf{B}}(u_j) u_j\right]}_{\mathbf{t}_j},
    \end{aligned}
    \end{equation*}
    where
    \begin{equation*}
        {\mathbf{A}}(u_j) = \tilde{\mathbf{A}} = \text{exp}(\text{softplus}(b) \mathbf{A}), \qquad {\mathbf{B}}(u_j) = \mathbf{A}^{-1} ({\mathbf{A}}(u_j) - \mathbf{I}) \mathbf{B} u_j.
    \end{equation*}
    Note that the matrix $\mathbf{A}(u_j)$ does not really depend on $u_j$ because we assumed that $w = 0$, but we keep the notation consistent with the proof of~\Cref{thm.inductivebias}. In particular, we have that
    \begin{align*}
        \frac{\partial}{\partial b} \overline{\mathbf{A}} = \overline{\mathbf{A}} \mathbf{A} \text{exp}(b), \qquad \frac{\partial}{\partial b} \overline{\mathbf{B}} = \mathbf{A}^{-1} \overline{\mathbf{A}} \mathbf{A} \text{exp}(b) \mathbf{B} = \overline{\mathbf{A}} \mathbf{B} \text{exp}(b).
    \end{align*}
    and
    \begin{align*}
        \frac{\partial}{\partial w} \mathbf{A}(u) &=  \frac{\text{exp}\left(\text{softplus}(b) \mathbf{A}\right) \mathbf{A} u}{1 + e^{-b}}, \qquad \frac{\partial}{\partial w} \mathbf{B}(u) = \frac{\text{exp}\left(\text{softplus}(b) \mathbf{A}\right) \mathbf{A} u}{1 + e^{-b}} \mathbf{A}^{-1} \mathbf{B} u, \\
        \frac{\partial}{\partial b} \mathbf{A}(u) &=  \frac{\text{exp}\left(\text{softplus}(b) \mathbf{A}\right) \mathbf{A}}{1 + e^{-b}}, \qquad \frac{\partial}{\partial b} \mathbf{B}(u) = \frac{\text{exp}\left(\text{softplus}(b) \mathbf{A}\right) \mathbf{A}}{1 + e^{-b}} \mathbf{A}^{-1} \mathbf{B} u.
    \end{align*}

    Using the product rule, we can compute $\mathbf{r}_j$, $\mathbf{s}_j$, and $\mathbf{t}_j$:
    \begin{align*}
    \mathbf{r}_j &= u_j \left(\overline{\mathbf{A}}^{L-j} \frac{\partial}{\partial b} \overline{\mathbf{B}} + \sum_{i=j+1}^L \left(\frac{\partial}{\partial b} \overline{\mathbf{A}}\right) \overline{\mathbf{A}}^{L-j-1} \overline{\mathbf{B}} \right) \\
    &= \overline{\mathbf{A}}^{L-j-1} u_j \left(\overline{\mathbf{A}}\, \frac{\partial}{\partial b} \overline{\mathbf{B}} + \left(\sum_{i=j+1}^L \frac{\partial}{\partial b} \overline{\mathbf{A}}\right) \overline{\mathbf{B}} \right) \\
    &= \overline{\mathbf{A}}^{L-j} u_j \,\text{exp}(b) \left(\overline{\mathbf{A}} + (L-j) (\overline{\mathbf{A}} - \mathbf{I})\right) \mathbf{B},
    \end{align*}
    and
    \begin{align*}
        \mathbf{s}_j &= u_j \left(\tilde{\mathbf{A}}^{L-j} \frac{\partial}{\partial w} \mathbf{B}(u_j) + \sum_{i=j+1}^L \frac{\partial}{\partial w} \mathbf{A}(u_i) \tilde{\mathbf{A}}^{L-j-1} \mathbf{B}(u_j) \right) \\
        &= \tilde{\mathbf{A}}^{L-j-1} \left(u_j \,\tilde{\mathbf{A}}\, \frac{\partial}{\partial w} \mathbf{B}(u_j) + u_j \left(\sum_{i=j+1}^L \frac{\partial}{\partial w} \mathbf{A}(u_i)\right) \mathbf{B}(u_j) \right) \\
        &= \tilde{\mathbf{A}}^{L-j-1} \frac{\text{exp}\left(\text{softplus}(b) \mathbf{A}\right)}{1 + e^{-b}} \left(\tilde{\mathbf{A}}\, \mathbf{B} u_j^3 + u_j^2 \left(\text{exp}\left(\text{softplus}(b) \mathbf{A}\right) - \mathbf{I}\right) \mathbf{B} \sum_{i=j+1}^L u_i \right), \\
        \mathbf{t}_j &= u_j \left(\tilde{\mathbf{A}}^{L-j} \frac{\partial}{\partial b} \mathbf{B}(u_j) + \sum_{i=j+1}^L \frac{\partial}{\partial b} \mathbf{A}(u_i) \tilde{\mathbf{A}}^{L-j-1} \mathbf{B}(u_j) \right) \\
        &= \tilde{\mathbf{A}}^{L-j-1} \left(u_j \,\tilde{\mathbf{A}}\, \frac{\partial}{\partial b} \mathbf{B}(u_j) + u_j \left(\sum_{i=j+1}^L \frac{\partial}{\partial b} \mathbf{A}(u_i)\right) \mathbf{B}(u_j) \right) \\
        &= \tilde{\mathbf{A}}^{L-j-1} \frac{\text{exp}\left(\text{softplus}(b) \mathbf{A}\right)}{1 + e^{-b}} \left(\tilde{\mathbf{A}}\, \mathbf{B} u_j^2 + u_j^2 (L-j) \left(\text{exp}\left(\text{softplus}(b) \mathbf{A}\right) - \mathbf{I}\right) \mathbf{B} \right).
    \end{align*}
    
    \noindent \textbf{Increasing input magnitudes.} From the formulas of $\mathbf{s}_j$, $\mathbf{t}_j$, and $\mathbf{r}_j$, it is straightforward that they are homogeneous in $\mathbf{u}$ with degrees $3$, $2$, and $1$, respectively. Fixing an $L$, as long as $\mathbb{E}_{\mathbf{u} \sim \mathbb{D}_L} |(\partial/\partial w) y_{\text{S6}}|$, $\mathbb{E}_{\mathbf{u} \sim \mathbb{D}_L} |(\partial/\partial b) y_{\text{S6}}|$, and $\mathbb{E}_{\mathbf{u} \sim \mathbb{D}_L} |(\partial/\partial b) y_{\text{S4D}}| \neq 0$, we have that
    \[
        \frac{\mathbb{E}_{\mathbf{u} \sim c\mathbb{D}_L} |(\partial/\partial w) y_{\text{S6}}|}{\mathbb{E}_{\mathbf{u} \sim c\mathbb{D}_L} |(\partial/\partial b) y_{\text{S4D}}|} = \mathcal{O}(c^3), \qquad \frac{\mathbb{E}_{\mathbf{u} \sim c\mathbb{D}_L} |(\partial/\partial b) y_{\text{S6}}|}{\mathbb{E}_{\mathbf{u} \sim c\mathbb{D}_L} |(\partial/\partial b) y_{\text{S4D}}|} = \mathcal{O}(c^2), \qquad c \rightarrow \infty.
    \]
    This proves the first part of the theorem.

    \noindent \textbf{Increasing sequence length.} The proof of the theorem when $L \rightarrow \infty$ is more involved. For clarity, we break it into three parts.
    \begin{itemize}[leftmargin=*]
        \item \textbf{The gradient of $\boldsymbol{y}_{\textbf{S4D}}$.} We first consider the case when $n = 1$. We will show that $\mathbb{E}_{\mathbf{u} \sim \mathbb{D}_L} |(\partial/\partial b) y_{\text{S4D}}| = \text{exp}(b(L))\mathcal{O}(\sqrt{L})$ as $L \rightarrow \infty$. Since $\mathbf{A}$ is diagonal, when $n \geq 1$, $y_{\text{S4D}}$ can be calculated by summing up the outputs from $n$ independent LTI systems, all with a $1$-dimensional state space. This obviously shows that $\mathbb{E}_{\mathbf{u} \sim \mathbb{D}_L} |(\partial/\partial b) y_{\text{S4D}}| = \text{exp}(b(L)) \mathcal{O}(\sqrt{L})$ when $n \geq 1$. To prove the case when $n = 1$, we use Jensen's inequality and obtain that
        \begin{equation*}\label{eq.rjfirst}
            \begin{aligned}
                \mathbb{E}\left[\left|\sum_{j=1}^L \overline{\mathbf{A}}^{L-j} \mathbf{B} u_j\right|\right] &\leq \sqrt{\mathbb{E}\left[\left|\sum_{j=1}^L \overline{\mathbf{A}}^{L-j} \mathbf{B} u_j\right|^2\right]} \\
                &= \sqrt{\mathbb{E}\left[\sum_{j=1}^L \overline{\mathbf{A}}^{L-j} \mathbf{B} u_j\right]^2 \!\!+\! \text{Var}\left[\sum_{j=1}^L \overline{\mathbf{A}}^{L-j} \mathbf{B} u_j\right]} \\
                &= \sqrt{0 + \sum_{j=1}^L \text{Var}(v_j) + \sum_{1 \leq i < j \leq L} \text{Cov}(v_i,v_j)} = \mathcal{O}(\sqrt{L}),
            \end{aligned}
        \end{equation*}
    where $v_j = \overline{\mathbf{A}}^{L-j}\mathbf{B} u_j$. This shows that
    \begin{align*}
        &\mathbb{E}_{\mathbf{u} \sim \mathbb{D}_L} \left[\left|\frac{\partial}{\partial b} y_{\text{S4D}}\right|\right] = \mathbb{E}_{\mathbf{u} \sim \mathbb{D}_L} \left[\left|\mathbf{C} \sum_{j=1}^L \mathbf{r}_j\right|\right] \\
        &\quad\leq \text{exp}(b(L))\! \left(\!\mathbb{E}_{\mathbf{u} \sim \mathbb{D}_L}\!\! \left[\left|\mathbf{C} \sum_{j=1}^L \overline{\mathbf{A}}^{L-j+1} \mathbf{B} u_j\right|\right] \!\!+\! \mathbb{E}_{\mathbf{u} \sim \mathbb{D}_L}\!\! \left[\left|\mathbf{C} \sum_{j=1}^L \overline{\mathbf{A}}^{L-j} (L\!-\!j)(\overline{\mathbf{A}} \!-\! \mathbf{I}) \mathbf{B} u_j\right|\right]\!\right).
    \end{align*}
    Note that we just proved that the first expectation is $\mathcal{O}(\sqrt{L})$ as $L \rightarrow \infty$. By letting $b(L) \rightarrow -\infty$ fast enough, $\overline{\mathbf{A}} - \mathbf{I}$ vanishes exponentially, so we have the second term is also $\mathcal{O}(\sqrt{L})$. This proves that
    \begin{equation}\label{eq.expectS4D}
        \mathbb{E}_{\mathbf{u} \sim \mathbb{D}_L} |(\partial/\partial b) y_{\text{S4D}}| = \text{exp}(b(L)) \mathcal{O}(\sqrt{L}), \qquad L \rightarrow \infty.
    \end{equation}

    \item \textbf{The gradient of $\boldsymbol{y}_{\textbf{S6}}$ when $n = 1$.} 
    When $n = 1$, the matrices $\tilde{\mathbf{A}}, \mathbf{B}$, and $\mathbf{C}$ are all scalars. Without loss of generality, assume that $\mathbf{B}, \mathbf{C} > 0$. Similar to the previous case, if $b(L) \rightarrow -\infty$ as $L \rightarrow \infty$, the matrix $\text{exp}(\text{softplus}(b(L))\mathbf{A}) - \mathbf{I}$ vanishes. That is, since $u_L = 1$, we have
    \begin{align*}
        &\mathbb{E}_{\mathbf{u} \sim \mathbb{D}_L} \left[\left|\frac{\partial}{\partial b} y_{\text{S6}}\right|\right] = \mathbb{E}_{\mathbf{u} \sim \mathbb{D}_L} \left[\left|\mathbf{C} \sum_{j=1}^L \mathbf{t}_j\right|\right] \sim \text{exp}(b(L)) \mathbb{E}_{\mathbf{u} \sim \mathbb{D}_L}\!\! \left[\left|\mathbf{C} \sum_{j=1}^L \tilde{\mathbf{A}}^{L-j} \mathbf{B} u_j^2\right|\right] \\
        &\qquad= \text{exp}(b(L)) \sum_{j=1}^L \mathbf{C}\tilde{\mathbf{A}}^{L-j} \mathbf{B} \; \mathbb{E}\left[u_j^2\right] \geq \text{exp}(b(L)) \sum_{j=1}^L \mathbf{C}\tilde{\mathbf{A}}^{L-j} \mathbf{B} \; \text{Var}\left[u_j\right] = \text{exp}(b(L)) \Theta(L),
    \end{align*}
    provided that $b(L)$ decays fast enough so that $\tilde{\mathbf{A}}^L = \Theta(1)$ as $L \rightarrow \infty$.

    \item \textbf{The gradient of $\boldsymbol{y}_{\textbf{S6}}$ when $n > 1$.} Now, suppose $n > 1$. The output $y_{\text{S6}}$ can be written as the sum of $n$ terms: $y_{\text{S6}} = y_1 + \cdots + y_n$, where $y_j$ is the output of an LTI system with $n = 1$~\cite{yu2025tuning}. By the previous argument, we know that for every $1 \leq j \leq n$,
    \[
        \mathbb{E}_{\mathbf{u} \sim \mathbb{D}_L}\left[\frac{\partial}{\partial b} \frac{y_j}{\mathbf{C}_j}\right] = \pm\text{exp}(b(L)) \Theta(L),
    \]
    where the expectation can be either positive or negative, depending on the sign of $\mathbf{B}_j\mathbf{C}_j$. That is, we know that
    \[
        \left.\mathbb{E}_{\mathbf{u} \sim \mathbb{D}_L}\left[\left|\frac{\partial}{\partial b} \frac{y_j}{\mathbf{C}_j}\right|\right]\middle/ (\text{exp}(b(L)) L)\right.
    \]
    is bounded between two positive numbers or between two negative numbers as $L$ is sufficiently large. We still want to apply~\Cref{lem.nocancellation} to control cancellation effects, but the main issue is that we do not have that this term converges to a number. However, since the metric space $\R^n$ is complete, i.e., boundedness implies subsequent convergence, we know that there exists a subsequence $L_1, L_2, \ldots$ such that
    \[
        \mathbb{E}_{\mathbf{u} \sim \mathbb{D}_{L_k}}\left[\frac{\partial}{\partial b} \frac{y_j}{\mathbf{C}_j}\right] \rightarrow \alpha_j \text{exp}(b(L_k)) L_k \qquad \text{as} \qquad k \rightarrow \infty, \qquad 1 \leq j \leq n,
    \]
    where $\alpha_j$ is a nonzero constant for all $1 \leq j \leq n$. Hence, by~\Cref{lem.nocancellation}, we have for a.e.\ $\mathbf{C}$ that
    \begin{equation*}
        \mathbb{E}_{\mathbf{u} \sim \mathbb{D}_{L_k}} \left[\frac{\partial y_{\text{S6}}}{\partial b}\right] = \mathbb{E}_{\mathbf{u} \sim \mathbb{D}_{L_k}} \left[\sum_{j=1}^n \frac{\partial y_{j}}{\partial b}\right] = \alpha\, \text{exp}(b(L_k)) L_k, \qquad \alpha \neq 0.
    \end{equation*}
    Hence, we have
    \begin{equation}\label{eq.expectS6}
        \mathbb{E}_{\mathbf{u} \sim \mathbb{D}_{L_k}} \left[\left|\frac{\partial y_{\text{S6}}}{\partial b}\right|\right] = \alpha\, \text{exp}(b(L_k)) \Theta(L_k).
    \end{equation}
    \end{itemize}
    Combining~\cref{eq.expectS4D} and~(\ref{eq.expectS6}), we prove the theorem.
\end{proof}

\section{Supplementary experiments on the stability}\label{app:expstable}

In this section, we present an experiment that empirically shows a reduced learning rate on $\Delta$-related parameters help stabilize the training of an S6 model. We want to learn a ground-truth function that takes in a univariate sequential input and returns a fixed linear combination of them: $G(u_1, \ldots, u_L) = \sum_{j=1}^L \theta_j u_j$, where $\theta_1, \ldots, \theta_L$ are fixed parameters. We show the loss curves in~\Cref{fig:traningstable}, with different models (i.e., S4D versus S6) and different learning rate assignments (i.e., whether or not $\Delta$ is learned). We see that the S4D model is very stable during training, with its loss going down smoothly. This is not the case for the S6 model, where the loss goes up and down and even restarts. We find that by freezing the $\Delta t$ parameters $\mathbf{w}$ and $b$, the model trains much more robustly.

\begin{figure}[!htb]
    \centering
    \vspace{0.3cm}
    
    \begin{minipage}{0.32\textwidth}
        \begin{overpic}[width = 1\textwidth]{./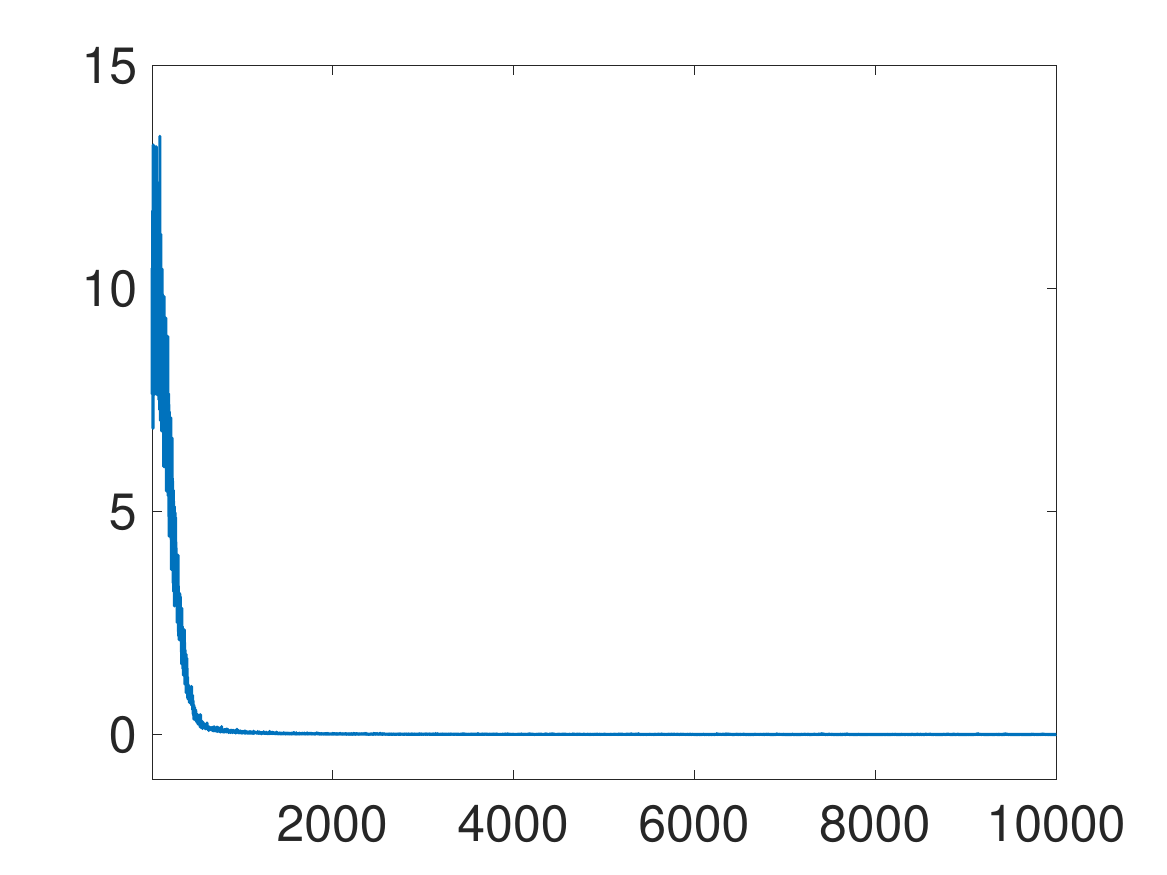}
        \put(0,35){\scriptsize \rotatebox{90}{loss}}
        \put(48,-3){\scriptsize Step}
        \put(45,75){\textbf{S4D}}
        \end{overpic}
    \end{minipage}
    \begin{minipage}{0.32\textwidth}
        \begin{overpic}[width = 1\textwidth]{./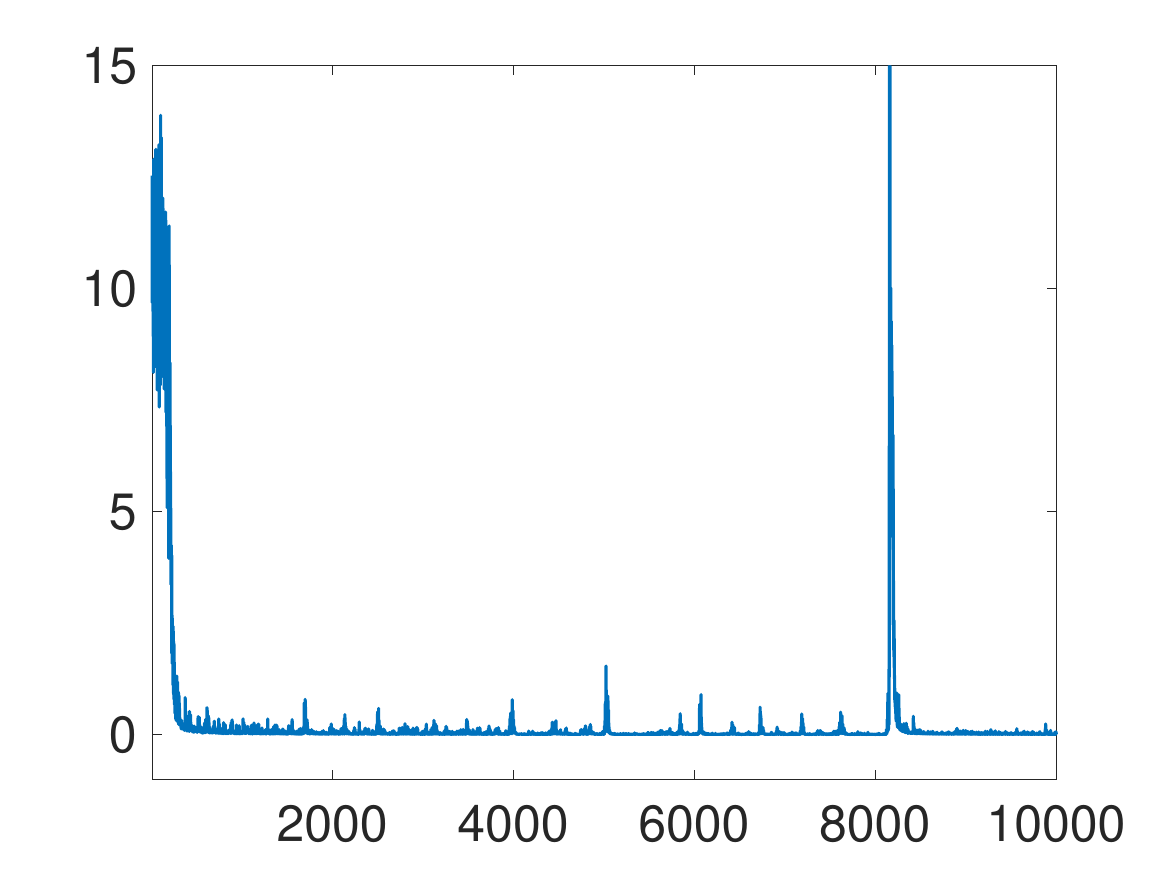}
        \put(0,35){\scriptsize \rotatebox{90}{loss}}
        \put(48,-3){\scriptsize Step}
        \put(47,75){\textbf{S6}}
        \end{overpic}
    \end{minipage}
    \begin{minipage}{0.32\textwidth}
        \begin{overpic}[width = 1\textwidth]{./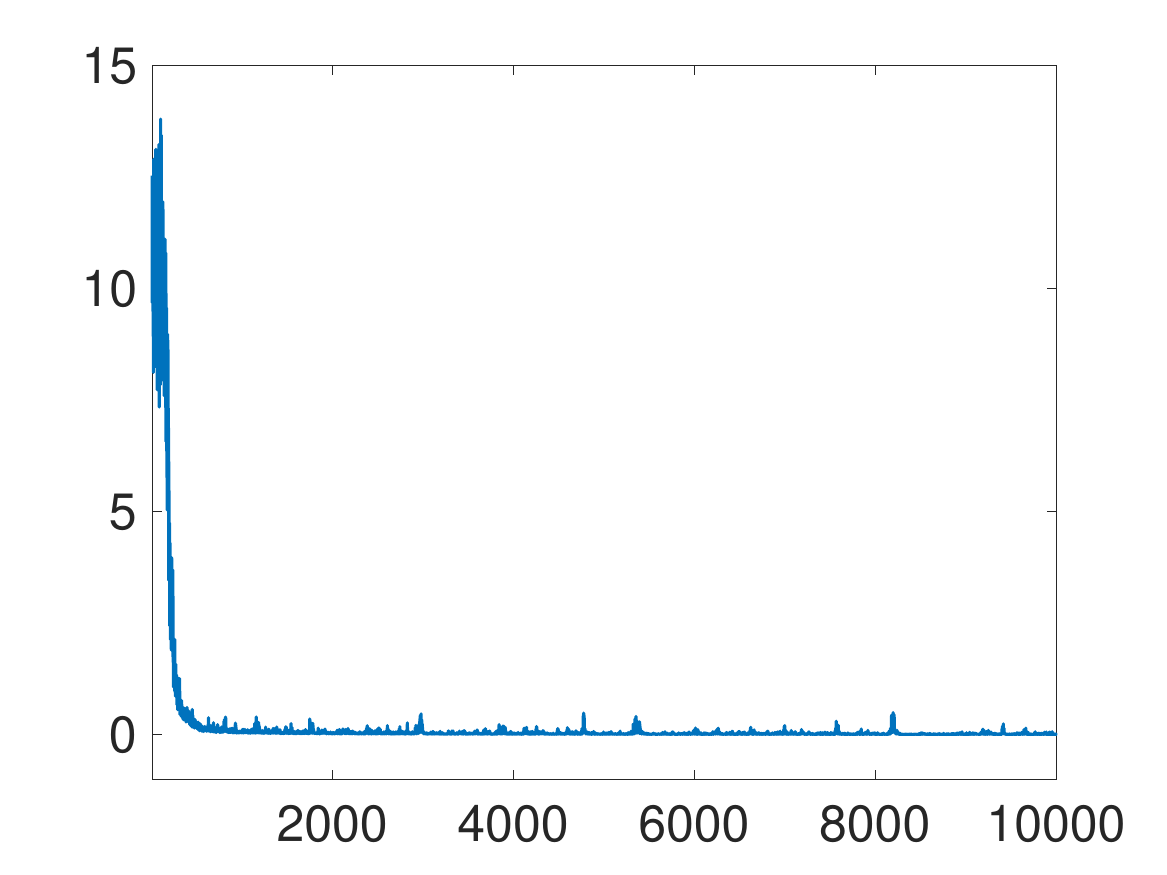}
        \put(0,35){\scriptsize \rotatebox{90}{loss}}
        \put(48,-3){\scriptsize Step}
        \put(32,75){\textbf{S6 (fixed $\boldsymbol{\Delta}$)}}
        \end{overpic}
    \end{minipage}

    \vspace{\baselineskip}
    
    \caption{The root-mean-squared loss $\|G(\mathbf{u}) - \tilde{G}(\mathbf{u})\|_2$ between the true output $G(\mathbf{u})$ and the model prediction $\tilde{G(\mathbf{u})}$. The first two models are trained with a learning rate of $0.001$ on the $\Delta$ parameters, whereas the last model is trained with no training of the $\Delta$ parameters.}
    \label{fig:traningstable}
\end{figure}

\section{Details of experiments}\label{app:expdetails}

In this section, we show the details of the experiments in~\cref{sec:experiments}. Before we provide the detailed configurations, we provide an extended table for LRA results of more variants of state-space models (see~\Cref{tab:LRAextended}). While many of these models focus on improving certain aspects of the recurrent unit, we do not incorporate and test them in our $\bbmamba$ model.

\begin{table}[!htb]
    \centering
    \caption{An extended list of SSM accuracies on the Long-Range Arena benchmark. An entry is left blank if no result is found. All but the last three models are not input-selective.}

    \begin{minipage}{1\textwidth}
    \resizebox{1\textwidth}{!}{
        \begin{tabular}{c c c c c c c c}
         \specialrule{.1em}{.05em}{.05em}
         {Model} & \!\!\!\texttt{ListOps}\!\!\! & \texttt{Text} & \!\!\!\texttt{Retrieval}\!\!\! & \texttt{Image} & \!\!\!\!\texttt{Pathfinder}\!\!\!\! & \texttt{Path-X} & \!\!\!Avg. \\
         \hline
         S4~\cite{gu2022efficiently} & $59.60$ & $86.82$ & $90.90$ & ${{88.65}}$ & $94.20$ & $96.35$ & $86.09$  \\
         S4D~\cite{gu2022parameterization} & ${{60.47}}$ & $86.18$ & $89.46$ & $88.19$ & $93.06$ & $91.95$ & $84.89$ \\
         DSS~\cite{gupta2022diagonal} & $57.60$ & $76.60$ & $87.60$ & $85.80$ & $84.10$ & $85.00$ & $79.45$  \\
         LRU~\cite{orvieto2023resurrecting} & $89.00$  & $60.20$ &  $89.40$  & $89.90$ & $95.10$ & $94.20$ & $86.30$ \\
         S4++~\cite{qi2024s4} & $57.30$ & $86.28$ & $84.82$ & $82.91$ & $80.24$ & - & - \\
         HOPE-SSM~\cite{yu2024there} & ${{62.60}}$ & ${{89.83}}$ & ${{91.80}}$ & ${{88.68}}$ & ${{95.73}}$ & ${{98.45}}$ & ${{87.85}}$ \\
         S4D-FT~\cite{yu2025tuning} & ${62.75}$ & ${89.76}$ & ${92.45}$ & ${90.89}$ & ${95.89}$ & ${97.84}$ & ${88.26}$ \\
         Reg. S4D~\cite{liu2024generalization} & ${61.48}$ & $88.19$ & $91.25$ & $88.12$ & $94.93$ & $95.63$ & $86.60$ \\
         Liquid S4~\cite{hasani2022liquid} & ${62.75}$ & $89.02$ & $91.20$ & ${89.50}$ & $94.80$ & $96.66$ & $87.32$ \\
         RTF SSM~\cite{parnichkun2024state} & $61.59$ & $89.72$ & $92.04$ & $90.51$ & $96.11$ & $96.32$ & $87.71$ \\
         Spectral SSM~\cite{agarwal2023spectral} & $60.33$ & ${89.60}$ & $90.00$ & - & ${95.60}$ & $90.10$ & - \\
         Spiking SSM~\cite{shen2025spikingssms} & $60.23$ & $80.41$ & $88.77$ & $88.21$ & $93.51$ & $94.82$ & $84.33$ \\
         S5~\cite{smith2023simplified} & $62.15$ & $89.31$ & ${91.40}$ & $88.00$ & $95.33$ & ${98.58}$ & ${87.46}$ \\
         \hline
         S6 (Mamba)~\cite{gu2023mamba} & $38.02$ & $82.98$ & $72.14$ & $69.82$ & $69.26$ & $67.32$ & $66.59$ \\
         S7~\cite{soydan2024s7} & ${63.77}$ & $87.22$ & ${91.80}$ & $61.14$ & $65.62$ & $61.50$ & $71.82$ \\
         $\bbmamba$ (ours) & ${63.85}$ & ${88.32}$ & ${91.44}$ & ${88.81}$ & ${95.93}$ & ${97.90}$ & ${87.71}$ \\
         \specialrule{.1em}{.05em}{.05em}
    \end{tabular}}
    \end{minipage}
    \label{tab:LRAextended}
\end{table}

In this paper, all models are trained with one or more NVIDIA L40 GPUs with 48GB of memory. For the ablation study, we use a $4$-layer model and we increase the number of layers for the full experiments on the LRA benchmark. We provide the details of the model and training hyperparameters used for training each LRA task in~\Cref{tab:B2S6}. For all experiments, we set $h = 8$ so that $p = \text{\#Features}/8$. Notably, this hyperparameter $h$ is not carefully fine-tuned but rather picked randomly. Note that our model for training the \texttt{Path-X} tasks is smaller than the corresponding S4D model.

\begin{table}[!htb]
    \centering
    \caption{Configurations of our $\bbmamba$ model on the LRA benchmark, where LR, BS, and WD stand for learning rate, batch size, and weight decay, respectively.}
    \begin{tabular}{c c c c c c c c c c c}
         \specialrule{.1em}{.05em}{.05em}
         {Task} & {Depth} & {\#Features} & {Norm} & {Prenorm} & {LR} & {BS} & {Epochs} & {WD} \\
         \hline
         {ListOps} & {8} & {128} & {BN} & {False} & {0.008} & {32} & {120} & {0.03} \\
         {Text} & {6} & {256} & {BN} & {True} & {0.01} & {16} & {40} & {0.05}  \\
         {Retrieval} & {6} & {128} & {BN} & {True} & {0.004} & {60} & {200} & {0.03}  \\
         {Image} & {6} & {512} & {LN} & {False} & {0.01} & {48} & {500} & {0.05}  \\
         {Pathfinder} & {6} & {256} & {BN} & {True} & {0.004} & {48} & {250} & {0.03} \\
         {Path-X} & {6} & {128} & {BN} & {True} & {0.001} & {24} & {120} & {0.03}  \\
         \specialrule{.1em}{.05em}{.05em}
    \end{tabular}
    \label{tab:B2S6}
\end{table}

\section{Supplementary experiments on $\textbf{B}_{\boldsymbol{2}}\textbf{S}_{\boldsymbol{6}}$ as a language model}\label{app:expLLM}

We perform a preliminary experiment to train $\bbmamba$ as a language model. We use the dataset SlimPajama-6B~\cite{cerebras2023slimpajama}, a sampled version of the large dataset SlimPajama-627B, to make the training in a feasible time period on 6 Nvidia L40 GPUs.

\begin{figure}[!htb]
\centering
\caption{Perplexity over training steps for an S6, $\bbmamba$, and S4D model. The dataset is SlimPajama-6B, a sampled version of the SlimPajama-627B dataset.}
\label{fig:perplexity}
\begin{minipage}{0.5\textwidth}
    \hspace{0.6cm}\begin{overpic}[width=1\textwidth]{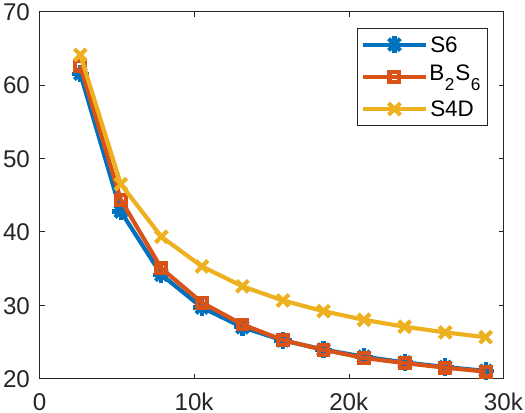}
        \put(47,-5){Steps}
        \put(-6,30){\rotatebox{90}{Perplexity}}
    \end{overpic}
\end{minipage}
\hfill
\begin{minipage}{0.44\textwidth}
    \begin{tabular}{cccc}
\toprule
\textbf{Steps} & \textbf{S6} & \textbf{$\text{B}_2\text{S}_6$} & \textbf{S4D} \\
\midrule
2620  & 61.5441 & 62.6133 & 64.1203 \\
5241  & 42.7838 & 44.3846 & 46.5118 \\
7862  & 34.1740 & 35.0440 & 39.3496 \\
10483 & 29.7074 & 30.3787 & 35.3028 \\
13104 & 27.0171 & 27.3891 & 32.5946 \\
15725 & 25.2265 & 25.2668 & 30.6639 \\
18346 & 23.9789 & 23.9370 & 29.2052 \\
20967 & 23.0044 & 22.8394 & 28.0314 \\
23588 & 22.2368 & 22.1322 & 27.0897 \\
26209 & 21.6068 & 21.5206 & 26.3221 \\
28830 & 21.0785 & 20.9773 & 25.6661 \\
\bottomrule
\end{tabular}
\end{minipage}
\end{figure}

We train a model with approximately $30$ layers, corresponding to approximately $250\text{M}$ parameters. The precise number of layers varies slightly with models to make the number of parameters roughly the same. We train three models for one epoch and report the perplexity statistics. We only vary the recurrent unit without changing the rest of the model architecture. That is, all models are based on the Mamba architecture proposed in~\cite[Fig.~3]{gu2023mamba}:
\[
    \mathbf{y} = \Gamma(\sigma(\text{Conv}(\text{Linear}(\mathbf{x})))) \circ \sigma(\text{Linear}(\mathbf{x})),
\]
where $\circ$ stands for the Hadamard product and we only change $\Gamma$ in three models to be $\Gamma_{\text{S4D}}, \Gamma_{\text{S6}}$, and $\Gamma_{\bbmamba}$, respectively. We find that introducing the bias term $\mathbf{B}_{\text{bias}}$ only makes the training of a $\bbmamba$ model $2.7\%$ slower than an S6 model of comparable size. Yet, the perplexity of $\bbmamba$ closely matches that of S6, showing the versatility of our model.

\end{document}